\def\eox{\unskip\kern 10pt{\unitlength1pt\linethickness{.4pt}$\diamondsuit${}}} %
\setlist{noitemsep}
\setlist{nolistsep}
\newtheorem{theorem}{Theorem}[section]	%
\newtheorem{problem}[theorem]{Problem}
\newtheorem{algorithmN}[theorem]{Algorithm}
\newtheorem{questionW}{Question}
{%
  {\textit{Question \arabic{questionW} #1}
  \stepcounter{questionW}}%
}%
{%
}
\newtheorem{resultW}{Result}
\renewenvironment{resultW}[1][]%
{%
\par\noindent
  {\textbf{Result \arabic{resultW} #1}
  \stepcounter{resultW}}%
  \begin{itshape}%
}%
{\end{itshape}
}
\DeclareRobustCommand*\uell{\mathpalette\@uell\relax}
\newcommand*\@uell[2]{
  \setbox0=\hbox{$#1\ell$}
  \setbox1=\hbox{\rotatebox{10}{$#1\ell$}}
  \dimen0=\wd0 \advance\dimen0 by -\wd1 \divide\dimen0 by 2
  \mathord{\lower 0.1ex \hbox{\kern\dimen0\unhbox1\kern\dimen0}}
}
\newcommand{\hide}[1]{} %
\newcommand{\smallsectionskip}{5mm}
\newcommand{\smallsection}[1]{\vspace{\smallsectionskip}\noindent\textbf{#1.}}	%
\newcommand{\introparagraph}[1]{\textbf{#1.}}        %
\newcommand{\specificref}[2]{\hyperref[#2]{#1~\ref*{#2}}}			%
\renewcommand{\vec}[1]{\boldsymbol{\mathbf{#1}}}
\newcommand{\transpose}{{\mkern-1.5mu\mathsf{T}}}
\newcommand{\sql}[1]{\textup{\textsf{\small#1}}}
\newcommand{\define}{\triangleq}							%
\newcommand{\half}[1]{\mathbf{\tilde{\text{$#1$}}}}
\newcommand{\thicksymbol}      [1]{\hat{#1} }			%
\renewcommand{\vec}[1]{\boldsymbol{\mathbf{#1}}}
\newcommand{\NB}{\mathrm{NB}}		%
\renewcommand{\H} { {H} }					%
\newcommand{\HVec} { \vec{\H} }				%
\newcommand{\HVecEst} { \thicksymbol{\vec{\H}} }
\newcommand{\VecEstEC}[1] { {\thicksymbol{\vec{#1}}_{\NB}} }
\renewcommand{\P} { {P} }					%
\newcommand{\PEst} { \thicksymbol{\P} }					%
\newcommand{\PVec} { \vec{P} }				%
\newcommand{\PVecEst} { \thicksymbol{\PVec} }
\newcommand{\PVecEstEC} {\VecEstEC{\P} }
\newcommand{\HCenterVec}  { \half{  {\HVec}  }   }	%
\newcommand{\h}  	{ {h}  }
\newcommand{\hVec} 	{ \vec{\h}  }
\renewcommand{\O}  	{ {\mathcal{O}}  }
\newcommand{\W}  	{ {W}  }
\newcommand{\WEC} { {\W_{\NB}} }				%
\newcommand{\WVec}  { \vec{\W}   }				%
\newcommand{\WVecEC} { \vec{\W}_{\NB} }			%
\newcommand{\e}  	{ {x}  }
\newcommand{\eVec}  { \vec{\e}   }
\newcommand{\eCenterVec}  { \half{\vec{\e}}   }	%
\newcommand{\E}  	{ {X}  }
\newcommand{\EVec}  { \vec{\E}   }
\newcommand{\ECenterVec}  { \half{\vec{\E}}   }	%
\newcommand{\m}  	{ {m}  }
\newcommand{\mVec}  { \vec{\m}   }
\newcommand{\mCenterVec}  { \half{\mVec}   }
\renewcommand{\b}  	{ {f}  }
\newcommand{\bVec}  { \vec{\b}   }
\newcommand{\bCenterVec}  { \half{\bVec}   }
\newcommand{\B}  	{ {F}  }
\newcommand{\BVec}  { \vec{\B}   }
\newcommand{\BCenterVec}  { \half{\BVec}   }
\newcommand{\R}  { \mathrm{ \mathbb{R}}}
\newcommand{\tr}  { \mathrm{tr}}		%
\DeclareMathOperator*{\argmin}{argmin}
\newcommand{\dist}{\textup{\textsf{dist}}\xspace} 	%
\begin{document}

\title[Factorized Graph Representations for Semi-Supervised Learning from Sparse Data]
	{Factorized Graph Representations for \\Semi-Supervised Learning from Sparse Data}

\author{Krishna Kumar P.}
\affiliation{%
  \institution{IIT Madras}
}

\author{Paul Langton}
\affiliation{%
  \institution{Northeastern University}
}

\author{Wolfgang Gatterbauer}
\affiliation{%
  \institution{Northeastern University}
}

\begin{abstract}

Node classification is an important problem in graph data management. 
It is commonly solved by various \emph{label propagation} methods 
that work iteratively starting from a few labeled seed nodes.
For graphs with arbitrary compatibilities between classes, 
these methods crucially depend on knowing the \emph{compatibility matrix}
that must be provided by either domain experts or heuristics.
Can we instead directly estimate the correct compatibilities from a sparsely labeled graph in a principled and scalable way? 
We answer this question affirmatively and suggest a method called \emph{distant compatibility estimation} 
that works even on extremely sparsely labeled graphs (e.g., 1 in 10,000 nodes is labeled)
in a fraction of the time it later takes to label the remaining nodes.
Our approach first creates multiple 
\emph{factorized graph representations}
(with size independent of the graph)
and then performs estimation on these smaller graph sketches. 
We refer to \emph{algebraic amplification} as the more general idea of 
leveraging algebraic properties of an algorithm's update equations to amplify sparse signals.
We show that our estimator is by orders of magnitude faster than an alternative approach
and that the end-to-end classification accuracy is comparable to using gold standard compatibilities.
This makes it a cheap pre-processing step for any existing label propagation method 
and removes the current dependence on heuristics.

\end{abstract}

\maketitle

\section{Introduction}\label{sec:introduction}

\begin{figure}[t]
\centering
\begin{subfigure}[t]{.4\linewidth}
	\centering
	\includegraphics[scale=0.35]{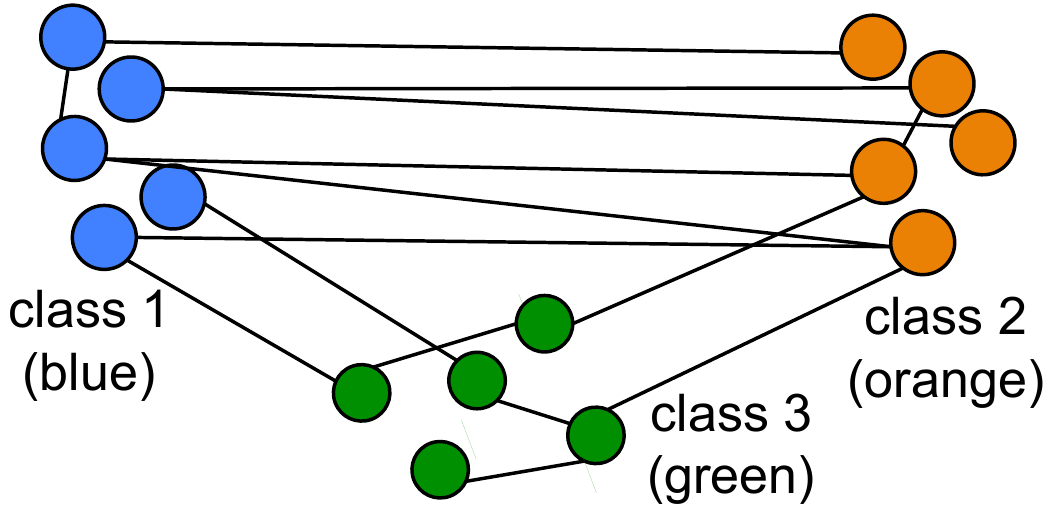}
	\caption{Unobserved truth}	
	\label{Fig_IntroBlockModel_a}
\end{subfigure}
\hspace{6mm}
\begin{subfigure}[t]{.44\linewidth}
	\centering
	\includegraphics[scale=0.38]{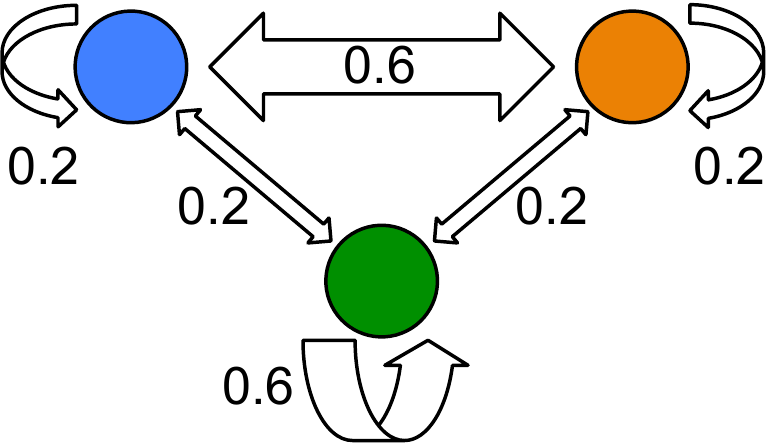}
	\caption{\small Class compatibilities $\HVec$}	
	\label{Fig_IntroBlockModel_c}
\end{subfigure}

\begin{subfigure}[t]{.44\linewidth}
	\centering
	\includegraphics[scale=0.35]{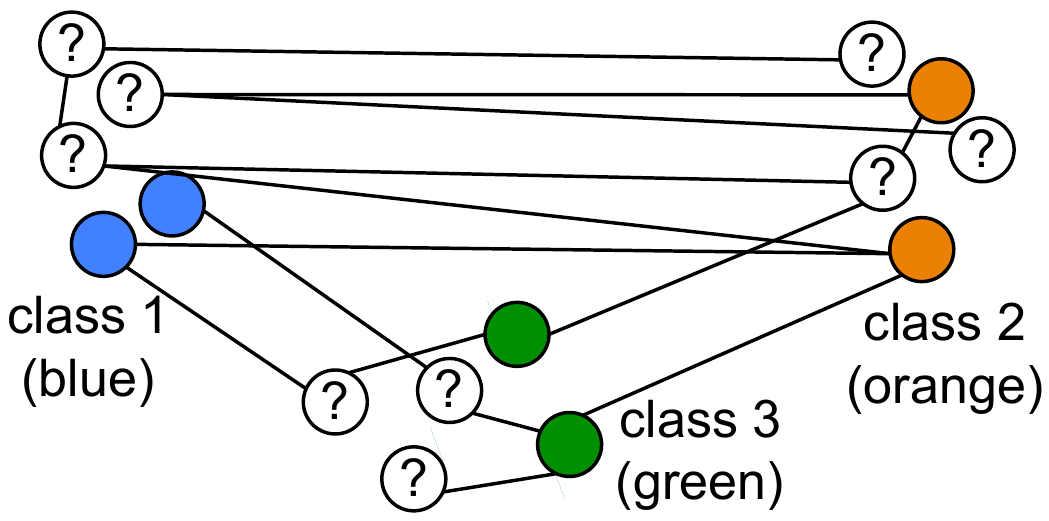}
    \caption{\small Partially labeled graph}	
	\label{Fig_IntroBlockModel_b}
\end{subfigure}
\hspace{6mm}
\begin{subfigure}[t]{.42\linewidth}
	\centering
	\includegraphics[scale=0.35]{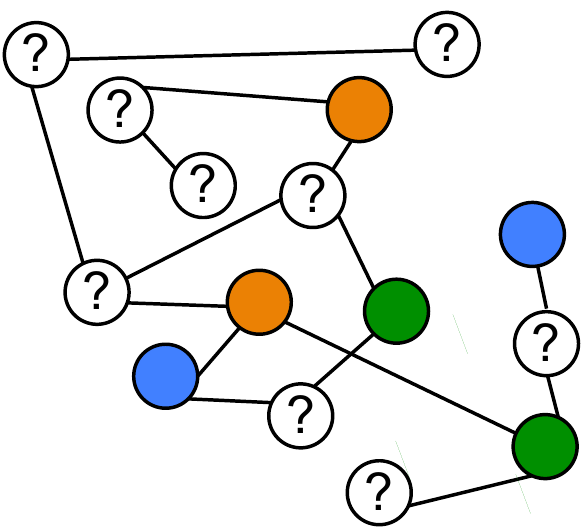}
	\caption{What we actually see}	
	\label{Fig_IntroBlockModel_d}
\end{subfigure}

\caption{(a, b): Graphs are formed based on relative compatibilities between
    classes of nodes.
(c, d): 
We have access to only a few labels $n_\ell \ll n$ and want to classify the remaining nodes
\emph{without} knowing the compatibilities between classes.
}
\label{Fig_IntroBlockModel}
\end{figure}

Node classification (or label prediction) \cite{DBLP:books/sp/social2011/BhagatCM11}  
is an integral 
component of graph data management. 
In a broadly applicable scenario, we are given a large graph 
with edges that reflect affinities between their adjoining nodes
and a small fraction of labeled nodes.  
Most 
graph-based semi-supervised learning (SSL) methods attempt to infer the labels 
of the remaining nodes 
by assuming similarity of neighboring labels. 
For example, people with similar political affiliations are more likely to follow each other on social networks.
This problem is well-studied, and 
solutions are often variations of random
walks that are fast and sufficiently accurate.
However, at other times
opposites attract or complement each other (also 
called \emph{heterophily} or disassortative mixing)
\cite{Koller:2009:PGM:1795555}.
For example,
predators might form a functional group in a biological food
web, not because they interact with each other, but because they eat similar prey~\cite{DBLP:dblp_conf/kdd/MooreYZRL11},
groups of proteins that serve a certain purpose often don't interact wich each other
but rather with complementary protein~\cite{2016:Bhowmick:TKDE},
and in some social networks pairs of nodes are
more likely connected if they are from different classes 
(e.g., members on the social network ``Pokec''~\cite{pokec} being more likely to interact with the opposite gender than the same one).

In more complicated scenarios, such as online auction fraud, 
fraudsters are more likely linked to
accomplices, and we
have a mix of
homophily and heterophily between multiple classes of nodes~\cite{DBLP:conf/www/PanditCWF07}.

\begin{example}[Email]
Consider a corporate email network with three
different classes of users. Class 1, the marketing people, often email 
class 2, the engineers (and v.v.), whereas users of class 3, the C-Level
Executives, tend to email amongst themselves (\cref{Fig_IntroBlockModel_c}).
Assume we are given the labels (classes) of very few nodes (\cref{Fig_IntroBlockModel_b}).
How can we infer the labels of the remaining nodes?
\end{example}

For these scenarios, standard random walks do not work
as they cannot capture such arbitrary compatibilities.
Early works addressing this problem propose belief propagation (BP) for labeling graphs,
since BP can express arbitrary compatibilities between labels.
However, the update equations of BP
are more complicated than standard label propagation algorithms, 
have well-known convergence problems
~\cite[Sec.\ 22]{Murphy:2012uq}, 
and are difficult to use in practice~\cite{DBLP:journals/aim/SenNBGGE08}.
A number of recent papers found ways to
circumvent the convergence problems of BP by linearizing the update equations
\cite{DonohoMA:2009:PNAS, Eswaran:2017:ZBP:3055540.3055554, 
    DBLP:conf/aaai/Gatterbauer17, DBLP:journals/pvldb/GatterbauerGKF15,
DBLP:conf/pkdd/KoutraKKCPF11, Krzakala24122013},
and thus transforming the update equations of BP into an efficient matrix
formulation.
The resulting updates are similar to random walks 
but propagate messages ``modulated'' with \emph{relative class compatibilities}.

A big challenge for deploying this family of algorithms is knowing the appropriate compatibility matrix $\HVec$, 
where each entry $\H_{cij}$ captures the relative affinity between neighboring nodes of labels $i$ and $j$.
Finding appropriate compatibilities was identified as a challenging open problem~\cite{DBLP:conf/kdd/McGlohonBASF09},
and the current state of the art is to have them given by domain experts
or by ad-hoc and rarely justified heuristics.

\begin{figure}[t]
\centering
\includegraphics[scale=0.45]{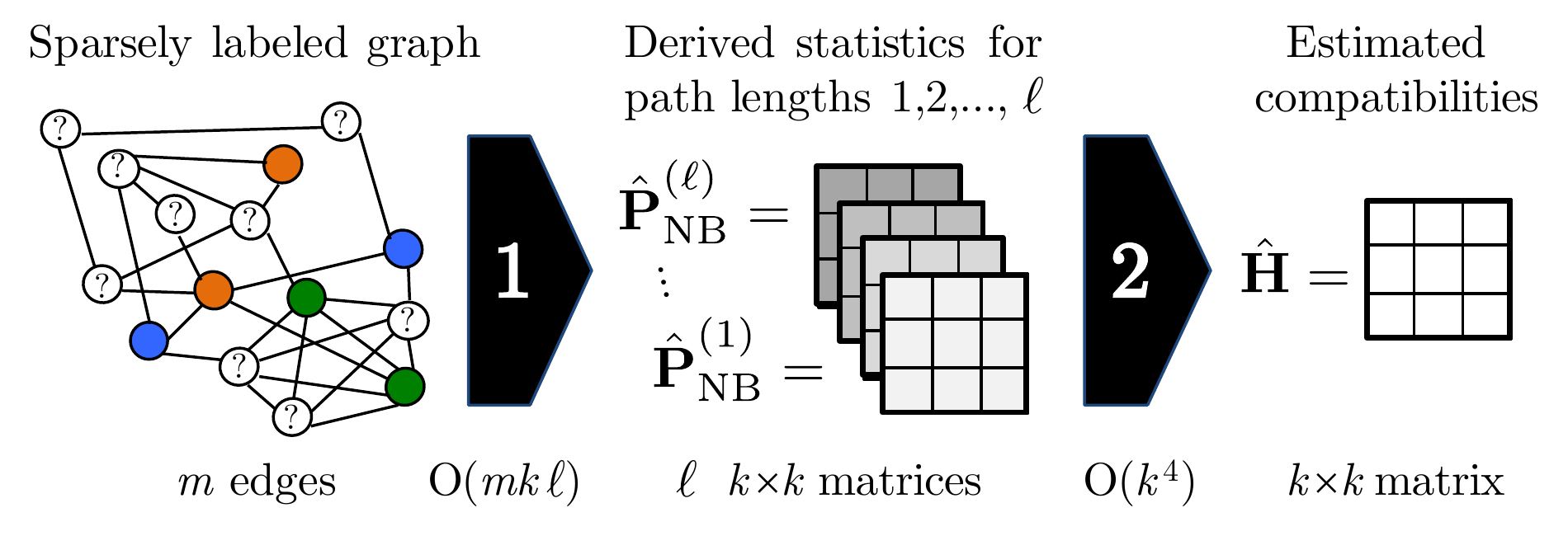}
\caption{Our approach for {compatibility estimation} proceeds in two steps:
(1) an efficient \emph{graph summarization} that creates 
sketches 
in linear time of the number of edges $m$ and
classes $k$
, see \cref{sec:efficientCalculation}; and 
(2) an \emph{optimization} step which is 
\emph{independent} 
of the size of the graph, 
see
\cref{sec:DCE}.
}
\label{Fig_TwoSteps}
\end{figure}

\introparagraph{Our contribution}
We propose an approach that \emph{does not need any prior domain knowledge of compatibilities.}
Instead, we estimate the compatibilities on the \emph{same} 
graph for which we later infer the labels of unlabeled nodes (\cref{Fig_IntroBlockModel_d}).
We achieve this by deriving an estimation method that
($i$) can handle extreme label scarcity,
($ii$)  
is orders of magnitude faster than textbook estimation methods,
and ($iii$) results in labeling accuracy that is
nearly indistinguishable from the actual gold standard (GS) compatibilities.
In other words, we suggest an end-to-end solution for a difficult \emph{within-network classification},
where compatibilities are not given to us:

\begin{problem}[Automatic Node Classification]\label{def:formalCEP}
    Given an undirected graph $G(V,$ $E)$ with a set of labeled nodes $V_\ell \subset V$
    from $k$ classes and unknown compatibilities between classes.
	Classify the remaining nodes, $v \in V \setminus V_l$.
\end{problem}

\begin{figure}[t]
\begin{subfigure}[t]{.50\linewidth}
	\includegraphics[scale=0.4]{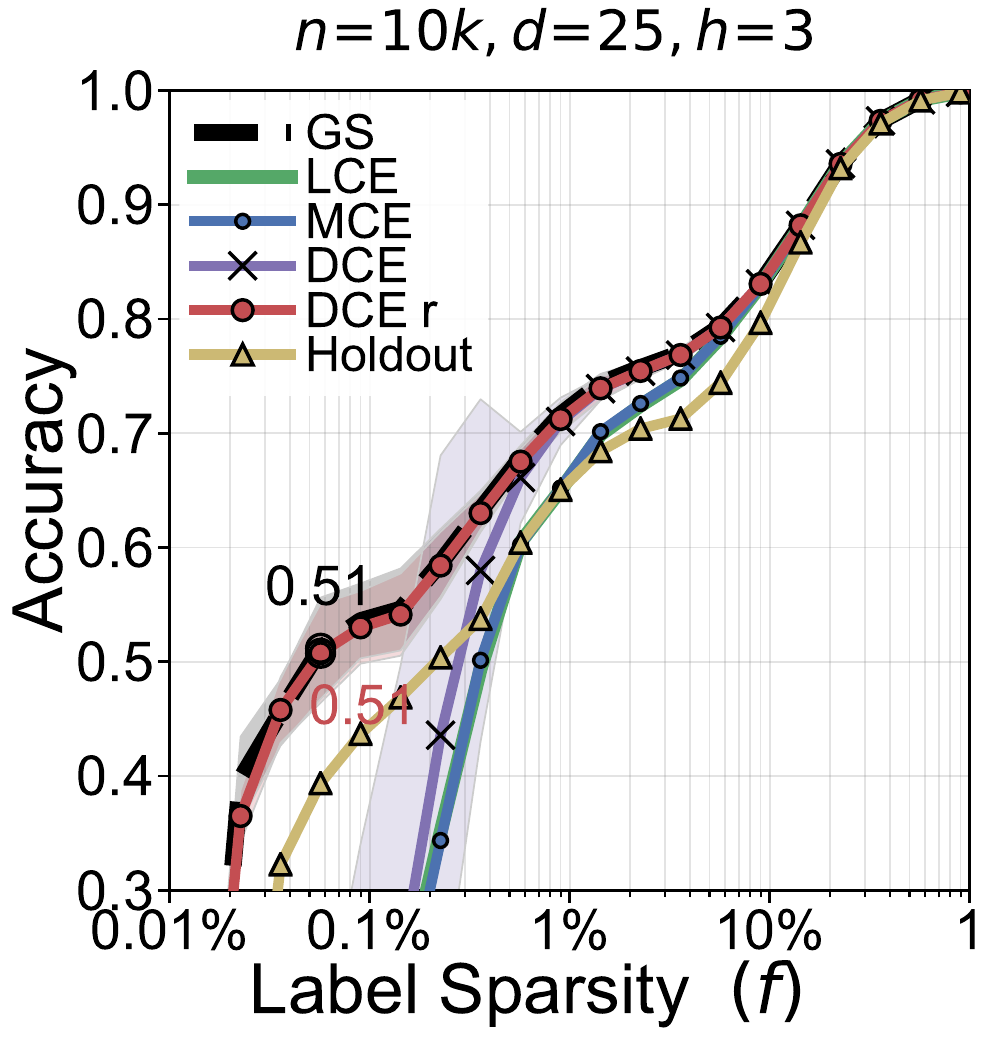}
	\caption{Estimation \& propagation}
	\label{Fig_End-to-End_accuracy108}	
\end{subfigure}
\hspace{0mm}
\begin{subfigure}[t]{.45\linewidth}
	\includegraphics[scale=0.39]{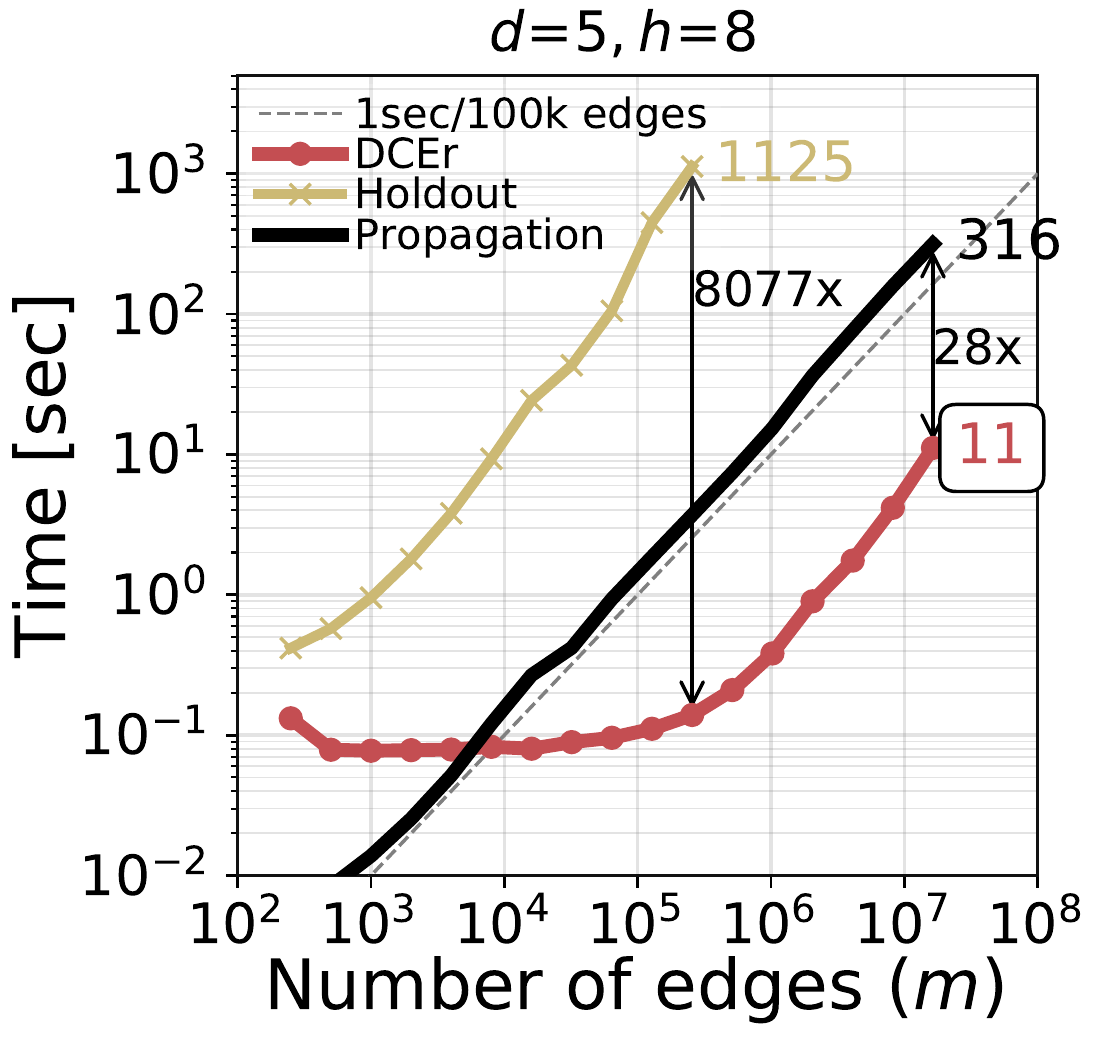}
	\caption{Scalability
	}\label{Fig_Timing_3_overview}
\end{subfigure}
\caption{
(a): 
Our methods infer labels with similar accuracy as if we were given the gold standard compatibilities (GS): e.g., labeling accuracy of 0.51 in a graph with 10k nodes 
and only 8 labeled nodes with our best method \emph{distance compatibility estimation with restarts} (DCEr) in red as compared to the same accuracy with GS.
(b): 
The additional step of estimating compatibilities is fast: 
DCEr 
learns the compatibilities on a graph with 16.4m edges in 11 sec, which is 28 times faster than node labeling (316 sec) and 3-4 orders of magnitude faster than a baseline holdout method.
}
\label{fig:overviewResults}
\end{figure}

\introparagraph{Summary of approach}
We develop a novel, consistent, and scalable graph summarization
that allows us to split compatibility estimation into two steps (\cref{Fig_TwoSteps}):
(1) First calculate
the number of paths of various lengths $\ell$ between nodes for all pairs of classes.
While the number of paths is exponential in the path's length, we develop 
efficient factorization and sparse linear algebra methods that calculate them 
in time linear of the graph size and path length.
\Cref{ex:efficientCalculation} illustrates evaluating $10^{14}$ such paths
in less than $0.1$ sec.
(2) Second use 
a combination of these compact graph statistics to estimate $\HVec$.
We derive
an explicit formula for the gradient of the loss function that allows us to
find the global optimum quickly.
Importantly, this second optimization step takes time \emph{independent
of the graph size} (!).
In other words, \emph{we reduce compatibility estimation over a sparsely labeled graph 
into an optimization problem 
over a set of small factorized graph representations}
with an explicit gradient.  
Our approach has only one relatively insensitive hyperparameter.

Our approach is orders of magnitude faster than 
common
parameter estimation methods
that rely on log-likelihood estimations and variants of expectation maximization.
For example, recent work~\cite{DBLP:dblp_conf/kdd/MooreYZRL11} develops methods that can learn compatibilities
on graphs with hundreds of nodes in minutes time.
In contrast, we learn compatibilities in graphs with $16.4$ million edges in $11$ sec
using an off-the-shelf optimizer
and running on a single CPU
(see \cref{Fig_Timing_3_overview}).
In a graph with $10k$ nodes and only $8$ labeled nodes,
we estimate $\HVec$ such that
the subsequent labeling has equivalent accuracy (0.51)
to a labeling using the
actual compatibilities 
(GS in \cref{Fig_End-to-End_accuracy108}).
We are not aware of any reasonably fast approach that can learn the compatibilities from the sparsely labeled graph.
All recent work in the area uses simple heuristics to specify the compatibilities:
e.g., \cite{DBLP:journals/pvldb/GatterbauerGKF15,DBLP:conf/aaai/Gatterbauer17,Eswaran:2017:ZBP:3055540.3055554,DBLP:conf/pkdd/KoutraKKCPF11}.

\introparagraph{Outline}
We start by giving a precise meaning to \emph{compatibility matrices} by showing that 
prior label propagation methods based on \emph{linearized belief propagation} essentially
propagate \emph{frequency distributions} of labels between neighbors (\cref{sec:propFrequency})
and deriving the corresponding energy minimization framework (\cref{sec:energyFunctionsForLinBP}).
Based on this formulation, we derive two convex optimization methods 
for parameter estimation (\cref{sec:learningHeterophily}): 
\emph{Linear compatibility estimation} (LCE) and \emph{myopic compatibility estimation}
(MCE).
We then develop a novel consistent estimator which counts 
``$\ell$-distance non-backtracking paths:''
\emph{distant compatibility estimation} (DCE).
Its objective function is not convex anymore, but well-behaved enough so we
can find the global optimum in practice with a few repeated restarts:
\emph{DCE with restarts} (DCEr).
\Cref{sec:experiments}
gives an extensive comparative study on synthetic and real-world data.

\section{Formal setup and related work}\label{sec:background}

We first define essential concepts and review 
related work on
semi-supervised node labeling.
We denote vectors ($\eVec$) and matrices ($\EVec$) in bold. 
We use row-wise ($\EVec_{i:}$), column-wise ($\EVec_{:j}$), and element-wise ($\E_{ij}$) matrix indexing,
e.g., $\EVec_{i:}$ is the $i$-th row vector of $\EVec$ (and thus bold), whereas $\E_{ij}$ is a single number (and thus not bold).
\subsection{Semi-Supervised Learning (SSL)}\label{sec:sslh}

Traditional graph-based Semi-Supervised Learning (SSL)
predict the labels of unlabeled nodes under the assumption of \emph{homophily} or \emph{smoothness}.
Intuitively, a label distribution is ``smooth''
if a label ``x'' on a node makes the same label 
on a neighboring node more likely,
i.e.\ nodes of the same class tend to link to each other.
The various methods differ mainly in their definitions of ``smoothness'' between classes 
of neighboring nodes
\cite{BengioDL:2006:LabelPropagation, 
DBLP:conf/icml/LuG03,
SubramanyaTalukdar:2014:SSL,
DBLP:journals/jmlr/WuS07, 
Zhu05,
ZhuKLG:2006:GraphKernels}.\footnote{Notice a possible naming ambiguity: ``\emph{learning}'' in SSL stands for \emph{classifying} unlabeled nodes (usually assuming homophily).
In our setup, we first need to ``\emph{learn}'' (or \emph{estimate}) the compatibility parameters, before we can classify the remaining nodes with a variant of label propagation.}

Common to all approaches, we are given a graph
${G} = ({V},{E})$ with $n = |{V}|$, $m = |{E}|$, and real edge
weights given by $w : {E} \rightarrow \R$.  
The weight $w(e)$ of an edge $e$
indicates the similarity of the incident nodes, and a missing edge corresponds
to zero similarity. 
These weights are captured in the symmetric weighted {adjacency matrix}
$\WVec \in \R^{n \times n}$  defined by
$\W_{ij} \define w(e)$ if $e = (i, j) \in {E}$, and 0 otherwise.
Each node is a member of exactly one of $k$ classes
which have increased edge incidence between members of the same class.
Given a set of labeled nodes $V_L \subset V$ with labels in $[k]$, predict the labels of the remaining
unlabeled nodes $V \setminus V_L$.

Most binary SSL algorithms \cite{10.1109/TKDE.2007.190672,
DBLP:conf/nips/ZhouBLWS03, DBLP:conf/icml/ZhuGL03} specify the existing labels
by a vector $\eVec = [\e_1, \ldots, \e_n]^\transpose$ with $\e_i \in L = \{+1,
-1 \}$ for $i \leq n_L$ and $\e_i=0$ for $n_L+1 \leq i \leq n$.
Then a real-valued 
``{labeling function}'' assigns a value $f_i$ with $1 \leq i \leq n$
to each data point $i$. The final classification is performed as
$\sql{sign}(f_i)$ for all unlabeled nodes.
This binary approach can be extended to multi-class classification~\cite{10.1109/TKDE.2007.190672}
by assigning a vector to each node.
Each entry represents the belief that a node is in the corresponding class.
Each of the classes is propagated \emph{separately} and, at convergence,
compared at each node with a ``one-versus-all'' approach~\cite{Bishop:2006ee}.
SSL methods differ in how they compute $f_i$ for each node $i$ and commonly
justify their formalism from a ``regularization framework''; i.e., by motivating
a different energy function
and proving that the derived labeling function $f$ is the solution to the
objective of minimizing the energy function.

\introparagraph{Contrast to our work}
The labeling problem we are interested
in this work is a generalization of standard SSL.
In contrast to the commonly used smoothness assumption (i.e.~labels of the same class tend to connect more often),
we are interested in the more general scenario of arbitrary compatibilities between classes.

\subsection{Belief Propagation (BP)}\label{sec:bp}

{Belief Propagation} (BP) \cite{DBLP:journals/aim/SenNBGGE08} 
is a widely used method for 
reasoning in networked data.
In contrast to typical semi-supervised label propagation, BP handles the case of arbitrary compatibilities.
By using the symbol $\odot$ for the 
component-wise multiplication
and writing $\mVec_{ji}$ for the $k$-dimensional ``{message}'' that node $j$ sends to node $i$, 
the BP update equations \cite{Murphy:2012uq, DBLP:journals/neco/Weiss00}
can be written
as:
\begin{align*}
\bVec_i \leftarrow Z_i^{-1} 
		\eVec_i \odot \! \bigodot_{j \in N(i)} \mVec_{ji}
\hspace{10mm}
\mVec_{ij} 	\leftarrow  
		\HVec \Big(\eVec_i \odot \!\!\!\bigodot_{v \in N(i) \setminus j}\!\! \mVec_{v i} \Big) 	
\end{align*}
Here, $Z_i$ is a normalizer that makes the elements of $\bVec_i$ sum to 1, and
each entry $\H_{ce}$ in $\HVec$ is a proportional ``compatibility'' that indicates the relative
influence of a node of class $c$ on its neighbor of class $e$.  Thus, an
outgoing message from a node is computed by multiplying all incoming messages 
(except the one sent previously by the recipient) and then multiplying the outgoing
message by the edge potential $\HVec$. 

Unlike other SSL methods, BP has no simple linear algebra formulation and has
well-known convergence problems.  Despite extensive research on the convergence
of BP \cite{DBLP:conf/uai/ElidanMK06,DBLP:journals/tit/MooijK07} exact criteria
for convergence are not known~\cite[Sec.\ 22]{Murphy:2012uq} and practical use
of BP is non-trivial~\cite{DBLP:journals/aim/SenNBGGE08}.

\introparagraph{Contrast to our work}
Parameter estimation in graphical models quickly becomes intractable for even
moderately-sized datasets~\cite{DBLP:dblp_conf/kdd/MooreYZRL11}.
We transform the original problem into a linear algebra formulation that allows us to leverage existing highly optimized tools 
and that can learn compatibilities often \emph{faster} than the time needed to label the graph.

\subsection{Linearized Belief Propagation}\label{sec:LinBPexplained}
Recent work
\cite{DBLP:conf/pkdd/KoutraKKCPF11,DBLP:journals/pvldb/GatterbauerGKF15} 
suggested to ``linearize'' BP 
and showed that 
the original update equations of BP can be reasonably approximated by linearized equations
\begin{align*}
\bCenterVec_i
	\leftarrow  \eCenterVec_i + \frac{1}{k} \cdot \!\! \sum_{j \in N(i)} \mCenterVec_{ji}
\hspace{10mm}
\mCenterVec_{ij} 
	\leftarrow  
		\HCenterVec 
		\Big(\bCenterVec_i \underbracket[0.5pt]{- \frac{1}{k} \mCenterVec_{ji}}_{\textrm{EC}} \Big)
\end{align*}
by ``centering'' the belief vectors $\eVec$, $\bVec$ and the potential matrix around $\frac{1}{k}$.
If a vector $\eVec$ is centered around $c$, 
then the residual vector around $c$ is defined as $\eCenterVec = [ \e_1 - c, \e_2 - c , \ldots ]$ and centered around 0.
This centering allowed the authors to rewrite BP in terms of the residuals.
The ``echo cancellation'' (EC) term is a result of the condition ``$v \in N(i) \setminus j$'' in the original BP equations. 

While the EC term has a strong theoretical justification for BP
and
appears to have been kept
for the correspondence between BP and LinBP,
in our extensive simulations, we have not identified any parameter regime 
where including the EC term for propagation consistently gives better results.
It rather slows down evaluation and complicates determining the convergence threshold
(the top eigenvalue becomes negative slightly above the convergence threshold).
We will thus explicitly ignore the EC term
in the remainder of this paper.
The update equations of LinBP then become:
\begin{align}		
\BCenterVec
	& \leftarrow 
		\ECenterVec 
		+ \WVec \BCenterVec \HCenterVec
	\hspace{10mm}
	(\mathrm{LinBP})
	\hspace{-15mm}			
	\label{eq:updateEquation} 
\end{align}
The advantage of LinBP over standard BP is that the linearized formulation allows provable convergence guarantees.
The process was shown to converge iff the following condition holds on the spectral radii\footnote{The spectral radius of a matrix is the largest absolute value among its eigenvalues.} $\rho$ 
of $\HCenterVec$ and $\WVec$:
\begin{align}
	\rho
	\big(
	\HCenterVec
	\big)
		 <1/\rho\big(\WVec)
	\label{eq:guaranteeLinBP}
\end{align}
Follow-up work \cite{DBLP:conf/aaai/Gatterbauer17} generalizes LinBP to the most general case of arbitrary pairwise Markov networks 
which include heterogeneous graphs with fixed number of node and edge types.
Independently, ZooBP~\cite{Eswaran:2017:ZBP:3055540.3055554} follows a similar motivation, 
yet restricts itself to the mathematically less challenging special case of constant row-sum symmetric potentials.

\introparagraph{Contrast to our work}
Our work focuses on homogeneous graphs and makes a complementary contribution to that of label propagation:
that of learning compatibilities from a sparsely labeled graph \emph{in a fraction of the time} it takes to propagate the labels
(\cref{sec:learningHeterophily}).
This avoids the reliance on domain experts or heuristics and results in an end-to-end
estimation and propagation method. 
An earlier version of the ideas in our paper was made available on arXiv as \cite{SSLH:Gatterbauer:arxiv}.

\subsection{Iterative Classification Methods}
\label{sec:IterativeClassification}

\introparagraph{Random walks with Restarts (RWR)}
Random walk-based methods make the assumption that the graph is homophilous; i.e., that instances belonging to the same class tend to link to each other or have higher edge weight between them~\cite{DBLP:conf/asunam/LinC10}.
 In general, given a graph $G = (V, E)$, random walk algorithms return as output a ranking vector $\bVec$ that results from iterating following equation until convergence:
 \begin{align}
 	\bVec \leftarrow \bar \alpha \vec u + \alpha \WVec^{\textrm{col}} \bVec
 	\label{eq:RWR}
 \end{align}
 Here, $\vec u$ is a normalized teleportation vector with
 $|\vec u| = |V|$ and $||\vec u||_1 = 1$,
 and $\WVec^{\textrm{col}}$ is column-normalized.
  Notice that above \cref{eq:RWR} can be interpreted as the probability of a random walk on $G$ arriving at node $i$,  with teleportation probability $\bar \alpha$ at every step to a node with distribution $\vec u$~\cite{DBLP:conf/asunam/LinC10}.
 Variants of this formulation are used by
 PageRank~\cite{PageRank1999},
 Personalized PageRank~\cite{haveliwala2003analytical, DBLP:conf/www/Chakrabarti07},
 Topic-sensitive PageRank~\cite{haveliwala2003topic},
 Random Walks with Restarts~\cite{Pan:2004:AMC:1014052.1014135},
 and MultiRankWalk~\cite{DBLP:conf/asunam/LinC10} which runs $k$ random walks in parallel (one for each class $c$).

To compare it with our setting, MultiRankWalk~\cite{DBLP:conf/asunam/LinC10}  and other forms of random walks can be stated as special cases of the more general formulation:
 (1) For each class $c \in [k]$:
 (a) set $\vec u_i \leftarrow 1$ if node $i$ is labeled $c$,
 (b) normalize $\vec u$ s.t.\ $||\vec u ||_1 = 1$.
 (2) Let $\vec U$ be the $n \times k$ matrix with column $i$ equal $\vec u_i$.
 (3) Then iterate until convergence:
 \begin{align*}
 	\BVec \leftarrow \bar \alpha \vec U + \alpha \WVec^{\textrm{col}}  \BVec \vec I_k
 \end{align*}
 (4) After convergence, label each node $i$ with the class $c$ with maximum value:
 $c = \arg \max_j\B_{ij}$.

\introparagraph{Other Iterative Classification Methods}
Goldberg et al.\ \cite{DBLP:journals/jmlr/GoldbergZW07} consider a concept of similarity and dissimilarity between nodes. 
This method only applies to classification tasks with 2 labels and cannot generalize to arbitrary compatibilities.
Bhagat et al.\ \cite{Bhagat2007}  look at commonalities across the direct neighbors of nodes in order to classify them. The paper calls this method leveraging ``co-citation regularity''  which is indeed equally expressive as heterophily. 
The experiments in that paper require at least 2\% labeled data (Figure 6e in \cite{Bhagat2007}), 
which is similar to the regimes up to which MCE works. 
Similarly, Peel~\cite{DBLP:conf/sdm/Peel17} suggests an interesting method that skips compatibility matrices by propagating information across nodes with common neighbors.
The method was tested on networks with 10\% labeled nodes and it will be interesting to investigate its performance in the sparse label regime.

\subsection{Recent neural network approaches}

Several recent papers propose neural network (NN) architectures for node labeling
(e.g., \cite{DBLP:conf/iclr/KipfW17,Hamilton:2017:IRL:3294771.3294869,DBLP:conf/aaai/MooreN17}).  
In contrast to our work (and all other work discussed in this section), 
those NN-based approaches
\emph{require additional features} from the nodes.
For example, in the case of Cora, \cite{DBLP:conf/iclr/KipfW17} also has access to node content 
(i.e.\ which words co-occur in a paper).
Having access to the actual text of a paper allows better classification than the network structure alone. 
As a result, \cite{DBLP:conf/iclr/KipfW17} can learn and use a large number of parameters in their trained NN. 

\introparagraph{Constrast to our work}
We classify the nodes based \emph{on the graph structure alone}, without access to additional features.
The result is that while \cite{DBLP:conf/iclr/KipfW17} achieves an accuracy of $81.5\%$ for $5.2\%$ labeled nodes in Cora
(see Section 5.1 and Section 6.1 of \cite{DBLP:conf/iclr/KipfW17}),
we still achieve $66\%$ accuracy based on the network alone and only $21$ estimated parameters.

\subsection{Non-backtracking paths (NB)}

\Cref{sec:NonBacktrackingPaths} derives estimators 
for the powers of 
$\HVec$
by counting labels over all \sloppy{``non-backtracking''} (NB)
paths in a partially labeled graph.  We prove our estimator to be
consistent
and thus with negligible bias for increasing
$n$.
Prior work already points to the advantages of NB paths for various different graph-related problems, 
such as graph sampling~\cite{Lee:2012:BRW:2254756.2254795}),
calculating eigenvector centrality~\cite{PhysRevE.90.052808}, 
increasing the detectability threshold for community detection~\cite{Krzakala24122013}, 
improving estimation of graphlet statistics~\cite{Chen:2016:GFE:3021924.3021940},
or measuring the distance between graphs~\cite{DBLP:journals/ans/TorresSE19}.
To make this work, all these papers replace the $n \times n$ 
adjacency matrix with a $2m \times 2m$ ``Hashimoto matrix''~\cite{hashimoto1989zeta} which represents the link structure of a graph 
in an augmented state space with $2m$ states (one state for each directed pair of nodes) and in the order of $O(m(d-1))$ non-zero entries, 
and then perform random walks.
The only work we know that uses NB paths without Hashimoto is \cite{AlonBLS:2007:Non-backtracking}, which
calculates the mixing rate of a NB random walk on a regular expanders (thus graphs with identical degree across all nodes).
That work does not generalize to graphs with varying degree distribution and does not allow an efficient path summarization.

\introparagraph{Contrast to our work}
Our approach does not perform random walks,
does not require an augmented state space (see \cref{prop:nonbacktracking}),
and still allows an efficient path summarization (see \cref{prop:complexitystatistics}).
To the best of our knowledge, ours is the first proposal to 
($i$) estimate compatibilities from NB paths
and ($ii$) propose an efficient calculation.

\subsection{Distant supervision}

The idea of \emph{distant supervision} is to
adapt existing ground truth data 
from a related yet different task
for providing \emph{additional lower quality labels} (also called \emph{weak labels}) to sparsely labeled data
\cite{DBLP:conf/acl/MintzBSJ09,DBLP:conf/acl/HoffmannZLZW11,DBLP:journals/pvldb/RatnerBEFWR17}.
The methods are thus also often referred to as \emph{weak supervision}.

\introparagraph{Contrast to our work}
In our setting, we are given no other outside ground truth data nor heuristic rules to label more data.
Instead, we leverage certain algebraic properties of an algorithm’s update equations to \emph{amplify sparse signals in the available data}.
We thus refer to the more general idea of our approach as \emph{algebraic amplification}.

\section{\mbox{Properties of Label Propagation}}\label{SEC:PROPERTIES}
This section makes novel observations about linearized versions of BP
that help us later find efficient ways to learn the compatibility matrix $\HVec$ from sparsely labeled graphs.

\subsection{Propagating Frequency Distributions}\label{sec:propFrequency}

Our first observation is that centering of prior beliefs $\EVec$ and 
compatibility matrix $\HVec$ in LinBP~\cref{eq:updateEquation} 
is \emph{not necessary}
and that the final labels are identical 
whether we use $\ECenterVec$ or $\EVec$, and  $\HCenterVec$ or  $\HVec$.
We state this result in a slightly more general form:
Let $\BVec = \textrm{LinBP}(\WVec, \EVec, \HVec, \epsilon, r)$ 
stand for the label distribution after iterating the LinBP update equations
$r$ times,
starting from $\EVec$ and using scaling factor $\epsilon$.
Let $\vec l = \textrm{label}(\BVec)$ stand for the operation of assigning each node the class with the maximum belief:
$l_i = \arg \max_{j} \B_{ij}$. Then:
\begin{theorem}[Centering in LinBP is unnecessary]\label{prop:nonCenteredLinBP}
Given constants $c_1$ and $c_2$ s.t.\
$\HVec_2 = \HVec_1 + c_1$
and $\EVec_2 = \EVec_1 + c_2$.\footnote{We use here ``broadcasting notation:'' 
adding a number to a vector or matrix is a short notation for adding the number to each entry in the vector.}
Then,
$\forall \WVec, \epsilon, r$:
$
\textup{label} \big( \textup{LinBP}(\WVec, \EVec_2, \HVec_2, \epsilon, r)\big)
=$
$
\textup{label} \big( \textup{LinBP}(\WVec, \EVec_1, \HVec_1, \epsilon, r)\big)
$.
\end{theorem}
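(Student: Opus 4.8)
The plan is to run the two LinBP processes in lockstep and show that they stay ``row-wise parallel,'' so that the row-wise $\arg\max$ defining $\textup{label}(\cdot)$ returns the same answer. Write $\mathbf 1$ for an all-ones vector (dimension clear from context); the two perturbations are rank one, $\HVec_2=\HVec_1+c_1\mathbf 1\mathbf 1^{\transpose}$ (size $k\times k$) and $\EVec_2=\EVec_1+c_2\mathbf 1\mathbf 1^{\transpose}$ (size $n\times k$). Let $\BVec^{(t)}_1$ and $\BVec^{(t)}_2$ be the $t$-th iterates of the (EC-free) recurrence $\BVec\leftarrow\EVec+\epsilon\,\WVec\BVec\HVec$ started from $\EVec_1$, resp.\ from $\EVec_2$. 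I will prove by induction on $t$ the invariant
\[
  \BVec^{(t)}_2=\BVec^{(t)}_1+\vec{d}^{\,(t)}\mathbf 1^{\transpose}
  \qquad\text{for some }\vec{d}^{\,(t)}\in\R^{n},
\]
i.e.\ the second run differs from the first only by adding one node-dependent constant to all $k$ coordinates of each node's belief vector. Since $\arg\max_j(\B_{ij}+d_i)=\arg\max_j\B_{ij}$ for every row $i$, the case $t=r$ then yields $\textup{label}(\BVec^{(r)}_2)=\textup{label}(\BVec^{(r)}_1)$, which is the claim; in particular it recovers the statement that centering $\EVec$ and $\HVec$ around $1/k$ leaves the labels unchanged.

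The base case $t=0$ holds with $\vec{d}^{\,(0)}=c_2\mathbf 1$ since $\BVec^{(0)}=\EVec$. For the inductive step, substitute the invariant into the update for $\BVec^{(t+1)}_2$ and expand $(\BVec^{(t)}_1+\vec{d}^{\,(t)}\mathbf 1^{\transpose})(\HVec_1+c_1\mathbf 1\mathbf 1^{\transpose})$ into four terms. The term $\BVec^{(t)}_1\HVec_1$, once pre-multiplied by $\epsilon\WVec$ and combined with $\EVec_1$, rebuilds exactly $\BVec^{(t+1)}_1$. The term $c_1(\BVec^{(t)}_1\mathbf 1)\mathbf 1^{\transpose}$ already has the shape $(\text{column vector})\,\mathbf 1^{\transpose}$ because $\BVec^{(t)}_1\mathbf 1$ is the vector of row sums of $\BVec^{(t)}_1$; likewise $c_1\vec{d}^{\,(t)}(\mathbf 1^{\transpose}\mathbf 1)\mathbf 1^{\transpose}=c_1k\,\vec{d}^{\,(t)}\mathbf 1^{\transpose}$; and the fourth term $\vec{d}^{\,(t)}(\mathbf 1^{\transpose}\HVec_1)$ has that shape as well \emph{provided} $\mathbf 1^{\transpose}\HVec_1=s\,\mathbf 1^{\transpose}$ for a scalar $s$. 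Pre-multiplying the three rank-one pieces by $\epsilon\WVec$ preserves the trailing $\mathbf 1^{\transpose}$, and adding the $c_2\mathbf 1\mathbf 1^{\transpose}$ coming from $\EVec_2$ produces $\BVec^{(t+1)}_2=\BVec^{(t+1)}_1+\vec{d}^{\,(t+1)}\mathbf 1^{\transpose}$, where $\vec{d}^{\,(t+1)}$ is an explicit function of $\vec{d}^{\,(t)}$, the row sums of $\BVec^{(t)}_1$, $\WVec$, and the constants $c_1,c_2,s,k$. This closes the induction.

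The one substantive ingredient — and the step I expect to be the crux — is the identity $\mathbf 1^{\transpose}\HVec_1=s\,\mathbf 1^{\transpose}$, i.e.\ that $\HVec_1$ has constant column sums. This is exactly the normalization of the compatibility matrix set up in \cref{sec:propFrequency}: the centered matrix $\HCenterVec$ has zero column sums ($s=0$), and every constant shift of it (including the ``raw'' $\HVec$) still has constant column sums, which is why the choice of shift is immaterial. The assumption is genuinely needed: without it the fourth term carries a factor $\mathbf 1^{\transpose}\HVec_1$ that is not proportional to $\mathbf 1^{\transpose}$, the perturbation stops being row-parallel, and a label (e.g.\ at an originally labeled node) can change. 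A more compact but equivalent route expands the closed form $\BVec^{(r)}=\sum_{t=0}^{r}\epsilon^{t}\WVec^{t}\EVec\HVec^{t}$ and uses $\mathbf 1^{\transpose}\HVec_1^{a}=s^{a}\mathbf 1^{\transpose}$ to collapse every cross term produced by $(\HVec_1+c_1\mathbf 1\mathbf 1^{\transpose})^{t}$ and by the extra $c_2$ in $\EVec_2$ to a multiple of $(\cdot)\,\mathbf 1^{\transpose}$; the direct induction above just avoids the noncommutative binomial bookkeeping.
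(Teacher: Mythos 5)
Your proof is correct and takes essentially the same route as the paper's: both arguments establish that the two runs differ only by a node-dependent constant added uniformly across all $k$ class entries of each row, so the row-wise $\arg\max$ and hence the labels coincide; the paper merely packages your rank-one bookkeeping into two small lemmas (a zero-sum residual vector annihilates a constant shift of the matrix, and a constant-row-sum matrix maps constant-sum vectors to constant-sum vectors) plus a step-by-step substitution rather than an explicit induction. Your identification of the crux---that the argument needs $\mathbf 1^{\transpose}\HVec_1 = s\,\mathbf 1^{\transpose}$, i.e.\ constant column sums---is the assumption the paper leaves implicit by working with a (symmetric, doubly stochastic) compatibility matrix or a constant shift of it, and your induction makes this dependence more explicit than the paper's informal trace.
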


Modulating beliefs of a node with $\HVec$ instead of $\HCenterVec$ allows a natural interpretation of
label propagation as ``propagating frequency distributions'' and thus
imposing an \emph{expected frequency distribution} on the labels of neighbors of a node. 
This observation gives us an intuitive interpretation of our later derived approaches for learning $\HVec$ from observed frequency distributions (\cref{sec:MCE}).
For the rest of this paper, we will thus replace \cref{eq:updateEquation} with the ``uncentered'' version:
\begin{align}
\BVec
	\leftarrow 
		\EVec 
		+ \WVec \BVec \HVec
\end{align}

A consequence is that compatibility propagation works identically 
whether the compatibility matrix $\HVec$ is centered or kept as doubly-stochastic.
In other words, if the relative frequencies by which different node classes connect to each other is known, then this matrix can be used \emph{without centering} for compatibility propagation and will lead to identical results and thus node labels.
\subsection{Labeling as energy minimization}\label{sec:energyFunctionsForLinBP}

Our next goal is to formulate the solution to the update equations of LinBP as the solution to an 
{optimization problem}; i.e., as an \emph{energy minimization framework}.
While LinBP was derived from probabilistic principles
(as approximation of the update equations of belief propagation~\cite{DBLP:journals/pvldb/GatterbauerGKF15}), 
it is currently not known whether there is a simple objective function that a solution minimizes.
Knowledge of such an objective is helpful
as it allows principled extensions to the core algorithm.
We will next give the objective function for LinBP and 
will use it later in \cref{sec:learningHeterophily} 
to solve the problem of {parameter learning}; 
i.e., estimating the compatibility matrix from a partially labeled graph.

\begin{proposition}[LinBP objective function]\label{prop:LinBP_energy}
The energy function minimized by the LinBP update equations \cref{eq:updateEquation}  is
given by:
\begin{align}
	E(\BVec) = || \BVec - \EVec - \WVec \BVec \HVec 
		||^2		
	\label{eq:LinBP_lossfunction}
\end{align}		
\end{proposition}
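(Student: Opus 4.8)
The plan is to recognize $E$ as a convex least-squares objective and to show that its minimizers are exactly the fixed points of the LinBP iteration $\BVec \leftarrow \EVec + \WVec\BVec\HVec$ (by the discussion following \cref{prop:nonCenteredLinBP} it suffices to argue with the uncentered update; the centered case is identical with $\BCenterVec,\ECenterVec,\HCenterVec$). Write $R(\BVec) \define \BVec - \EVec - \WVec\BVec\HVec$ for the residual, so $E(\BVec) = \|R(\BVec)\|^2$ in Frobenius norm. Since $\BVec \mapsto R(\BVec)$ is affine and $\|\cdot\|^2$ is convex, $E$ is convex and nonnegative; hence any critical point of $E$ is a global minimizer, and it is enough to characterize the critical set.

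First I would vectorize using $\VEC{\WVec\BVec\HVec} = (\HVec^\transpose \otimes \WVec)\VEC{\BVec}$, which turns the objective into the textbook form $E = \big\| (\vec I - \HVec^\transpose \otimes \WVec)\VEC{\BVec} - \VEC{\EVec}\big\|_2^2$, whose normal equations read $(\vec I - \HVec^\transpose \otimes \WVec)^\transpose \VEC{R(\BVec)} = \vec 0$; equivalently, differentiating $E$ directly via trace differentials gives $\tfrac12\nabla_{\BVec}E = R(\BVec) - \WVec^\transpose R(\BVec)\HVec^\transpose$. Either form makes the central point transparent: if $\BVec$ is a fixed point of LinBP, i.e.\ $R(\BVec) = \vec 0$, then the gradient vanishes and $E(\BVec) = 0$, so every LinBP fixed point is a global minimizer of $E$ attaining the value $0$.

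For the converse — so that speaking of \emph{the} energy function minimized by LinBP is justified — I would invoke the convergence condition \cref{eq:guaranteeLinBP}: when $\rho(\HCenterVec) < 1/\rho(\WVec)$, the eigenvalues of the relevant operator are $1 - \lambda\mu$ with $\lambda$ ranging over the spectrum of $\HCenterVec$ and $\mu$ over that of $\WVec$, so $|\lambda\mu| \le \rho(\HCenterVec)\rho(\WVec) < 1$ and none of these equals $0$; the operator is therefore nonsingular, the normal equations have the unique solution $\VEC{\BVec} = (\vec I - \HVec^\transpose\otimes\WVec)^{-1}\VEC{\EVec}$, and this is precisely the unique LinBP fixed point. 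Hence $E$ has a unique global minimizer, namely the output of LinBP, and the correspondence between the centered form \cref{eq:updateEquation} and the uncentered expression in \cref{eq:LinBP_lossfunction} is closed by \cref{prop:nonCenteredLinBP}.

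The main obstacle is conceptual rather than computational: LinBP is a fixed-point iteration, \emph{not} literally gradient descent on $E$ (indeed $\nabla_{\BVec}E = 2(R(\BVec) - \WVec^\transpose R(\BVec)\HVec^\transpose) \neq 2R(\BVec)$ in general), so the link between ``what LinBP converges to'' and ``what minimizes $E$'' must be routed through the residual $R$ and the nonsingularity of $\vec I - \HVec^\transpose\otimes\WVec$ rather than through a one-line gradient-step identity. The only place where care is needed in the calculation is the Kronecker-product/transpose bookkeeping that guarantees $\nabla_{\BVec}E = \vec 0 \iff R(\BVec) = \vec 0$ under the convergence condition.
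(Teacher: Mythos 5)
Your proposal is correct, and its core is exactly the paper's own (much shorter) proof: the paper observes that a convergent LinBP iteration satisfies the fixed-point system $\BVec = \EVec + \WVec\BVec\HVec$, that this makes the residual---and hence $E$---equal to zero, and that $E$ is a nonnegative quadratic, so fixed points are global minimizers and, conversely, any point with $E=0$ is a fixed point. Everything you add beyond this (the vectorization via $\HVec^\transpose\otimes\WVec$, the gradient identity $\tfrac12\nabla_{\BVec}E = \vec R - \WVec^\transpose\vec R\,\HVec^\transpose$ with $\vec R \define \BVec-\EVec-\WVec\BVec\HVec$, and uniqueness of the minimizer from nonsingularity of $\vec I - \HVec^\transpose\otimes\WVec$) is extra rigor the paper does not attempt; it is sound and buys a genuine refinement, namely that under the convergence condition the minimizer is unique and coincides with the LinBP limit. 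One caution on that added converse: the spectral bound \cref{eq:guaranteeLinBP} is stated for the \emph{centered} $\HCenterVec$ (with the scaling factor $\epsilon$ implicit), whereas the uncentered doubly-stochastic $\HVec$ has $\rho(\HVec)=1$, so $\vec I - \HVec^\transpose\otimes\WVec$ need not be nonsingular for the raw uncentered update (the paper's appendix even exhibits a case where the uncentered iteration diverges while the centered one converges); the nonsingularity argument should therefore be run on the centered/scaled operator and transferred through \cref{prop:nonCenteredLinBP}, rather than applied to $\HVec$ directly. This affects only your uniqueness refinement, not the proposition or the paper's argument.
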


\section{Compatibility Estimation}\label{sec:learningHeterophily}

In this section we develop
a scalable algorithm to learn compatibilities from
partially labeled graph.  
We proceed step-by-step, starting from a baseline until we finally arrive at our suggested consistent and scalable method called ``\emph{Distant Compatibility Estimation with restarts}'' (DCEr).

The compatibility matrix we wish to estimate is a $k \times k$-dimensional doubly stochastic
matrix $\HVec$.  Because any symmetric doubly-stochastic matrix has $k^* \define
\frac{k(k-1)}{2}$ degrees of freedom, we parameterize all $k^2$
entries as a function of $k^*$ appropriately chosen parameters.
In all following approaches, we parameterize $\HVec$ as 
a function of the $k^*$ entries of $H_{ij}$ with $i \leq
j, j\neq k$. We can calculate the remaining matrix entries from symmetry and stochasticity conditions as follows:
\begin{align} H_{ij} = 
    \begin{cases}
        H_{ji}, & \text{if }  i < j, j\neq k  \\ 1-\sum_{\ell=1}^{k-1} H_{i \ell}, &
        \text{if } i \neq k, j = k \\ 1-\sum_{\ell=1}^{k-1} H_{\ell j}, & \text{if }
        i = k, j \neq k \\	2-k+\sum_{\ell, r < k} H_{\ell r}, & \text{if } i = j =
        k \\		
	\end{cases}
	\label{eq:H_parameterization}
\end{align}

For example, for $k=3$, $\HVec$ can be reconstructed from a $k^*=3$-dimensional vector
$\hVec = [H_{11}, H_{21}, H_{22}]^\transpose$
 as follows:
\begin{align*}
	\HVec(\hVec) = \left[\begin{smallmatrix}
		H_{11} 			&\, H_{12} 			&\, 1-H_{11}-H_{12}  \\
		H_{21} 			&\, H_{22} 			&\, 1-H_{21}-H_{22}  \\
		1-H_{11}-H_{21} \,&\, 1-H_{12}-H_{22} \,&\, H_{11}+2 H_{21}+ H_{22}-1  \\
	\end{smallmatrix}\right]
\end{align*}

More generally, let $\hVec \in \R^{k^*}$ and define $\HVec$ as function of the $k^* \define \frac{k(k-1)}{2}$ entries of $\hVec$ as follows:
\begin{align*}
	\HVec = \left[\begin{smallmatrix}
		\h_1 & . 	& . 	& \ldots & . & .  \\
		\h_2 & \h_3 & . 	& \ldots & . & .     \\
		\h_4 & \h_5 & \h_6 	& \ldots & . & . 	\\
		\tiny{^{\vdots}} & \tiny{^{\vdots}} & \tiny{^{\vdots}} 	& \tiny{^{\ddots}} & . & . 	\\
		\h_{...} 	& \h_{...} 	& \h_{...} 	& \ldots & \h_{k^*} & .  \\
		. 	& . 	& . 	& \ldots & . & .  \\
	\end{smallmatrix}\right]		
\end{align*}
The remaining matrix entries can be calculated 
from \cref{eq:H_parameterization}.

\subsection{Baseline: Holdout method}\label{sec:baseline}

Our first approach for estimating $\HVec$ 
is a variant of a standard textbook method~\cite{Mohri:2012kq,Witten:2011la,Koller:2009:PGM:1795555}
and serves as baseline against which we compare all later approaches:
we split the labeled data into two sets and learn the compatibilities
that fit best when propagating labels from one set to the other.

Formally, let $\mathcal{Q}$ be a partition of the available labels into a 
$\texttt{Seed}$  
and a
$\texttt{Holdout}$ set.
For a fixed partition $\mathcal{Q}$ and given compatibility matrix $\HVec$, 
the ``holdout method'' runs label propagation \cref{eq:updateEquation}
with $\texttt{Seed}$ as seed labels
and evaluates accuracy over $\texttt{Holdout}$. Denote
$\textrm{Acc}_{\mathcal{Q}}(\HVec)$ the resulting accuracy.
Its goal is then to find the 
matrix $\HVec$ 
that maximizes the accuracy.
In other words, the energy function that holdout minimizes is the negative accuracy:
\begin{align}
	E(\HVec )
	=
	-\textrm{Acc}_{\mathcal{Q}}(\HVec)
	 \notag
\end{align}
The optimization itself is then a search over the parameter space given by the $k^*$ free parameters of $\HVec$:
\begin{align}
	\HVecEst = \arg \min_{\HVec} E(\HVec)
	\textrm{, s.t.\ \cref{eq:H_parameterization}}	 
	\notag
\end{align}

The result may depend on the choice of partition $\mathcal{Q}$.
We could thus use $b$ different partitions $\mathcal{Q}_i, i \in [b]$:
For a fixed $\HVec$ we run label propagation $b$ times, 
each starting from a different $\texttt{Seed}_i$,
and each evaluated over its corresponding test set $\texttt{Holdout}_i$. 
The energy function to minimize is then the negative compound accuracy:
\begin{align}
	E(\HVec )
	=
   -
   \sum_i \textrm{Acc}_{\mathcal{Q}_i}(\HVec)
	\hspace{10mm} \textrm{(Holdout)} \hspace{-10mm}
	\label{eq:HeterophilyLearningEnergy}
\end{align}

We suggest this method as reasonable baseline as 
it mimics parameter estimation methods in probabilistic graphical models 
that optimize over a parameter space by \emph{using multiple executions of inference as a subroutine}~\cite{Koller:2009:PGM:1795555}.
Similarly, our holdout method maximizes the accuracy by using inference as a ``black box'' subroutine. 
The downside of the holdout method is that 
each step in this iterative algorithm performs inference 
over the whole graph which makes parameter \emph{estimation 
considerably more expensive than inference} (label propagation).
The number of splits $b$ has an obvious trade-off: higher $b$ smoothens the
energy function and avoids overfitting to one partition, but increases runtime. 

In the following sections, we introduce novel path summarizations that avoid
running estimation over the whole graph.
Instead we use a few concise graph
summaries
of size $O(k^2)$, \emph{independent} of the graph size.  In other words, the
expensive iterative estimation steps can now be performed on a reduced size
summary of the partially labeled graph.  This conceptually simple idea allows us
to perform \emph{estimation faster than inference} (recall
\cref{Fig_Timing_3_overview}).
\subsection{Linear Compatibility Estimation (LCE)}\label{sec:LCE}

We obtain our first novel 
approach from energy minimization objective of LinBP 
in \cref{prop:LinBP_energy}:
\begin{align*}
	E(\BVec) = || \BVec - \EVec - \WVec \BVec \HVec 
	||^2		
\end{align*}		

Note that 
for an unlabeled node $i$, the final label distribution is the weighted  average of its neighbors:
$\BVec_{i:} = ( \WVec \BVec \HVec)_{i:}$.
To see this, consider a single row for a node $i$: 
\begin{align*}
	|| \big( \BVec - \EVec - \WVec \BVec \HVec \big)_{i:} ||^2
\end{align*}
If $i$ is unlabeled then its corresponding entries in $\EVec_{i:}$ are 0, and the minimization objective is equivalent to
\begin{align*}
	|| \big( \BVec - \WVec \BVec \HVec \big)_{i:} ||^2
\end{align*}
which leads to $\BVec_{i:} = ( \WVec \BVec \HVec)_{i:}$ for an unlabeled node.
Next notice that if we knew $\BVec$ and ignored the few explicit labels, then $\HVec$ could be learned from minimizing 
\begin{align*}
	E(\HVec) = || \BVec - \WVec \BVec \HVec 
	||^2		
\end{align*}	
In our case, we only have few labels in the form of $\EVec$ instead of $\BVec$.
Our first novel proposal for learning the compatibility matrix $\HVec$ is to thus use the available labels $\EVec$ and to minimize the following energy function:
\begin{align}
	E(\HVec )
	=
	|| {\EVec} 
	-  \WVec {\EVec} {\HVec} 
	||^2		
	\hspace{10mm} \textrm{(LCE)} \hspace{-15mm}
	\label{eq:HeterophilyLearningEnergy}
\end{align}

Notice that~\cref{eq:HeterophilyLearningEnergy} defines a \emph{convex} optimization problem.
Thus any standard optimizer can solve it in considerably faster time than the Holdout method and
it is no longer necessary to use inference as subroutine.
We call this approach ``linear compatibility estimation'' as the optimization criterion stems directly from the optimization objective of linearized belief propagation.

\subsection{Myopic Compatibility Estimation: MCE}\label{sec:MCE}

We next introduce a powerful yet simple idea that allows our next approaches to truly scale:
we first (1) summarize the partially labeled graph into a small summary, and then (2) use this summary to perform the optimization.
This idea was motivated by the observation that
\cref{eq:HeterophilyLearningEnergy}
requires an iterative
gradient descent algorithm and has to multiply large adjacency
matrix $\WVec$ in each iteration.
We try to derive an approach that can
``factor out'' this calculation into small but
sufficient \emph{factorized graph representation}, which can then be repeatedly used during optimization.

Our first method is called
\emph{myopic compatibility estimation}
(MCE).
It is ``myopic'' in the sense that it tries to summarize \emph{the relative frequencies
of classes between observed neighbors}.
We describe below the three variants to transform this summary into a symmetric,
doubly-stochastic matrix.

Consider a partially labeled $n \times k$-matrix $\EVec$ with $\E_{ic} = 1$. If
node $i$ has label $c$ (recall that unlabeled nodes have a corresponding null
row vector in $\EVec$), then the $n \times k$-matrix ${\vec N} \define \WVec
\EVec$ has entries $N_{ic}$ representing the {number of labeled neighbors
of node $i$ with label $c$}.
Furthermore, the $k \times k$-matrix $\vec M \define \EVec^\transpose \vec N =
\EVec^\transpose \WVec \EVec$ has entries $M_{cd}$ representing the {number
of nodes with label $c$ that are neighbors of nodes with label $d$}. This
symmetric matrix represents the observed number of labels among labeled nodes.
Intuitively, we are trying to find a compatibility matrix which
is ``similar'' to $\vec M$.
We normalize $\vec M$ into an observed \emph{neighbor statistics} matrix $\PVecEst$
and then find the closest doubly-stochastic matrix $\HVec$:

We consider three  variants for normalizing $\vec M$.
The first one appears most natural (creating a stochastic matrix representing label frequency distributions between neighbors, then finding the closest doubly-stochastic matrix).
We conceived of two other approaches, just to see if the choice of normalization has an impact on the final labeling accuracy.
\begin{enumerate}

\item
Make $\vec M$ row-stochastic by dividing each row by its sum.
The vector of row-sums can be expressed in matrix notation as $\vec M \vec 1$.
We thus define the first variant of neighbor statistics matrix as:
\begin{align}
	\hspace{0mm}
	\PVecEst
	&=
	\Vec M^{\textrm{row}}
	\define
	\mathrm{diag}(\vec M \vec 1)^{-1} \vec M
	\hspace{8mm} \textrm{(Variant 1)}
	\label{eq:variant1}
\end{align}
Note, we use $\Vec M^{\textrm{row}}$ 
as short notation for
row-normalizing the matrix $\vec M$.
For each class $c$, the entry $\PEst_{ce}$ gives the relative frequency of a node being connected to class $e$
While the matrix is row-stochastic, it is not yet doubly-stochastic.

\item
The second variant uses the symmetric normalization method LGC~\cite{DBLP:conf/nips/ZhouBLWS03} from \cref{sec:background}:
\begin{align}
	\hspace{0mm}
	\PVecEst
	&=
	\mathrm{diag}(\vec M \vec 1)^{-\frac{1}{2}}
	\vec M
	\,
	\mathrm{diag}(\vec M \vec 1)^{-\frac{1}{2}}
	\hspace{4mm} \textrm{(Variant 2)}
\end{align}
The resulting $\PVecEst$ is symmetric but not stochastic.

\item The third variant scaled $\vec M$ s.t.\ the average matrix
    entry is $\frac{1}{k}$.  This divisor is the sum of all entries divided by
    $k$ (in
    vector notation written as $\vec 1^\transpose \vec M \vec 1$) :
\begin{align}
	\hspace{7mm}
	\PVecEst
	&=
	k({\vec 1}^\transpose \vec M \vec 1)^{-1}
	\vec M
	\hspace{13mm} \textrm{(Variant 3)}
	\hspace{-3mm}
\end{align}
This scaled matrix is neither symmetric nor stochastic.

\end{enumerate}

We then find the ``closest''
symmetric, doubly stochastic matrix $\HVec$
(i.e., it fulfills the $k^* \define \frac{k(k-1)}{2}$ 
conditions implied by symmetry $\HVec = \HVec^\transpose$
and stochasticity $\HVec \, \vec 1 = \vec 1$).
We use the Frobenius norm because of its favorable computational properties
and thus minimize the following function:
\begin{align}
	\hspace{10mm}
	E(\HVec )
	=
	|| \HVec - \PVecEst ||^2
		\hspace{14mm}
		\textrm{(MCE)}
		\hspace{-2mm}
\end{align}

Notice that all three normalization variants above have an alternative, simple justification:
on a {fully labeled graph}, each variant will learn the same compatibility matrix;
i.e., the matrix that captures the relative label frequencies between neighbors in a graph.
On a graph with sampled nodes, however, $\vec M$ will not necessarily be constant row-sum anymore.
The three normalizations and the subsequent optimization are alternative approaches for
finding a ``smoothened'' matrix $\HVec$ that is close to the observations.
Our experiments have shown that the ``most natural'' normalization variant 1 consistently performs best among the three methods.
Unless otherwise stated, we thus
imply using variant 1.

Notice that MCE and all following approaches estimate $\HVec$ \emph{without performing label propagation}; and only because we avoid propagation, our method turns out to be faster than label propagation on large graphs and moderate $k$.

\subsection{Distant Compatibility Estimation: DCE}\label{sec:DCE}

While MCE addresses the scalability issue,
it still requires a sufficient number of neighbors that are both labeled.
For very small fractions $f$ of labeled nodes, this may not be enough.
Our next method, ``\emph{distant compatibility estimation}'' (DCE),
takes into account
longer distances between labeled nodes.

In a graph with $m$ edges and a small fraction $f$ of labeled nodes, the number
of neighbors that are \textit{both} labeled can be quite small ($\sim mf^2$).
Yet the number of
``distance-2-neighbors'' (i.e., nodes which are connected via a path of length $2$) is higher in proportion to the average node degree $d$ ($\sim dmf^2$). Similarly for distance-$\ell$-neighbors ($\sim d^{\ell-1}mf^2$).
As information travels via a path of length $\ell$, it gets modulated $\ell$ times;
i.e., via a power of the compatibility matrix: $\HVec^{\ell}$.\footnote{Notice that this  is strictly correct only  in graphs with balanced labels. Our experiments verify the quality of estimation also on unbalanced graphs.}
We propose to use powers of the matrix $\HVec$ to be estimated by comparing them
against an ``\emph{observed length-$\ell$ statistics matrix}.''

Powers of the adjacency matrix $\WVec^\ell$ with entries $\W^\ell_{ij}$
count the number of paths of length $\ell$ between any nodes $i$ and $j$.
Extending the ideas in \cref{sec:MCE},
let ${\vec N}^{(\ell)} \define \WVec^\ell \EVec$
and ${\vec M}^{(\ell)} \define \EVec^\transpose  {\vec N}^{(\ell)} = \EVec^\transpose \WVec^\ell \EVec$.
Then entries $M^{(\ell)}_{ce}$ represent the number of labeled nodes of class $e$
that are connected to nodes of class $c$ by an $\ell$-distance path.
Normalize this matrix (in any of the previous 3 variants)
to get the observed length-$\ell$ statistics matrix
$\PVecEst^{(\ell)}$.
Calculate these length-$\ell$ statistics for several path lengths $\ell$,
and then find the compatibility matrix that best fits these multiple statistics.

Concretely, minimize a ``distance-smoothed'' energy
\begin{align}
E(\HVec) &=
	\sum_{\ell=1}^{\ell_{\max}} w_{\ell} || \HVec^\ell - \PVecEst^{(\ell)} ||^2
		\hspace{14mm}
		\textrm{(DCE)}
		\hspace{-2mm}
\label{eq:DCE_energy}
\end{align}
where
$\ell_{\max}$ is the maximal distance considered, and
the 
weights $w_{\ell}$ balance having more (but weaker) data points for
bigger $\ell$ 
the more reliable (but sparser) signal from smaller
$\ell$.

To parameterize the weight vector $\vec w$, we use a ``scaling factor'' $\lambda$ defined by
$w_{\ell+1} = \lambda w_\ell$. For example, a distance-3 weight vector is then
$[1, \lambda, \lambda^2]^\transpose$.
The intuition is that in a typical graph, the fraction of number of paths of length $\ell$
to the number of paths of length $\ell-1$ is \emph{largely independent} of $\ell$ (but proportional to the average degree).
Thus, $\lambda$ determines the relative weight of paths of one more hop.
As consequence our framework has only one single hyperparameter $\lambda$.

In our experiments (\cref{sec:experiments}), we initialize the optimization with
a $k^*$-dimensional vector with all entries equal to $\frac{1}{k}$ and
discuss our choice of $\ell_{\max}$ and $\lambda$.

\subsection{Non-Backtracking Paths (NB)}
\label{sec:NonBacktrackingPaths}
In our previous approach of learning from more distant neighbors,
we made a slight but consistent mistake. We illustrate this mistake with \cref{Fig_Why_NonBacktrackingMatrix}. Consider the blue node $i$ which has one orange neighbor $j$, which has two neighbors, one of which is green node $u$.
Then the blue node $i$ has one distance-2 neighbor $u$ that is green. However, our previous approach will consider all length-2 paths, one of which leads back to node $i$.
Thus, the row entry for node $i$ in $\vec N$ is $\vec N^{(2)}_{i:} = [1, 0, 1]$ (assuming blue, orange, and green represent classes 1, 2, and 3, respectively).
In other words, $\vec M^{(2)}$ will consistently \emph{overestimate the diagonal entries}.

To address this issue, we consider only \emph{non-backtracking paths} (NB)
in the powers of the adjacency matrix.
A NB path on an undirected graph $G$ is a path which does not traverse the same edge twice in a row.
In other words, a path $(u_1, u_2, \ldots,  u_{\ell + 1})$
of length $\ell$ is non-backtracking iff
$\forall j \leq \ell -1: u_j \neq u_{j+2}$.
In our notation, we replace
$\WVec^\ell$ with $\WVecEC^{(\ell)}$.
For example, $\WVecEC^{(2)} = \WVec^2 - \vec D$
(a node $i$ with degree $d_i$ has
$D_{ii} = d_i$ as diagonal entry in $\vec D$).
A more general calculation of $\WVecEC^{(\ell)}$ for any length $\ell$ is presented in \cref{sec:efficientCalculation}.
We now calculate new graph statistics $\PVecEstEC^{(\ell)}$ from
$\vec M_{\NB}^{(\ell)} \define \EVec^\transpose \WVecEC^{(\ell)} \EVec$
instead of $\vec M^{(\ell)}$,
and replace $\PVecEst^{(\ell)}$
with
$\PVecEstEC^{(\ell)}$ in
\cref{eq:DCE_energy}:
\begin{align}
E(\HVec) &=
	\sum_{\ell=1}^{\ell_{\max}} w_{\ell} || \HVec^\ell - \PVecEstEC^{(\ell)} ||^2
		\hspace{8mm}
		\textrm{(DCE NB)}
		\hspace{-2mm}
\label{eq:DCE_energyNB}
\end{align}

We next show that--assuming a label-balanced graph--this change gives us a \emph{consistent estimator} with bias in the order of $\O(1/m)$,
in contrast to the prior bias in the order of $\O(1/d)$:

\begin{figure}[t]
\centering
\begin{subfigure}[b]{.62\linewidth}
	\centering
	\includegraphics[scale=0.33]{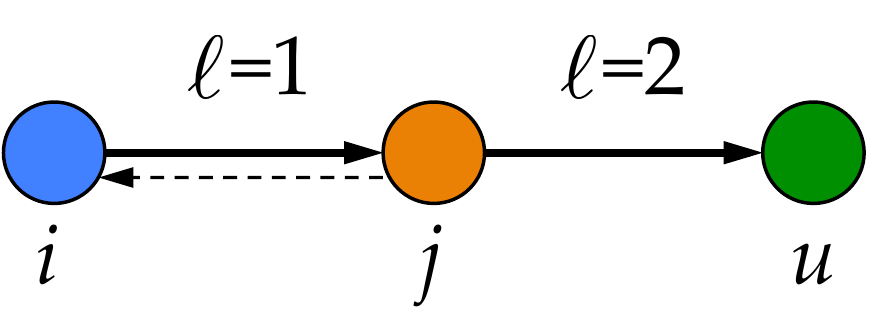}
\end{subfigure}
\caption{Illustration for non-backtracking paths}
\label{Fig_Why_NonBacktrackingMatrix}
\end{figure}

\begin{theorem}[Consistency of statistics $\PVecEstEC^{(\ell)}$]\label{th:consistency}
Under mild assumptions for the degree distributions,
$\PVecEstEC^{(\ell)}$ is a consistent estimator for $\HVec^\ell$,
whereas
$\PVecEst^{(\ell)}$ is not:
\begin{align*}
\lim_{n \rightarrow \infty} \PVecEstEC^{(\ell)} = \HVec^\ell
\hspace{7mm}
\textrm{whereas,}
\hspace{7mm}
\lim_{n \rightarrow \infty} \PVecEst^{(\ell)} \neq \HVec^\ell
\end{align*}
\end{theorem}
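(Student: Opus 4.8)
\medskip
\noindent\textbf{Proof plan.}
The plan is to work in the generative model under which $\HVec$ is, by construction, the matrix of relative neighbor frequencies (\cref{sec:propFrequency}): a balanced $k$-block planted partition on $n$ nodes in which a node of class $c$ links to a node of class $e$ with probability proportional to $\H_{ce}$, with average degree $\bar d = \Theta(m/n)$ held fixed, and with the labeled set $V_L$ a uniform random fraction $f$ of $V$ drawn independently of the edges. (The ``mild assumptions for the degree distributions'' are exactly the regularity needed for the concentration step below; the argument carries over verbatim to deterministic graphs satisfying analogous local-regularity conditions.) I would then (i) compute the expected value of the unnormalized statistics $\vec M^{(\ell)} \define \EVec^\transpose \WVec^\ell \EVec$ and $\vec M_{\NB}^{(\ell)} \define \EVec^\transpose \WVecEC^{(\ell)} \EVec$, (ii) row-normalize, and (iii) upgrade from expectations to limits in probability via concentration.

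For step (i), fix classes $c$ and $e$. Since $V_L$ is independent of the edges, $\EE{M_{\NB,ce}^{(\ell)}}$ is $f^2$ times the expected number of non-backtracking walks of length $\ell$ whose two endpoints lie in classes $c$ and $e$ (diagonal entries pick up an additional $f$-weighted trace term, but $\tr \WVecEC^{(\ell)} = \O(1)$ counts only short closed non-backtracking walks and is negligible against the $\Theta(n)$ off-diagonal path mass). On a sparse, locally tree-like graph a non-backtracking walk traverses $\ell$ \emph{distinct} edges, so its probability factors as a product of edge probabilities; summing over the intermediate class sequence $c = c_1, c_2, \dots, c_{\ell+1} = e$ produces the factor $\H_{c_1 c_2} \H_{c_2 c_3} \cdots \H_{c_\ell c_{\ell+1}}$ whose sum over the $c_j$ is precisely $(\HVec^\ell)_{ce}$. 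Hence $\EE{\vec M_{\NB}^{(\ell)}} = \gamma_\ell \, \HVec^\ell + \vec R_\ell$, where $\gamma_\ell$ is a scalar proportional to $n$ and $\vec R_\ell$ collects the walk configurations that either reuse an edge or close a short cycle; each such configuration costs an extra factor $\Theta(1/n)$, so $\vec R_\ell / \gamma_\ell = \O(1/m)$. The combinatorial heart here is to verify, via the non-backtracking recursion for $\WVecEC^{(\ell)}$ (\cref{sec:efficientCalculation}), that it subtracts \emph{exactly} the backtracking contributions that would otherwise appear at relative size $\Theta(1/\bar d)$, leaving only the $\Theta(1/m)$ remainder.

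For step (ii), because $\HVec^\ell$ is doubly stochastic every row of $\gamma_\ell \HVec^\ell$ sums to $\gamma_\ell$, so the row-normalization $\mathrm{diag}(\vec M_{\NB}^{(\ell)} \vec 1)^{-1} \vec M_{\NB}^{(\ell)}$ equals $\HVec^\ell + \O(1/m)$, which yields the first limit. The same computation applied to $\vec M^{(\ell)}$ exposes the bias: here the backtracking walks are \emph{not} removed. Already for $\ell = 2$ one has the exact identity $\WVec^2 = \WVecEC^{(2)} + \vec D$, hence $\EE{\vec M^{(2)}} = \gamma_2 \HVec^2 + \delta_2 \vec I$ with $\delta_2 / \gamma_2 = \Theta(1/\bar d)$ a fixed positive constant, so after row-normalization $\PVecEst^{(2)} \to \frac{\bar d}{\bar d+1} \HVec^2 + \frac{1}{\bar d+1} \vec I \neq \HVec^2$ (generically, i.e.\ whenever $\HVec^2 \neq \vec I$); for general $\ell$ the analogous backtracking terms give a non-vanishing perturbation concentrated mainly on the diagonal, which establishes the second, negative claim.

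Step (iii) upgrades ``$\EE{\vec M_{\NB}^{(\ell)}} / \gamma_\ell \to \HVec^\ell$'' to convergence in probability by bounding $\var{M_{\NB,ce}^{(\ell)}} = o(\gamma_\ell^2)$ and invoking Chebyshev, after which continuity of $\vec M \mapsto \mathrm{diag}(\vec M \vec 1)^{-1} \vec M$ transfers the limit to $\PVecEstEC^{(\ell)}$. I expect this variance bound, uniformly over $\ell \le \ell_{\max}$, to be the main obstacle: in a sparse graph a handful of atypically high-degree nodes can dominate the count of long walks and break concentration, which is precisely why the theorem needs hypotheses on the degree distribution (bounded average degree plus control of the degree tail). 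The conceptual crux, by contrast, is the earlier identity that non-backtracking counting cancels the $\Theta(1/\bar d)$ backtracking mass while preserving the $(\HVec^\ell)_{ce}$ signal up to $\O(1/m)$.
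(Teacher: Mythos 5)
Your proposal is correct in substance and reaches exactly the same quantitative conclusion as the paper -- backtracking walks inflate the diagonal entries with a relative bias of order $\O(1/d)$ that does not vanish as $n \rightarrow \infty$, while the non-backtracking counts remove precisely this mass and leave only an $\O(1/m)$ correction -- but you get there by a different formal route. The paper deliberately avoids a generative model: it works at $f=1$, conditions on the observed matrix $\vec M$, treats every graph consistent with $\vec M$ as equally likely, and compares $\WVecEC^{(2)}$ and $\WVec^{2}$ as estimators of $\WVec \EVec \EVec^\transpose \WVec$, arguing that the NB count under-samples the diagonal by $\O(1/m)$ whereas the full count over-samples it by $\O(1/d)$; general $\ell$ is handled by an induction claim and $f<1$ by noting that an $f$-fraction of labels retains an $f^2$-fraction of labeled pairs in expectation. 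You instead posit an explicit balanced planted-partition model, expand the expected walk counts using local tree-likeness so that the class-sequence sum produces $(\HVec^\ell)_{ce}$ directly for all $\ell$ at once, and plan a Chebyshev concentration step plus continuity of the row-normalization -- a step the paper only gestures at via ``the law of large numbers,'' so your plan is, if anything, more explicit about where the ``mild assumptions on the degree distribution'' actually enter (the variance/degree-tail control). Two small caveats: your closed-form limit $\frac{\bar d}{\bar d+1}\HVec^2 + \frac{1}{\bar d+1}\vec I$ for $\PVecEst^{(2)}$ is model-dependent (it is the Poisson-degree answer; a $\bar d$-regular graph gives $\frac{\bar d - 1}{\bar d}\HVec^2 + \frac{1}{\bar d}\vec I$), though this does not affect the point that the bias is $\Theta(1/\bar d)$ and non-vanishing; and the promised variance bound is the one piece of real work your plan leaves open, which is fair to flag but should be acknowledged as the load-bearing step if you were to write the argument out in full.
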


\begin{example}[Non-backtracking paths]
\label{ex:non-backtracking}
	Consider the compatibility matrix
	$\HVec =
	\left[\begin{smallmatrix}
		0.2 & 0.6 & 0.2  \\
		0.6 & 0.2 & 0.2  \\
		0.2 & 0.2 & 0.6  \\
	\end{smallmatrix}\right]$.
	Then
	$\HVec^2 =
	\left[\begin{smallmatrix}
		0.44 & 0.28 & 0.28  \\
		0.28 & 0.44 & 0.28  \\
		0.28 & 0.28 & 0.44  \\
	\end{smallmatrix}\right]$,
	and the maximum entry (permuting between first and second position in the first row)
	follows the series
	$0.6, 0.44, 0.376, 0.3504, \ldots$ for increasing $\ell$
	(shown as continuous green line $\HVec^{\ell}$ in \cref{Fig_Backtracking_Advantage_21}).
	We create synthetic graphs with $n = 10$k nodes, average node degree $d = 20$,
	uniform degree distribution, and
	compatibility matrix $\HVec$.
	We remove the labels from  $1-f = 90\%$ nodes,
	then calculate the
	top entry
	in both $\PVecEst^{(\ell)}$ and $\PVecEstEC^{(\ell)}$.
	The two bars in \cref{Fig_Backtracking_Advantage_21} show the mean and standard deviation of the corresponding matrix entries,
	illustrating that the approach based on non-backtracking paths
	leads to an unbiased estimator (height of orange bars are identical to the red circles),
	in contrast to the full paths (blue bars are higher than the red circles).
\qed
\end{example}

\begin{figure}[t]
\centering
\begin{subfigure}[b]{.48\linewidth}
	\centering
	\includegraphics[scale=0.38]{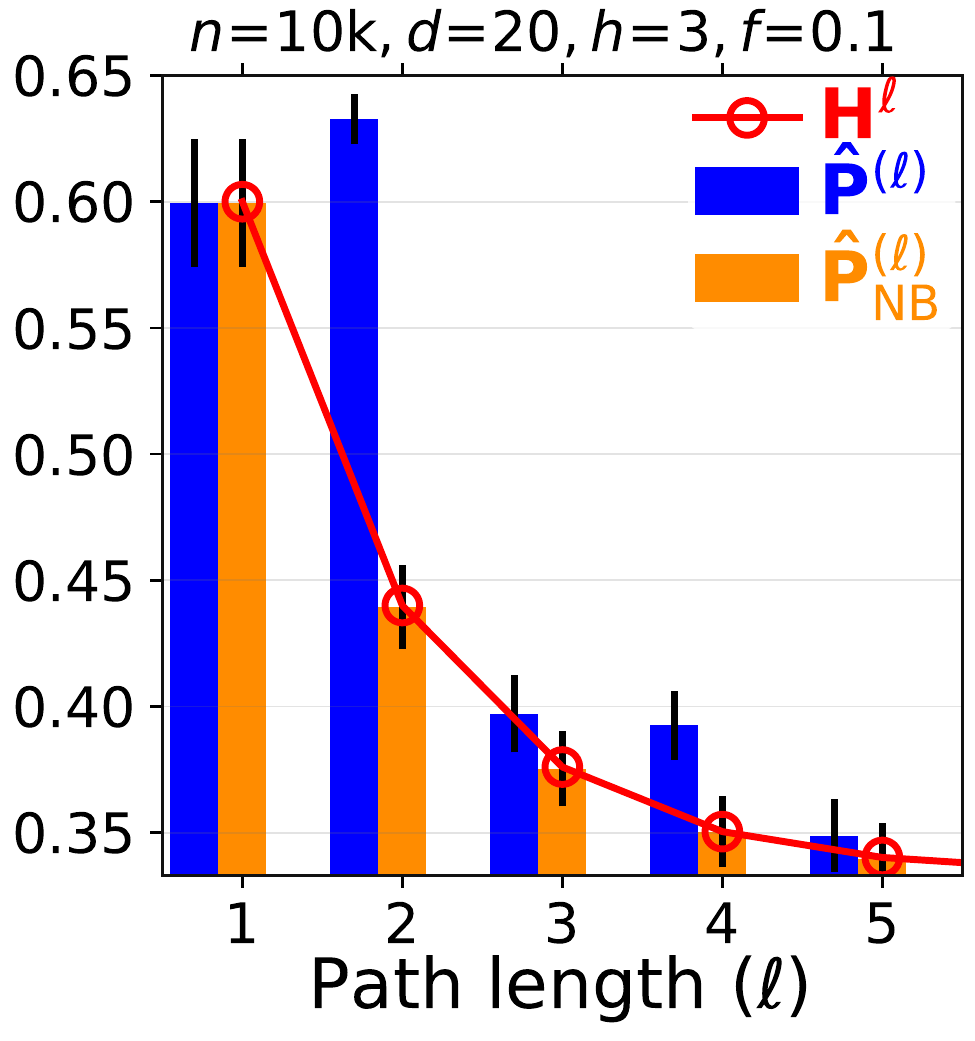}
	\caption{\Cref{ex:non-backtracking}
	\label{Fig_Backtracking_Advantage_21}
	}
\end{subfigure}
\hspace{1mm}
\begin{subfigure}[b]{.48\linewidth}
	\centering
	\includegraphics[scale=0.38]{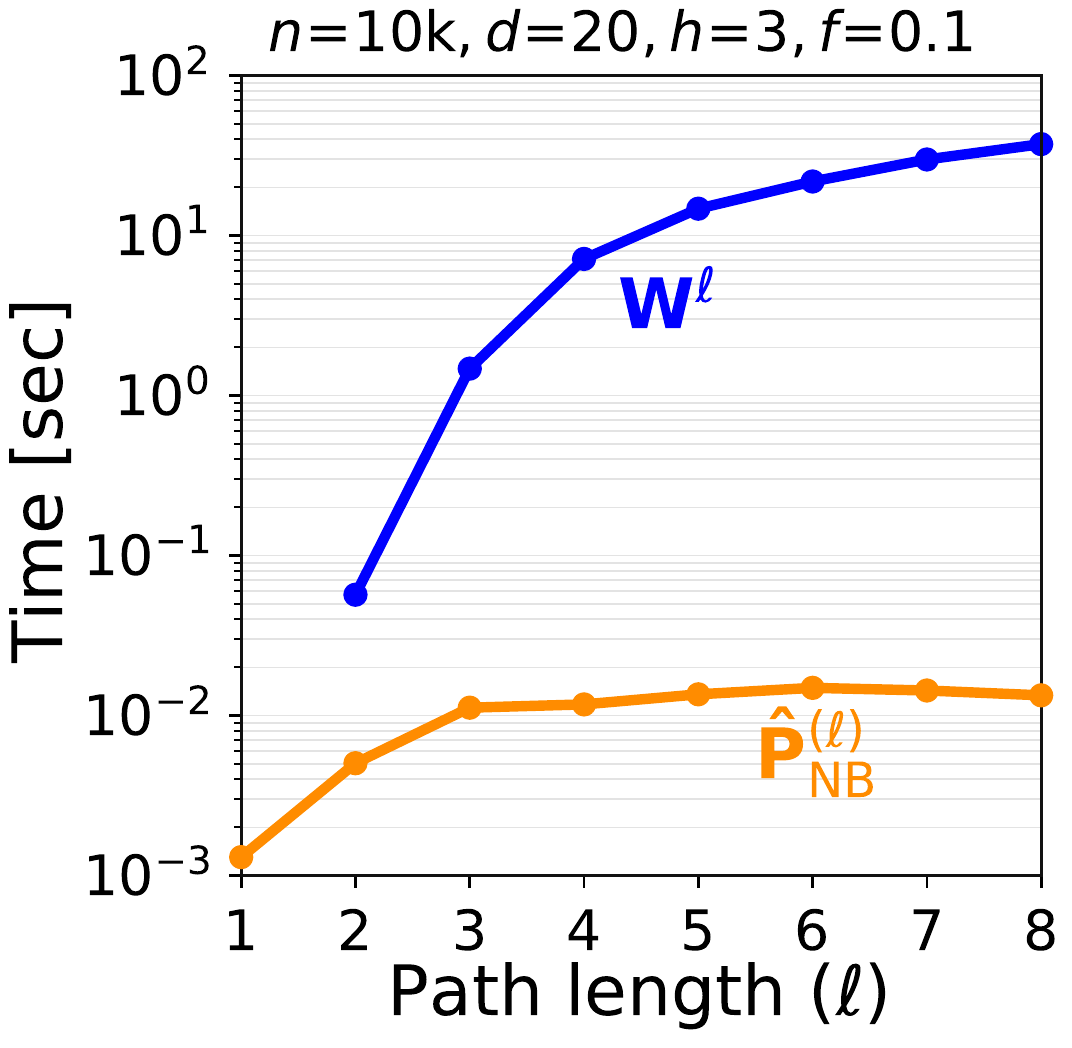}
	\caption{\Cref{ex:efficientCalculation}
	\label{Fig_Scaling_Hrow_4}
	}
\end{subfigure}
\caption{(a):
\Cref{ex:non-backtracking}:
$\PVecEstEC^{(\ell)}$
uses non-backtracking paths only
and is a consistent estimator,
in contrast to $\PVecEst^{(\ell)}$.
(b)
\Cref{ex:efficientCalculation}:
Calculating $\WVec^\ell$  for increasing $\ell$ is costly,
while
our factorized calculation of $\PVecEstEC^{(\ell)}$
avoids evaluating
$\WVec^\ell$
explicitly
and thus \emph{scales to arbitrary path lengths} $\ell$.}
\label{Fig_Scaling_Hrow}
\end{figure}

\subsection{Scalable, Factorized Path Summation}
\label{sec:efficientCalculation}
Calculating longer NB paths is more involved.
For example:
$\WVecEC^{(3)} =  \WVec^3 - (\vec D \WVec + \WVec \vec D - \WVec)$.
However, we can calculate them recursively as follows:

\begin{proposition}[Non-backtracking paths]\label{prop:nonbacktracking}
Let $\WVecEC^{(\ell)}$ be the matrix with
${\W_\NB^{(\ell)}}_{ij}$ being the number of non-backtracking paths of length $\ell$ from node $i$ to $j$.
Then $\WVecEC^{(\ell)}$ for $\ell \geq 3$ can be calculated via following recurrence relation:
\begin{align}
	\WVecEC^{(\ell)} &= \WVec \WVecEC^{(\ell-1)} - (\vec D - \vec I) \WVecEC^{(\ell-2)}
	\label{eq:recurrence}
\end{align}
with starting values
$\WVecEC^{(1)} = \WVec$ and
$\WVecEC^{(2)} = \WVec^2 - \vec D$.
\qed
\end{proposition}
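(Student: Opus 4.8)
The plan is to prove \cref{eq:recurrence} by a direct counting argument on non-backtracking walks, built by prepending a single edge to shorter non-backtracking walks and then correcting the resulting over-count. Throughout I read $\WVec$ as the $0/1$ adjacency matrix of a simple graph (no self-loops), so that ``number of paths'' is literally an integer count; the weighted version follows from the same algebra with $\vec D$ the weighted degree matrix. First I would dispose of the base cases: $\WVecEC^{(1)}=\WVec$ is immediate since a single edge cannot backtrack, and $\WVecEC^{(2)}=\WVec^2-\vec D$ holds because among the $\W^2_{ij}$ walks $i\to k\to j$ the only backtracking ones are those with $j=i$, of which there are exactly $d_i=D_{ii}$.

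For the inductive step, fix nodes $i,j$ and $\ell\ge 3$ and assume $\WVecEC^{(\ell-1)}$ and $\WVecEC^{(\ell-2)}$ count non-backtracking walks correctly. The entry $\big(\WVec\WVecEC^{(\ell-1)}\big)_{ij}=\sum_k \W_{ik}\big(\WVecEC^{(\ell-1)}\big)_{kj}$ counts walks $i=u_1,u_2,\dots,u_{\ell+1}=j$ whose suffix $u_2,\dots,u_{\ell+1}$ is non-backtracking; since the full non-backtracking condition for a length-$\ell$ walk is $u_t\neq u_{t+2}$ for $t=1,\dots,\ell-1$, such a walk is non-backtracking as a whole iff the one remaining constraint $u_1\neq u_3$ also holds. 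Hence $\big(\WVecEC^{(\ell)}\big)_{ij}$ equals $\big(\WVec\WVecEC^{(\ell-1)}\big)_{ij}$ minus the number of ``bad'' walks, i.e.\ those with $u_3=u_1=i$. The heart of the proof is counting the bad walks. A bad walk has the shape $i\to u_2\to i\to u_4\to\cdots\to j$, so it is determined by a choice of a neighbor $u_2$ of $i$ together with a non-backtracking walk of length $\ell-2$ from $i$ to $j$ (the tail $u_3 u_4\cdots u_{\ell+1}$, which is non-backtracking because its own NB constraints form a subset of those already imposed on $u_2\cdots u_{\ell+1}$), subject to the single extra constraint $u_4\neq u_2$ inherited from $u_2\cdots u_{\ell+1}$ being non-backtracking. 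For each neighbor $u_2$ of $i$, the number of admissible tails is $\big(\WVecEC^{(\ell-2)}\big)_{ij}-c_i(u_2)$, where $c_i(u_2)$ is the number of length-$(\ell-2)$ NB walks from $i$ to $j$ whose first step goes to $u_2$; summing over the $d_i$ neighbors of $i$ and using $\sum_{u_2}c_i(u_2)=\big(\WVecEC^{(\ell-2)}\big)_{ij}$ --- valid because $\ell-2\ge 1$, so every such walk has a well-defined first edge out of $i$ --- gives a bad-walk count of $(d_i-1)\big(\WVecEC^{(\ell-2)}\big)_{ij}=\big((\vec D-\vec I)\WVecEC^{(\ell-2)}\big)_{ij}$, which is precisely \cref{eq:recurrence}.

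The main obstacle --- really the only subtle point --- is getting this bad-walk count right, in particular why the coefficient is $\vec D-\vec I$ rather than $\vec D$: inserting the backtracking excursion $i\to u_2\to i$ in all $d_i$ ways over-counts, because if the continuation's first step points to $u_2$ (i.e.\ $u_4=u_2$) then the suffix $u_2\cdots u_{\ell+1}$ itself backtracks at its first step and the walk was never counted in $\big(\WVec\WVecEC^{(\ell-1)}\big)_{ij}$ to begin with; excluding that one forbidden choice per continuation is exactly what turns $\vec D$ into $\vec D-\vec I$. Finally, unrolling the recurrence against the base cases reproduces the explicit forms quoted in the text, e.g.\ $\WVecEC^{(3)}=\WVec^3-(\vec D\WVec+\WVec\vec D-\WVec)$, which I would include as a sanity check.
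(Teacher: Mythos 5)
Your proof is correct and is essentially the paper's own argument in mirror image: the paper appends the extra edge at the terminal node $j$ and subtracts the walks that backtrack on the last step, counted as $(d_j-1)\big(\WVecEC^{(\ell-2)}\big)_{ij}$ because the excursion from $j$ must avoid the edge on which the length-$(\ell-2)$ path arrived, whereas you prepend the edge at $i$ and count the bad walks with $u_3=i$ as $(d_i-1)\big(\WVecEC^{(\ell-2)}\big)_{ij}$ via a summation exchange; the two bookkeepings are equivalent (by symmetry of $\WVec$ the resulting right- and left-multiplied recurrences coincide), and your base cases match the paper's. The only quibble is your parenthetical claim about weighted graphs: with non-$0/1$ weights the backtracking excursion $i\to u_2\to i$ contributes the squared weight $\W_{iu_2}^{2}$, so the same algebra with $\vec D$ taken as the weighted degree matrix does not go through (already $\WVecEC^{(2)}$ would need $\mathrm{diag}(\WVec^2)$ instead of $\vec D$) -- but this aside is immaterial here, since the proposition concerns unweighted counts of non-backtracking paths.
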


Calculating $\PVecEstEC^{(\ell)}$ requires multiple matrix multiplications.
While matrix multiplication is associative,
the order in which we perform the multiplications
considerably affects the time to evaluate a product.
A straight-forward evaluation strategy quickly becomes infeasible for increasing $\ell$.

We illustrate with $\vec M^{(3)}$: a default strategy is to first calculate
$\WVec^{(3)} = \WVec (\WVec \WVec)$ and then
$\vec M^{(3)} = \EVec^\transpose (\WVec^{(3)} \EVec)$.
The problem is that the intermediate result $\WVec^{(\ell)}$ becomes dense.
Concretely, if $\WVec$ is sparse with $m$ entries and average node degree $d$,
then $\WVec^2$ has in the order of $d$ more entries ($\sim d m$),
and $\WVec^\ell$ exponential more entries ($\sim d^{\ell-1} m$).
Thus intuitively, we like to choose the evaluation order so that intermediate results are as sparse as possible.\footnote{
This is well known in linear algebra and is analogous to query optimization in relational algebra:
The two query plans
$\pi_y \big(R(x) \Join S(x,y)\big)$
and
$R(x) \Join\big(\pi_y S(x,y)\big)$
return the same values, but the latter can be considerably faster.
Similarly, the ``evaluation plans''
$(\WVec \WVec) \EVec$
and
$\WVec (\WVec \EVec)$
are algebraically equivalent, but the latter can be considerably faster
for $n \gg k$.
}
The ideal way to calculate the expressions is to keep $n \times k$ intermediate matrices as in
$\vec M^{(3)} = \EVec^\transpose (\WVec (\WVec (\WVec \EVec))$.

Our solution is thus to re-structure the calculation in a way that minimizes the result sizes of intermediate results and caches results used across estimators with different $\ell$.
The reason of the scalability of our approach is that we can calculate all $\ell_{\max}$ graph summaries very efficiently.

\begin{algorithmN}[Factorized path summation]\label{def:factorizedpathsummation}
Iteratively calculate the graph summaries $\PVecEstEC^{(\ell)}$, for $\ell \in [\ell_{\max}]$ as follows:
\begin{enumerate}

\item
Starting from
$\vec N_{\NB}^{(1)} = \WVec \EVec$
and
$\vec N_{\NB}^{(2)} = \WVec \vec N_{\NB}^{(1)} - \vec D \EVec$,
iteratively calculate
$\vec N_{\NB}^{(\ell)} =
\WVec \vec N_{\NB}^{(\ell-1)} - (\vec D - \vec I)
\vec N_{\NB}^{(\ell-2)}$.

\item Calculate $\vec M_{\NB}^{(\ell)} = \EVec^\transpose  \vec N_{\NB}^{(\ell)}$.

\item Calculate $\PVecEstEC^{(\ell)}$ from normalizing ${\vec M}^{(\ell)}$
with \cref{eq:variant1}.

\end{enumerate}
\end{algorithmN}

\begin{proposition}[Factorized path summation]\label{prop:complexitystatistics}
	\Cref{def:factorizedpathsummation} calculates 
	all
	$\PVecEstEC^{(\ell)}$ for $\ell \in [\ell_{\max}]$
	in ${\O(m k \ell_{\max})}$.	
\end{proposition}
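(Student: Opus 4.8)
The plan is to prove the claim in two short steps: first a correctness check that \Cref{def:factorizedpathsummation} computes the intended objects, and then the running-time bound, which is the part carrying actual content. For correctness I would multiply the recurrence of \Cref{prop:nonbacktracking} on the right by $\EVec$. Since the sequence $\WVecEC^{(\ell)}\EVec$ then satisfies exactly the recurrence $\WVec(\WVecEC^{(\ell-1)}\EVec) - (\vec D - \vec I)(\WVecEC^{(\ell-2)}\EVec)$ with base cases $\WVec\EVec$ and $(\WVec^2 - \vec D)\EVec = \WVec(\WVec\EVec) - \vec D\EVec$, a one-line induction on $\ell$ shows that the matrix $\vec N_\NB^{(\ell)}$ produced in Step~1 equals $\WVecEC^{(\ell)}\EVec$. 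Hence $\vec M_\NB^{(\ell)} = \EVec^\transpose\WVecEC^{(\ell)}\EVec$ in Step~2, and Step~3 returns its Variant-1 normalization, which is precisely $\PVecEstEC^{(\ell)}$.

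For the complexity I would first record the shape and sparsity of every matrix that appears, since that is where the whole argument lives. The matrix $\WVec$ is $n\times n$ with $\O(m)$ nonzeros; $\EVec$ is $n\times k$ with at most $n$ nonzeros (one per labeled node), and likewise $\EVec^\transpose$ is $k\times n$ with $\O(n)$ nonzeros; $\vec D$ and $\vec I$ are diagonal. Crucially, each intermediate $\vec N_\NB^{(\ell)}$ is an $n\times k$ matrix: it may be fully dense, but it never has more than $nk$ entries. This is the point of the reformulation via \Cref{prop:nonbacktracking}: we never materialize $\WVecEC^{(\ell)}$, which would be an $n\times n$ object with up to $\sim d^{\ell-1}m$ nonzeros, because the recurrence lets us push the right-multiplication by $\EVec$ inside and only ever store and multiply $n\times k$ blocks.

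Then I would bound the three steps for a fixed $\ell$. In Step~1 the product $\WVec\vec N_\NB^{(\ell-1)}$ of a sparse $n\times n$ matrix with $\O(m)$ nonzeros times an $n\times k$ matrix costs $\O(mk)$ (each nonzero $\W_{ij}$ induces $k$ multiply--adds into row $i$ of the result); the product $(\vec D - \vec I)\vec N_\NB^{(\ell-2)}$ with a diagonal matrix costs $\O(nk)$, and the subtraction of two $n\times k$ matrices costs $\O(nk)$; the two base cases are no more expensive. In Step~2, $\EVec^\transpose\vec N_\NB^{(\ell)}$ multiplies a sparse $k\times n$ matrix with $\O(n)$ nonzeros by an $n\times k$ matrix, costing $\O(nk)$. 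In Step~3, normalizing a $k\times k$ matrix by its row sums via \cref{eq:variant1} costs $\O(k^2)$. Using the mild assumption $m = \Omega(n)$ (isolated nodes can be removed in $\O(n)$ preprocessing, or are absent by assumption) and $k \le n$, the per-iteration cost is dominated by the sparse--dense product and is therefore $\O(mk)$; and because Step~1 reuses the cached $\vec N_\NB^{(\ell-1)}$ and $\vec N_\NB^{(\ell-2)}$ rather than recomputing them, iterations do not compound. Summing over $\ell \in [\ell_{\max}]$ gives $\O(mk\ell_{\max})$.

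I expect the only real subtlety to be bookkeeping rather than ideas: making sure the $n\times k$ intermediate does not silently blow up to an $n\times n$ object at any stage, and checking that the $\O(nk)$-cost pieces (the diagonal multiply, the subtraction, and the $\EVec^\transpose$ product) are all dominated by the $\O(mk)$-cost sparse--dense multiplication under $m = \Omega(n)$. There is no deep argument here — the algebraic reformulation of \Cref{prop:nonbacktracking}, which is already established, is what makes the bound go through.
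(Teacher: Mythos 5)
Your proposal is correct and follows essentially the same route as the paper: correctness via \cref{prop:nonbacktracking} and the identity $\vec N_{\NB}^{(\ell)} = \WVecEC^{(\ell)}\EVec$, then a per-iteration cost breakdown where the sparse--dense product $\WVec \vec N_{\NB}^{(\ell-1)}$ costs $\O(mk)$ and dominates the $\O(nk)$ and $\O(k^2)$ steps (the paper invokes $m > n > k$ where you use $m=\Omega(n)$, $k\le n$), summed over $\ell_{\max}$ iterations.
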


\begin{example}[Factorized path summation]
\label{ex:efficientCalculation}
	Using the setup from \cref{ex:non-backtracking},
	\cref{Fig_Scaling_Hrow_4} shows the times for evaluating
	$\WVec^{\ell}$ against our more efficient evaluation strategy for $\PVecEstEC^{(\ell)}$.
	Notice the three orders of magnitude speed-up for $\ell=5$.
	Also notice that $\PVecEstEC^{(8)}$ summarizes statistics over more than $10^{14}$ paths
	in a graph with 100k edges
	in less than $0.02$ sec.
\end{example}

\subsection{Gradient-based optimization}

Our objective to find a symmetric, doubly stochastic matrix that minimizes 
\cref{eq:DCE_energyNB} can be represented as
\begin{align}
\HVecEst &= \argmin_{\HVec}
	 E(\HVec)
	 \textit{ s.t. }
	 \HVec \vec 1 = \vec 1, \HVec^\transpose=\HVec
\label{eq:Hminimization}
\end{align}

\noindent
For $\ell_{\max}>1$, the function is non-convex and unlikely to have a closed-form solution.
We thus minimize the function with gradient descent.
However, we would require to calculate the gradient with regard to the free parameters.

\begin{proposition}[Gradient]\label{prop:gradient}
The gradient for \cref{eq:Hminimization}
and energy function \cref{eq:DCE_energyNB}
with regard to the free parameters $H_{ij}, i \leq j, j \neq k$
is the dot product $\vec S \vec G$ calculated from
\begin{align*}
	\vec G
	&=
	2
	\sum_{\ell=1}^{\ell_{\max}} w_{\ell}
	\Big(
	\ell \HVec^{2 \ell-1}
		- \sum_{r=0}^{\ell-1} \HVec^r \HVecEst^{(\ell)} \HVec^{\ell-r-1}
	\Big)
\\
\!\!\!\!\!\vec S^{ij} \! &= \!\begin{cases}
	\vec J^{ij} \!+ \vec J^{ji}
		\!- \vec J^{ik} \!- \vec J^{kj} \!- \vec J^{jk} \!- \vec J^{ki}
		\!+ 2 \vec J^{kk},
		&\!\!\!\!\text{if } i < j, j \neq k \\
	\vec J^{ij} \!- \vec J^{ik} \!- \vec J^{kj}  \!+ \vec J^{kk},
		&\!\!\!\!\text{if } i = j, j \neq k
	\end{cases}
\end{align*}
where $\vec J^{ij}$ is single-entry matrix with 1 at $(i, j)$ and 0 elsewhere.
\end{proposition}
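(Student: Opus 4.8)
The plan is to compute the gradient in two stages. \textbf{Stage~1:} treat $E$ in \cref{eq:DCE_energyNB} as a smooth function of an \emph{unconstrained} matrix variable $\HVec\in\R^{k\times k}$ and derive its full Frobenius gradient $\vec G=\partial E/\partial\HVec$; this is the purely analytic part. \textbf{Stage~2:} apply the chain rule through the reparameterization \cref{eq:H_parameterization} to push $\vec G$ down to the $k^{*}$ free coordinates $h_{p}=H_{ij}$ ($i\le j$, $j\neq k$); this reduces to contracting $\vec G$ with the constant matrices $\partial\HVec/\partial h_{p}=\vec S^{ij}$, which is pure bookkeeping of the linear dependencies in \cref{eq:H_parameterization}. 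Note that \cref{eq:H_parameterization} automatically enforces both $\HVec^{\transpose}=\HVec$ and $\HVec\vec 1=\vec 1$, so minimizing $E$ over $h\in\R^{k^{*}}$ \emph{is} the constrained problem \cref{eq:Hminimization}, and its (unconstrained-in-$h$) gradient is exactly what we want.

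\textbf{Stage~1.} Write $\langle X,Y\rangle\define\tr(X^{\transpose}Y)$ for the Frobenius inner product and abbreviate $\vec A^{(\ell)}\define\PVecEstEC^{(\ell)}$ for the observed length-$\ell$ statistics matrix appearing in \cref{eq:DCE_energyNB}. The only matrix-calculus fact needed is the power rule for a matrix perturbation,
\begin{align*}
	\frac{d}{dt}\Big|_{t=0}(\HVec+t\vec V)^{\ell}=\sum_{r=0}^{\ell-1}\HVec^{r}\vec V\HVec^{\ell-1-r}.
\end{align*}
Expanding $\|\HVec^{\ell}-\vec A^{(\ell)}\|^{2}=\tr\big[(\HVec^{\transpose})^{\ell}\HVec^{\ell}\big]-2\tr\big[(\HVec^{\transpose})^{\ell}\vec A^{(\ell)}\big]+\mathrm{const}$, differentiating in a direction $\vec V$, applying the power rule, and using cyclicity of the trace (and $\tr(X\vec V^{\transpose})=\langle X,\vec V\rangle$) to collect every term into the form $\langle\,\cdot\,,\vec V\rangle$ yields a differential $\langle\vec G_{\ell},\vec V\rangle$. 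It is important to differentiate \emph{before} imposing symmetry, since $\vec A^{(\ell)}$ need not be symmetric under the variant-1 normalization; only afterwards do we evaluate at a feasible point, where $\HVec^{\transpose}=\HVec$, so that all transposes on powers of $\HVec$ disappear and
\begin{align*}
	\vec G_{\ell}=2\Big(\sum_{r=0}^{\ell-1}\HVec^{r}\HVec^{\ell}\HVec^{\ell-1-r}-\sum_{r=0}^{\ell-1}\HVec^{r}\vec A^{(\ell)}\HVec^{\ell-1-r}\Big).
\end{align*}
Since $\HVec^{r}\HVec^{\ell}\HVec^{\ell-1-r}=\HVec^{2\ell-1}$ for every $r$, the first sum collapses to $\ell\,\HVec^{2\ell-1}$, and $\vec G=\sum_{\ell}w_{\ell}\vec G_{\ell}$ is exactly the matrix in the statement.

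\textbf{Stage~2.} By the chain rule, for a free coordinate $h_{p}=H_{ij}$,
\begin{align*}
	\frac{\partial E}{\partial h_{p}}=\sum_{a,b}\frac{\partial E}{\partial H_{ab}}\,\frac{\partial H_{ab}}{\partial h_{p}}=\Big\langle\vec G,\ \frac{\partial\HVec}{\partial h_{p}}\Big\rangle,
\end{align*}
so it remains to identify $\partial\HVec/\partial h_{p}$ as one constant matrix $\vec S^{ij}$. For this I go case by case through \cref{eq:H_parameterization} and record \emph{every} entry $H_{ab}$ that contains $H_{ij}$, with its coefficient: the entry itself ($\vec J^{ij}$); its mirror $H_{ji}$ ($\vec J^{ji}$); the last-column entries $H_{ik},H_{jk}$ and last-row entries $H_{kj},H_{ki}$, each contributing $-1$ because they equal $1-\sum_{\ell<k}(\cdot)$; and the corner $H_{kk}=2-k+\sum_{\ell,r<k}H_{\ell r}$, which picks up $H_{ij}$ with coefficient $+1$ for each occurrence among the summed below-$k$ indices --- twice when $i<j$ (via both $H_{ij}$ and $H_{ji}$), once when $i=j$. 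Collecting these single-entry matrices $\vec J^{ab}$ with their signs gives exactly the two cases of $\vec S^{ij}$ in the statement, whence $\partial E/\partial h_{p}=\langle\vec G,\vec S^{ij}\rangle=\tr\big((\vec S^{ij})^{\transpose}\vec G\big)$, i.e.\ the claimed contraction $\vec S\vec G$.

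I expect Stage~1 to be mechanical once the power rule is in place, the single subtlety being not to symmetrize prematurely (so the non-symmetric $\vec A^{(\ell)}$ is handled correctly before specializing to $\HVec^{\transpose}=\HVec$). The main obstacle is the bookkeeping in Stage~2: correctly enumerating which dependent entries of \cref{eq:H_parameterization} a given free entry feeds into --- in particular that an off-diagonal free entry enters the corner $H_{kk}$ twice, which is the source of the $+2\vec J^{kk}$ term, whereas a diagonal free entry enters it only once.
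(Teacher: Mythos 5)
Your proposal is correct and follows essentially the same route as the paper: first compute the unconstrained matrix gradient $\vec G$ of $\sum_\ell w_\ell\|\HVec^\ell-\PVecEstEC^{(\ell)}\|^2$ via the trace/power rule (specializing to a symmetric point to collapse $\sum_r \HVec^r\HVec^\ell\HVec^{\ell-1-r}$ to $\ell\HVec^{2\ell-1}$), then push it to the free parameters through the chain rule with the structure matrices $\vec S^{ij}=\partial\HVec/\partial H_{ij}$ derived from the parameterization \cref{eq:H_parameterization}. Your Stage~2 bookkeeping (mirror entry, the four $-1$ contributions in the last row/column, and $+2\vec J^{kk}$ vs.\ $+\vec J^{kk}$ for off-diagonal vs.\ diagonal free entries) is exactly the derivation the paper compresses into ``several transformations,'' and it matches the stated $\vec S^{ij}$.
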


\subsection{DCE with restarts (DCEr)}\label{sec:RandomInit}

Whereas MCE solves a convex optimization problem, the objective function for DCE
becomes non-convex for a sparsely labeled graph (i.e.\ $f \ll 1$).
Given a number of classes $k$,
DCE optimizes over $k^* = \Theta(k^2)$ free parameters.
Since the parameter space have several local minimas,
the optimization should be {restarted} from multiple points in the $k^*$-dimensional parameter space,
in order to find the global minimum.
Thus DCEr optimizes the same energy function \cref{eq:DCE_energy} as DCE, but
with \emph{multiple restarts} from different initial
values.

Here our two-step approach of separating the estimation into two steps
(recall \cref{Fig_TwoSteps})
becomes an asset:
Because
the optimizations run on small sketches of the graph that are independent of the graph size,
starting multiple optimizations is actually cheap.
For small $k$, restarting from within each of the $2^{k^*}$
possible hyper-quadrants of  parameter space
(each free parameter being $\frac{1}{k} \pm \delta$ for some small $\delta < \frac{1}{k^2}$)
is negligible as compared to the graph summarization:
This is so as for increasing $m$ (large graphs),
calculation of the graph statistics dominates the cost for optimization
(see \cref{Fig_Timing_3}, where DCE and DCEr are effectively equal for large graphs).
For higher $k$, our extensive experiments show that
in practice \cref{eq:DCE_energy} has nice enough properties
that
just restarting from a limited number of restarts
usually leads to a compatibility matrix that achieves the optimal labeling accuracy (see \cref{Fig_fast_optimal_restarts_Accv2_107} and discussion in \cref{sec:scalability}).

\subsection{Complexity Analysis}\label{sec:complexityanalysis}

\Cref{prop:complexitystatistics}
shows that
our factorized approach for calculating all $\ell_{\max}$ graph estimators $\PVecEstEC^{(\ell)}$ is
${\O(m k \ell_{\max})}$ 
and thus is linear in the size of the graph (number of edges).
The second step of estimating the compatibility matrix $\HVec$ is then
\emph{independent of the graph size}
and dependents only on $k$ and the number of restarts $r$.
The number of free parameters in the optimization is $k^* = \O(k^2)$,
and calculating the Hessian is quadratic in this number. Thus, the second step is 
${\O(k^4 r)}$.
\section{Experiments}\label{sec:experiments}

We designed the experiments to answer two key questions: 
(1) How \emph{accurate} is our approach in predicting the remaining nodes (and how sensitive is it with respect to our single hyperparameter)?
(2) How \emph{fast} is it and how does it scale? 

We use two types of datasets:
we first perform \emph{carefully controlled experiments}
on synthetic datasets that allow us 
to change various graph parameters. 
We then use \emph{8 real-world datasets} with high levels of class imbalance,
various mixes between homophily and heterophily, and extreme skews of compatibilities.
There, we verify that our methods also work well on a variety of datasets which we did not generate.

Using both datasets, we show that: 
(1) Our method ``Distant Compatibility Estimation with restarts'' (DCEr) is 
largely insensitive to the choice of its hyperparameter and consistently 
competes with the labeling accuracy of using the ``true'' compatibilities (gold standard).
(2) DCEr is faster than label propagation with LinBP \cite{DBLP:journals/pvldb/GatterbauerGKF15} for large graphs, which makes it a simple and cheap pre-processing step (and thereby again rendering heuristics and domain experts obsolete).

\introparagraph{Synthetic graph generator}
We first use a completely controlled simulation environment. 
This setup allows us to {systematically change} parameters of the planted compatibility matrix and see the effect on the accuracy of the techniques as result of such changes.
We can thus make observations from many repeated experiments that would not be
be feasible otherwise.
Our synthetic graph generator is a variant of the widely studied stochastic block-model described in~\cite{SenGetoor:2007:LinkBasedClassification}, but 
with two crucial generalizations: 
(1) we {actively control the degree distributions} in the resulting graph (which allows us to plant more realistic power-law degree distributions); and
(2) we \emph{``plant'' exact graph properties} instead of fixing a property only in expectation.
In other words, our generator creates a desired degree distribution and 
compatibility matrix during graph generation,
which allows us to control all important parameters. 
The input to the generator is a tuple of parameters $(n, m, \bm\upalpha, \HVec, \dist)$ where
$n$ is the number of nodes,
$m$ the number of edges,
$\bm\upalpha$ the node label distribution with $\alpha(i)$ being the fraction of nodes of class $i$ ($i \in [k]$),
$\HVec$ any symmetric doubly-stochastic
compatibility matrix,
and ``\dist'' a family of degree distributions.
Notice that $\bm\upalpha$ allows us to simulate arbitrary \emph{node imbalances}.
In some of our synthetic experiments, 
we parameterize the compatibility matrix
by a value $h$ representing the ratio between min and max entries. Thus parameter $h$ models the ``{skew}'' of the potential:
For $k=3$,
$\HVec = \left[\begin{smallmatrix}
	1 & \,h & \,1  \\
	h & \,1 & \,1  \\
	1 & \,1 & \,h  \\
\end{smallmatrix}\right] / (2+h)$.
For example, 
$\HVec = \left[\begin{smallmatrix}
	0.1 & \,0.8 & \,0.1  \\
	0.8 & \,0.1 & \,0.1  \\
	0.1 & \,0.1 & \,0.8  \\
\end{smallmatrix}\right]$
for $h=8$, 
and
$\HVec = \left[\begin{smallmatrix}
	0.2 & \,0.6 & \,0.2  \\
	0.6 & \,0.2 & \,0.2  \\
	0.2 & \,0.2 & \,0.6 \\
\end{smallmatrix}\right]$
for $h=3$ (see \cref{ex:non-backtracking}).
We create graphs with $n$ nodes and, assuming class balance, assign equal fractions of nodes to one of the $k$ classes, e.g.\
$\bm\upalpha = [\frac{1}{3},\frac{1}{3},\frac{1}{3}]$.
We also vary the average degree of the graph
$d=2\frac{m}{n}$
and perform experiments assuming 
power law (coefficient $0.3$) distributions.

\introparagraph{Quality assessment}
We randomly sample a 
{stratified fraction $f$} of nodes as seed labels
(i.e.\ classes are sampled in proportion to their frequencies)
and evaluate end-to-end accuracy as the fraction of the remaining nodes that
receive correct labels.
Random sampling of seed nodes mimic real-world setting, like social networks, where people who choose to disclose their data, like gender label or political affiliation, are randomly scattered.
Notice that decreasing $f$ represents increasing \emph{label sparsity}.
To account for class imbalance, we {macro-average} the accuracy, i.e.\ we take the 
mean of the partial accuracies for each class.

\begin{figure*}[t]
\centering

\begin{subfigure}[t]{.24\linewidth}
	\includegraphics[scale=0.4]{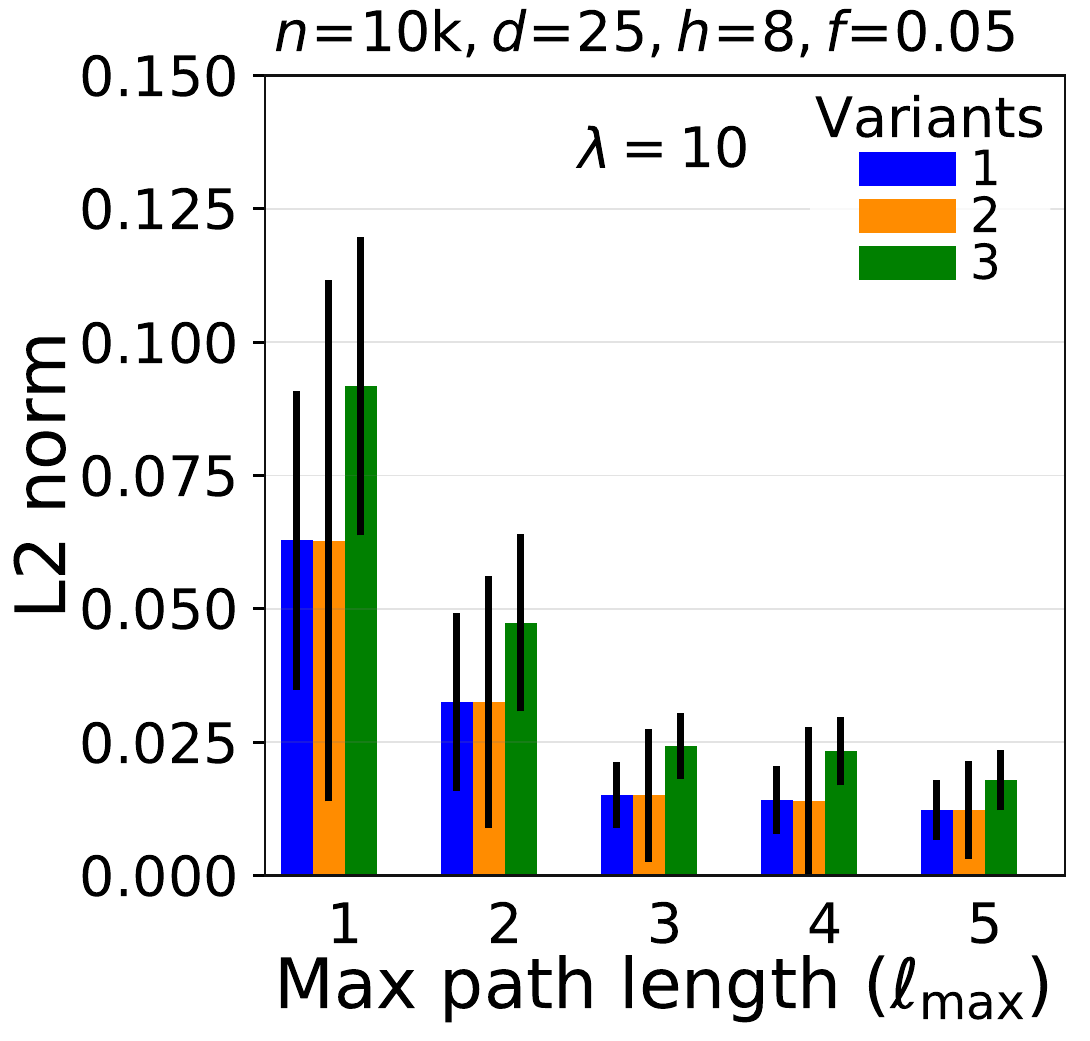}
	\caption{L2 norm DCE for 3 variants}\label{Fig_MHE_Variants_15_21}
\end{subfigure}
\begin{subfigure}[t]{.24\linewidth}
	\includegraphics[scale=0.4]{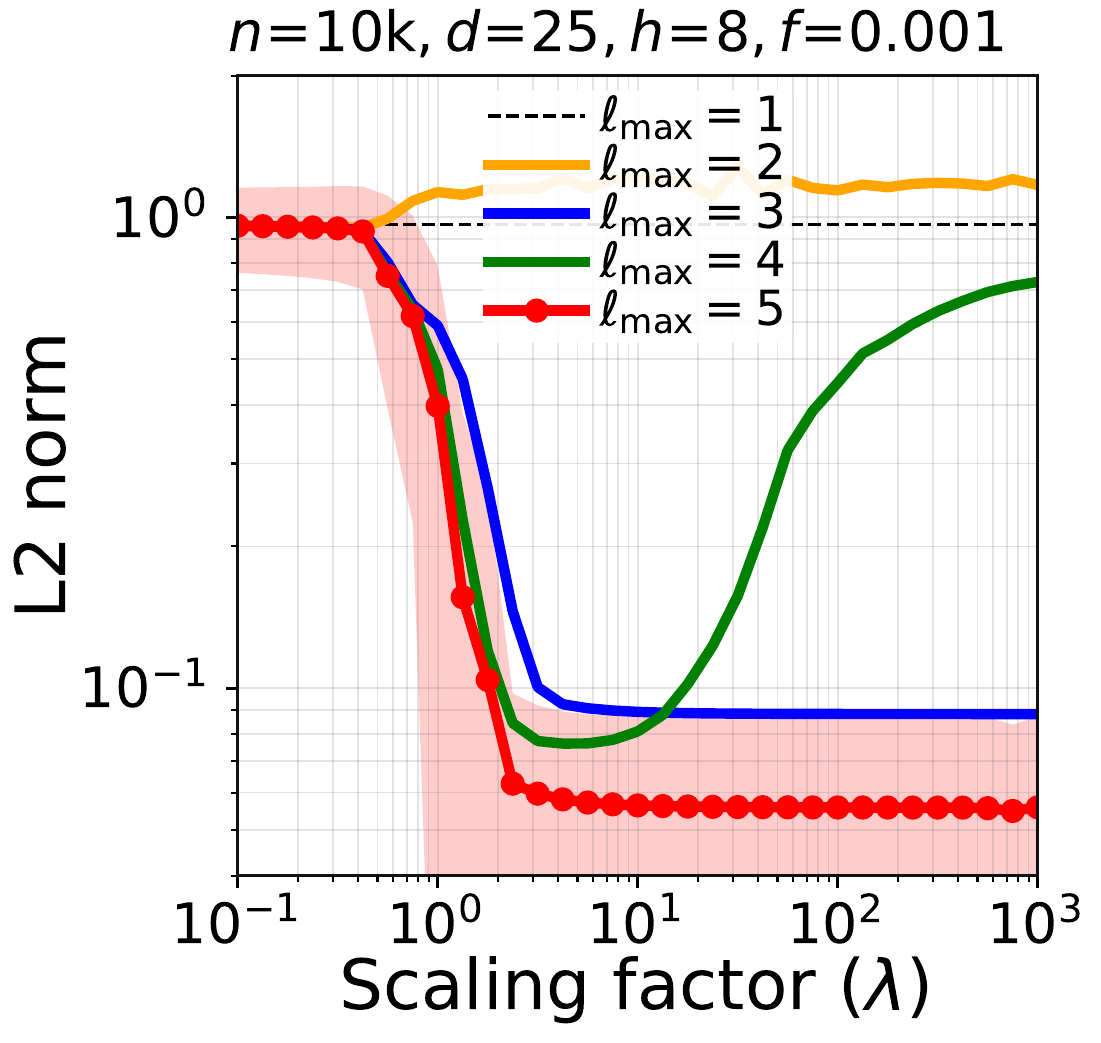}
	\caption{L2 norm DCEr $\lambda$ and $\ell_{\max}$}
	\label{Fig_MHE_ScalingFactor_133}
\end{subfigure}
\begin{subfigure}[t]{.24\linewidth}
	\includegraphics[scale=0.38]{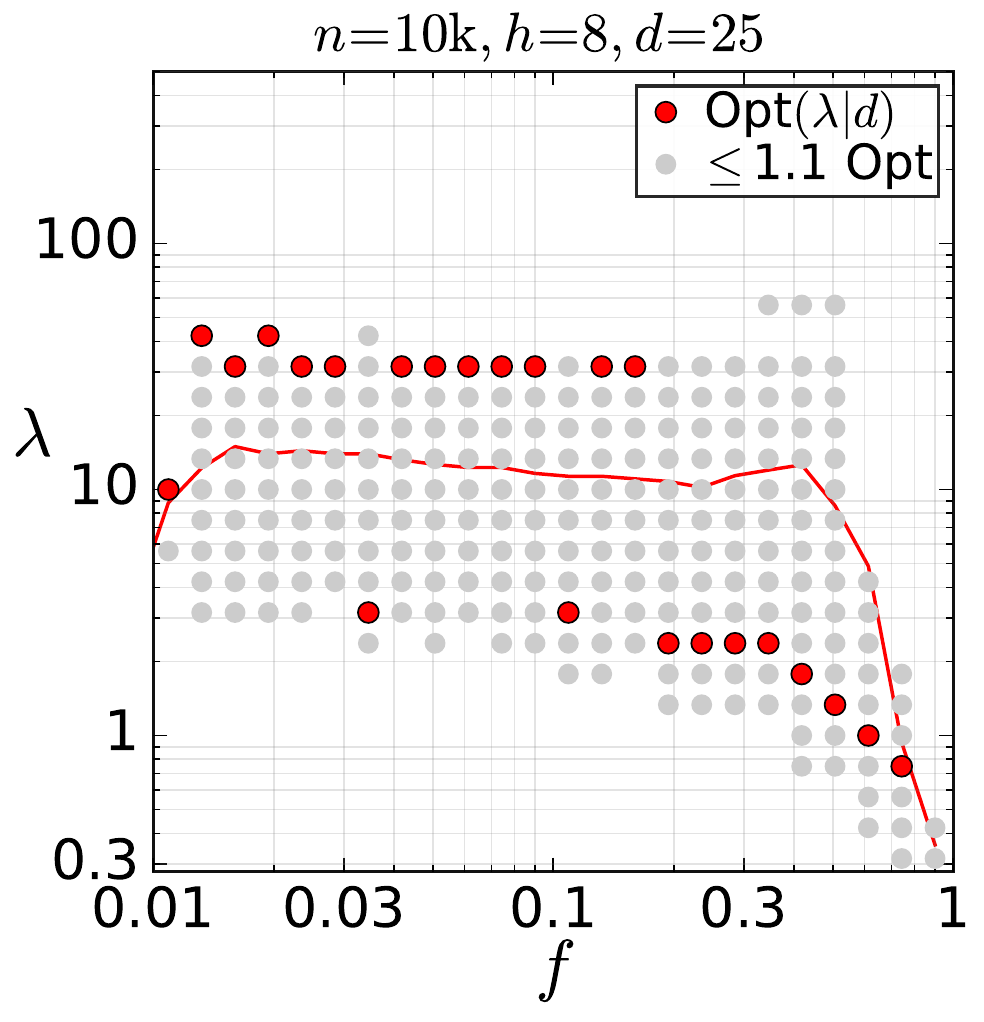}
	\caption{Robustness with $f$}
	\label{Fig_MHE_Optimal_ScalingFactor_lambda_f_3}
\end{subfigure}
\begin{subfigure}[t]{.24\linewidth}
	\includegraphics[scale=0.38]{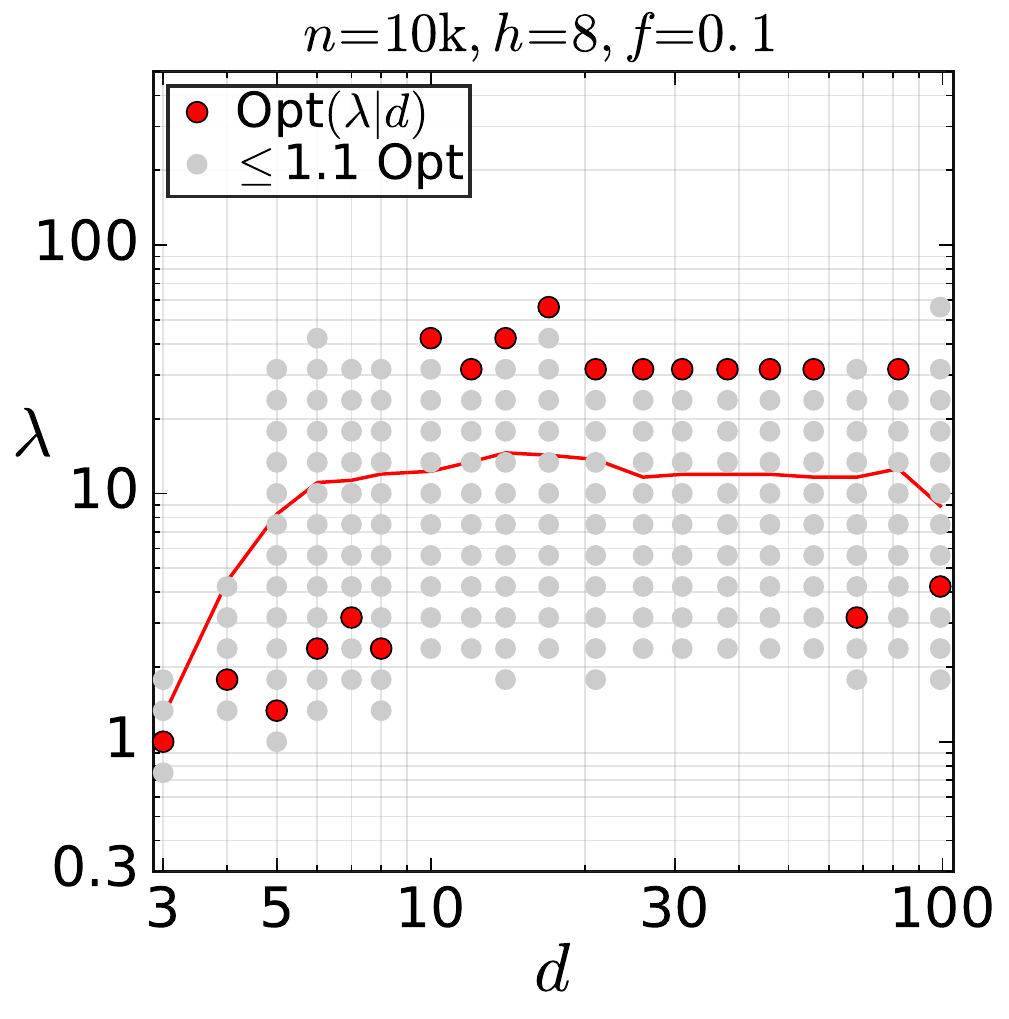}
	\caption{Robustness with $d$}
	\label{Fig_MHE_Optimal_ScalingFactor_lambda_d_2}
\end{subfigure}

\begin{subfigure}[t]{.24\linewidth}
	\includegraphics[scale=0.4]{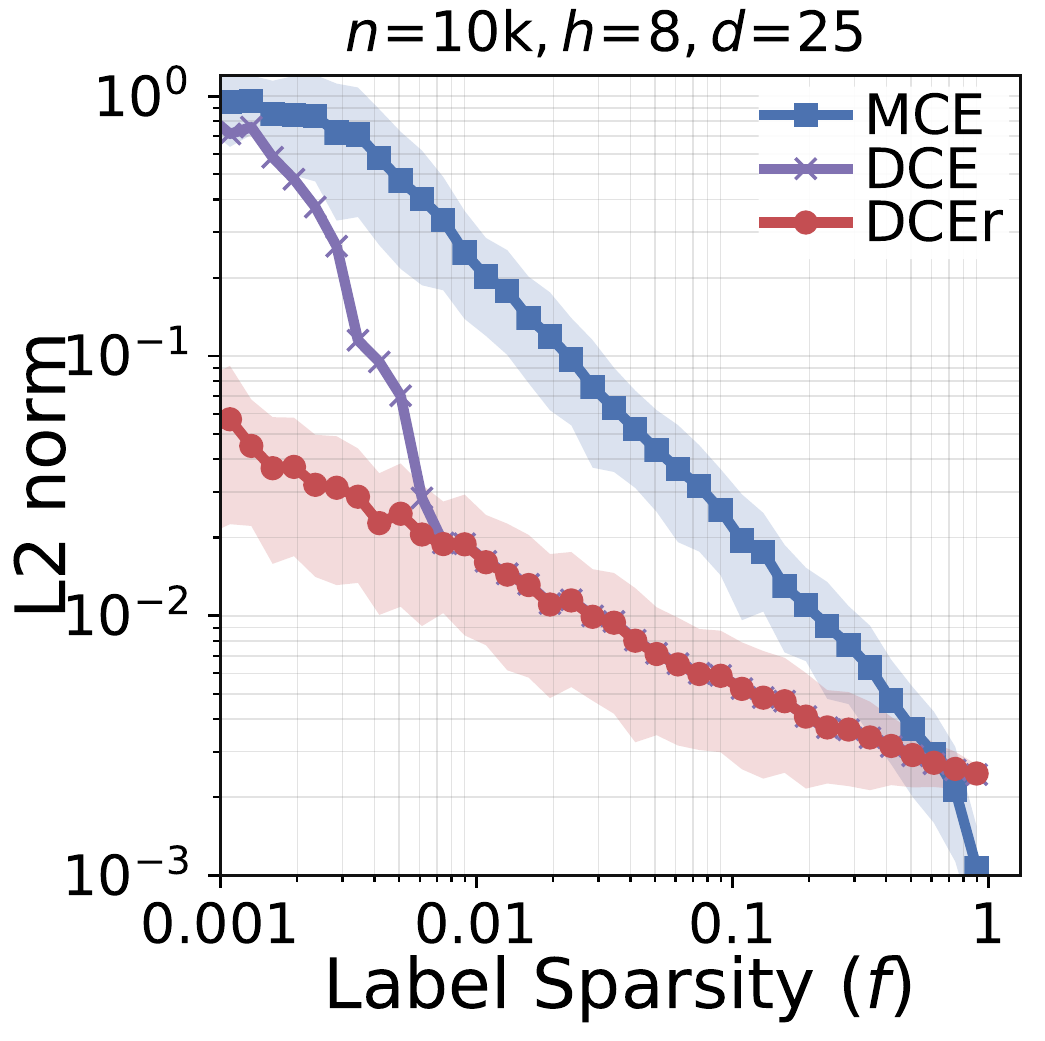}
	\caption{L2 norm MCE, DCE, DCEr}
	\label{Fig_MHE_Optimal_ScalingFactor_diff_f_lambda10_19}
\end{subfigure}
\begin{subfigure}[t]{.24\linewidth}
	\centering
		\includegraphics[scale=0.4]{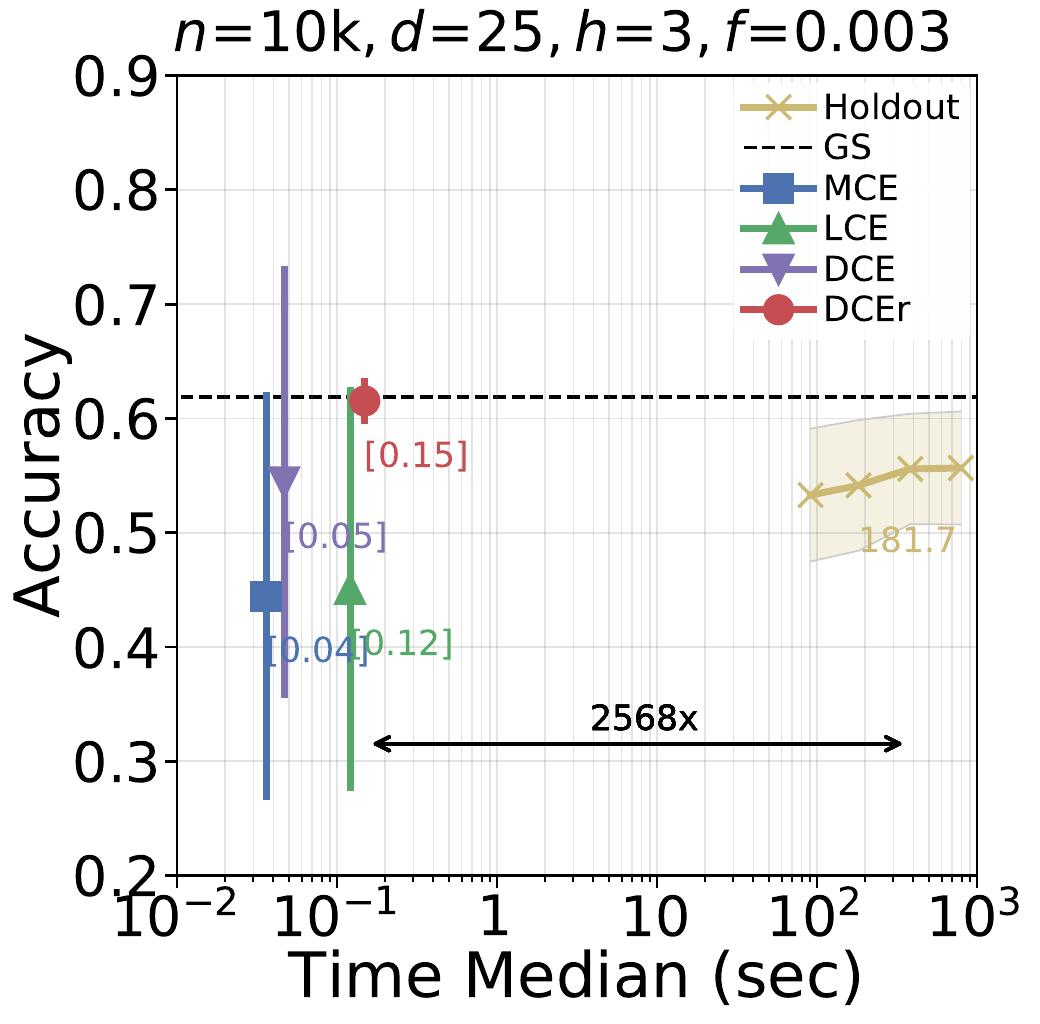}
	\caption{Accuracy vs. time}
	\label{Fig_timing_accuracy_learning_6}
\end{subfigure}
\begin{subfigure}[t]{.24\linewidth}
    \includegraphics[scale=0.4]{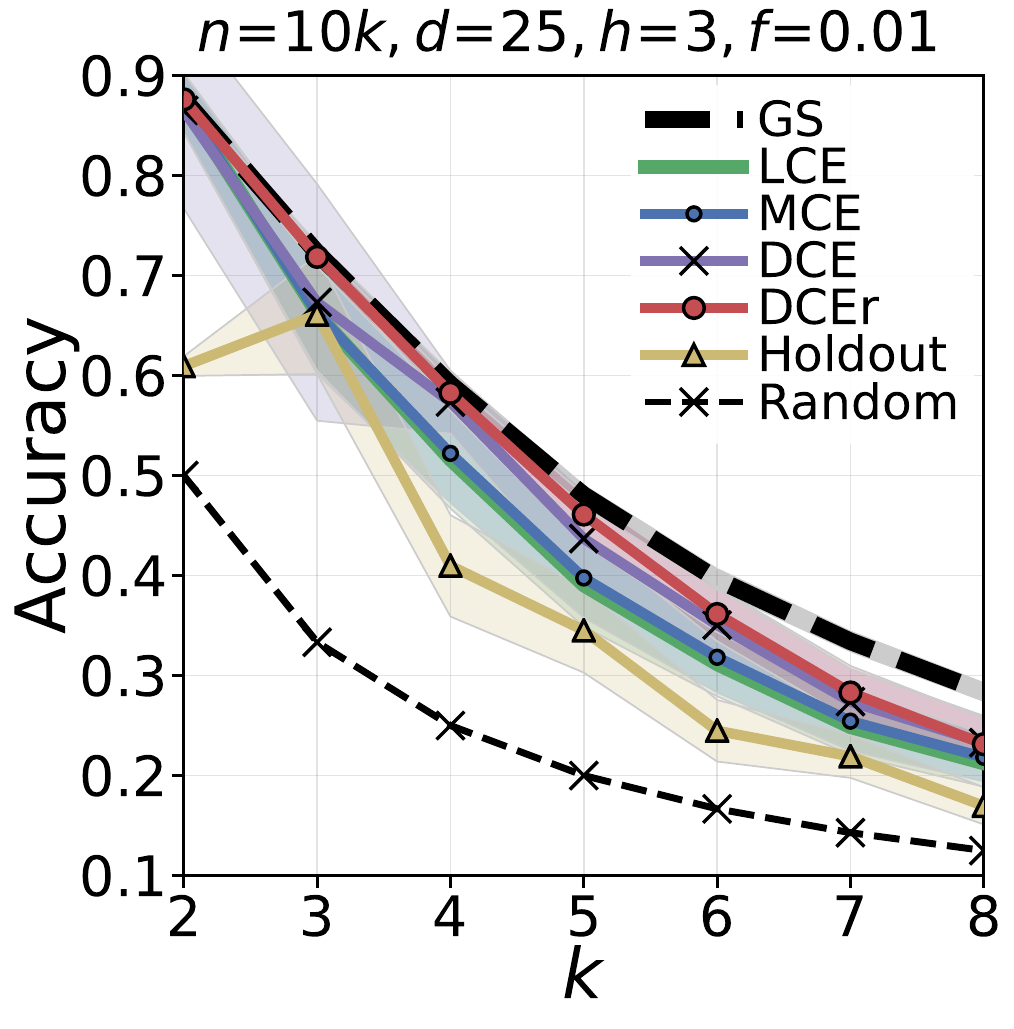}
    \caption{Estimation \& propagation
	}
    \label{Fig_End-to-End_accuracy507}
\end{subfigure}
\begin{subfigure}[t]{.24\linewidth}
	\includegraphics[scale=0.4]{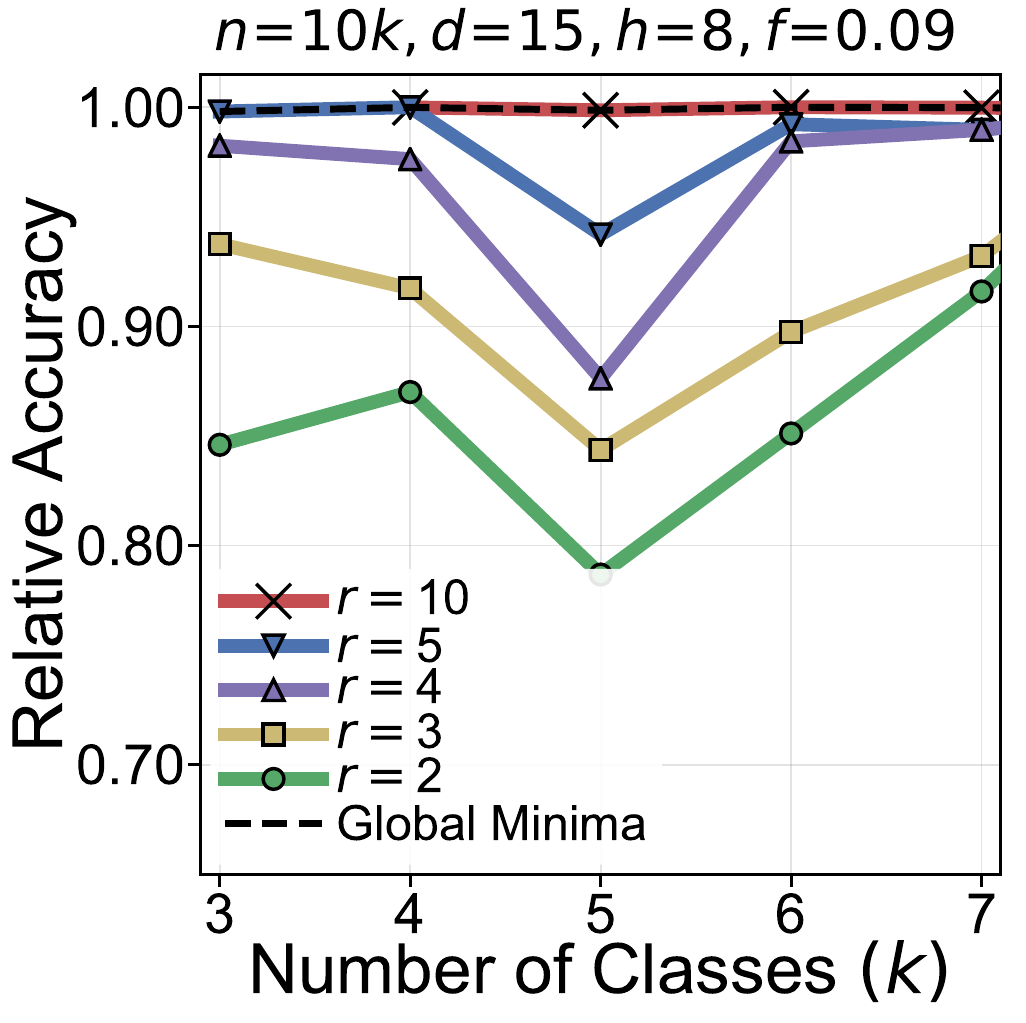}
	\caption{Restarts for DCEr}
	 \label{Fig_fast_optimal_restarts_Accv2_107}	
\end{subfigure}

\begin{subfigure}[t]{.24\linewidth}
	\includegraphics[scale=0.4]{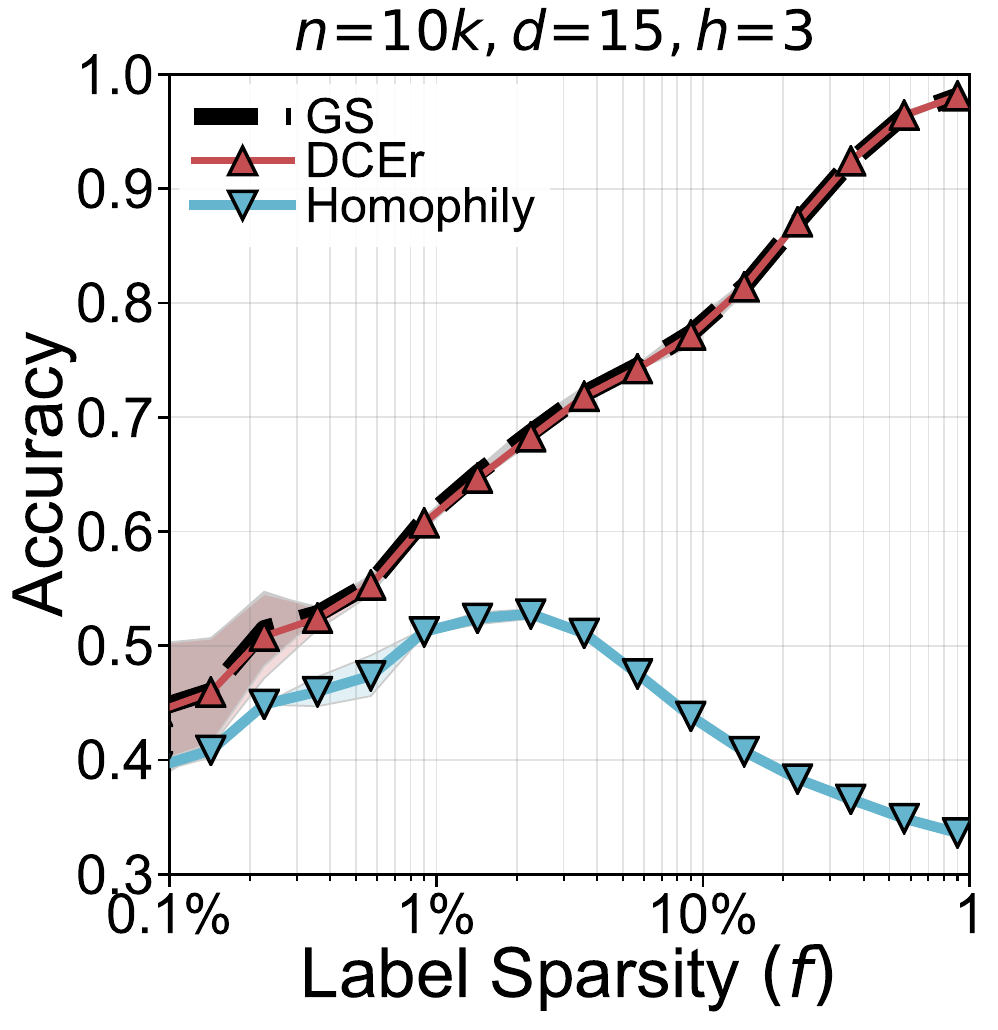}
	\caption{Homophily Comparison}	
		\label{fig:homophily_synthetic}	
\end{subfigure}
\begin{subfigure}[t]{.24\linewidth}
	\includegraphics[scale=0.4]{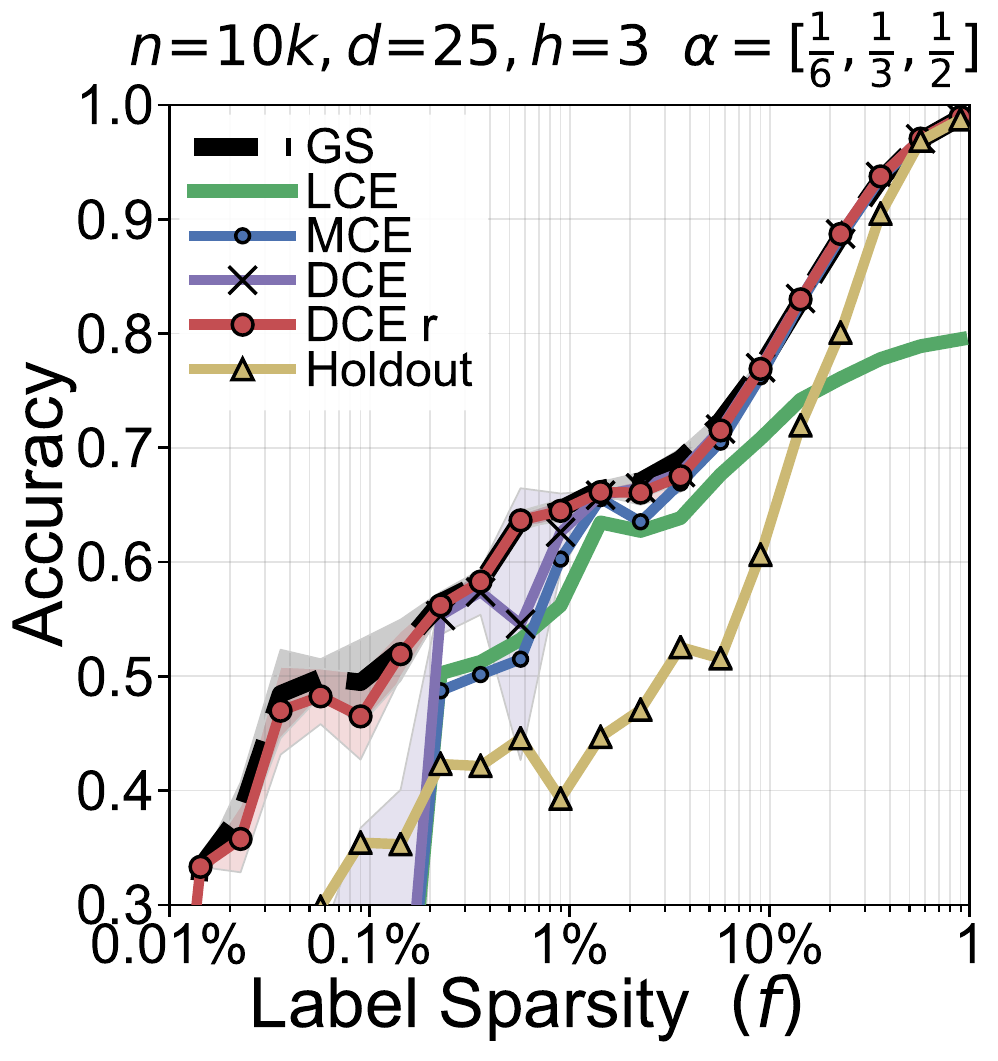}
	\caption{Estimation \& propagation}
	\label{Fig_End-to-End_accuracy213}
\end{subfigure}
\begin{subfigure}[t]{.24\linewidth}
	\includegraphics[scale=0.38]{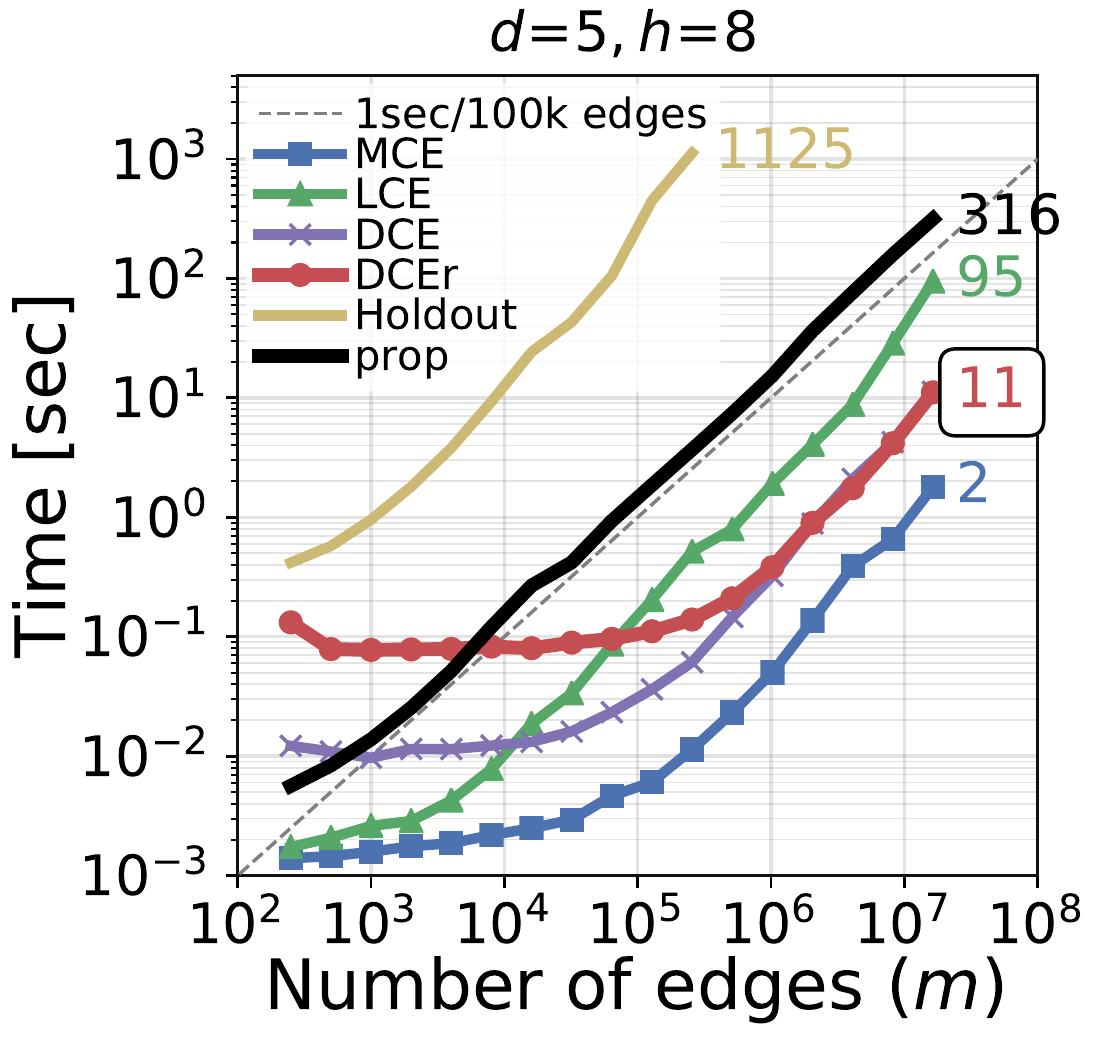}
	\caption{Scalability with $m$}\label{Fig_Timing_3}
\end{subfigure}
\begin{subfigure}[t]{.24\linewidth}
	\includegraphics[scale=0.38]{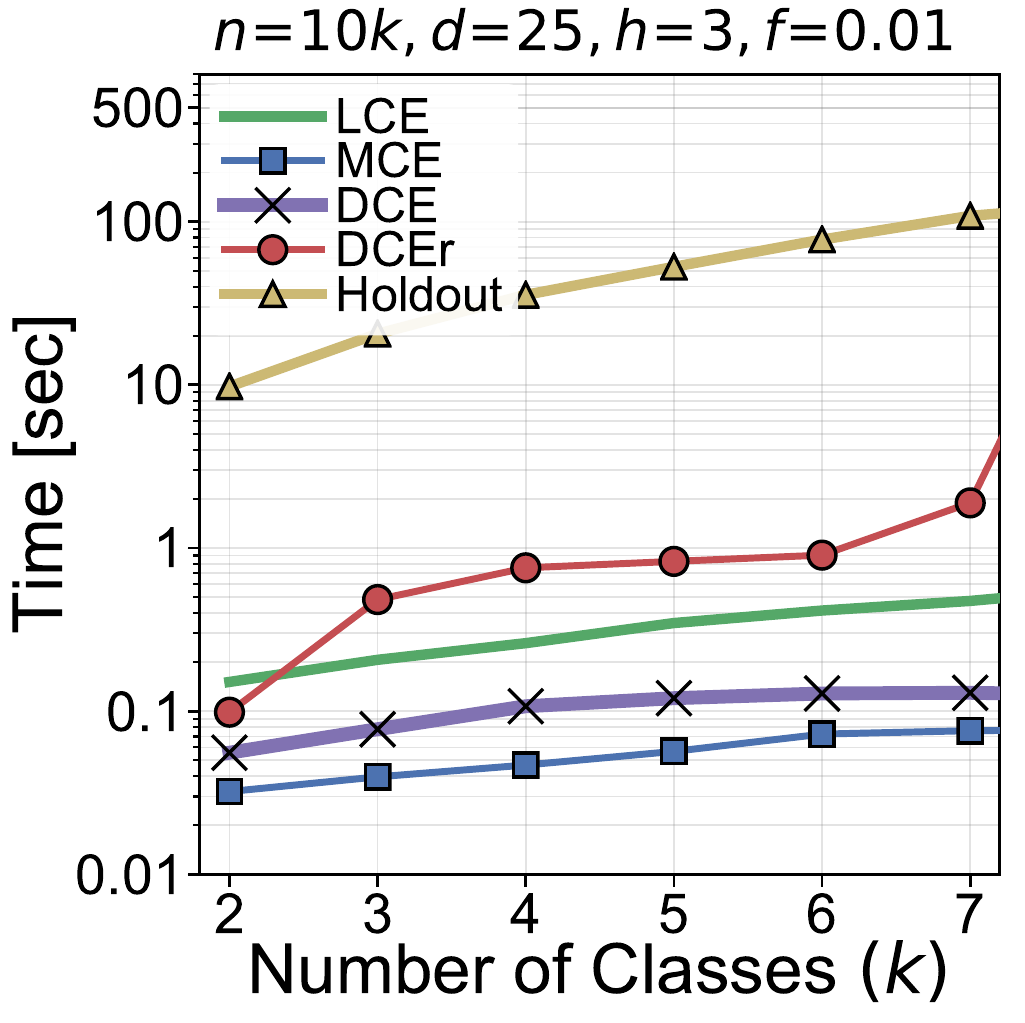}
	\caption{Scalability with $k$}
	\label{Fig_Time_varyK_607}
\end{subfigure}

\caption{
Experimental results for 
(a)-(j) accuracy (\cref{sec:estimation}),
and scalability of our methods (\cref{sec:scalability})
}\label{fig:experiments}
\end{figure*}

\introparagraph{Computational setup and code} We implement our algorithms in
Python using optimized libraries for sparse matrix operations
(NumPy~\cite{numpy} and Scipy~\cite{Jones:2001fk}).
Timing data was taken on a 2.5 Ghz Intel Core i5 with 16G of main memory and a
1TB SSD hard drive. 
Holdout method uses \sql{scipy.optimize} with the Nelder-Mead Simplex algorithm \cite{DBLP:journals/cj/Akitt77}, which is specifically suited for discrete non-contiguous functions.\footnote{We tried
    alternative optimizers, such as the Broyden-Fletcher-Goldfarb-Shanno
    (`BFGS') algorithm  \cite{DBLP:journals/siamjo/ArmandGJ00}. Nelder-Mead
performed best for the baseline holdout method due to its gradient-free nature.} All other estimation methods
use Sequential Least SQuares Programming (SLSQP).
The spectral radius of a matrix is calculated with an approximate method from
the PyAMG library~\cite{BeOlSc2011} that implements a technique described
in~\cite{Bai:2000fk}.  Our code (including the data generator) is inspired by
Scikit-learn \cite{scikit-learn} and will be made publicly available 
to encourage reproducible research.\footnote{\url{https://github.com/northeastern-datalab/factorized-graphs/}}

\subsection{Accuracy of Compatibility Estimation}\label{sec:estimation}
We show accuracy of parameter estimation by DCEr and compare it with ``holdout'' baseline and linear, myopic and simple distant variants. 
We consider propagation using `true compatibility' matrix as our gold standard (GS). 

\begin{tcolorbox}
\begin{resultW}(\textbf{Parameter choice of DCEr}) 
Normalization variant 1 and longer paths $\ell_{\max}=5$ are optimal for DCE. 
Choosing the hyperparameter $\lambda=10$ performs robustly for a wide range of 
average degrees $d$ and label sparsities $f$.
\end{resultW}
\end{tcolorbox}

\Cref{Fig_MHE_Variants_15_21} shows DCE used with our three normalization variants
and different maximal path lengths $\ell_{\max}$.
The vertical axis shows the L2-norm between estimation and GS
for $\HVec$. 
Variant 3 generally performs worse, and variant 2 generally has higher variance.
Our explanation is that finding the L2-norm closest symmetric doubly-stochastic matrix to a stochastic one is a well-behaved optimization problem.

\Cref{Fig_MHE_ScalingFactor_133} shows DCEr for various values of $\lambda$ and $\ell_{\max}$. 
Notice that DCEr for $\ell_{\max}\!=\!1$ is identical to MCE, and that DCEr works better for longer paths $\ell_{\max}\!=\!5$, 
as those can overcome sparsity of seed labels.
This observation holds over a wide range of parameters and becomes stronger for small $f$.
Also notice that even numbers $\ell_{\max}\!=\!2$ do not work as well as the objective has multiple minima with identical value.
\Cref{Fig_MHE_Optimal_ScalingFactor_lambda_f_3,Fig_MHE_Optimal_ScalingFactor_lambda_d_2} show the optimal choices of hyperparameter $\lambda$ (giving the smallest L2 norm from GS) for a wide range of $d$ and $f$. 
Each red dot shows an optimal choice of $\lambda$. 
Each gray dot shows a choice with L2-norm that is within 10\% of the optimal choice.
The red line shows a moving trendline of averaged choices. We see that choosing $\lambda=10$
is a general robust choice for good estimation, unless we have enough labels (high $f$): then we don't need longer paths and can best just learn from immediate neighbors (small $\lambda$).

\Cref{Fig_MHE_Optimal_ScalingFactor_diff_f_lambda10_19} shows the advantage of using $\ell_{\max} \!=\! 5$, $\lambda\!=\!10$ 
and random restarts for estimating $\HVec$ as compared to just MCE or DCE:
for small $f$, DCE may get trapped in local optima (see \cref{sec:RandomInit}); 
randomly restarting the optimization a few times overcomes this issue.

\begin{tcolorbox}
\begin{resultW}(\textbf{Accuracy performance of DCEr})
    Label accuracy with DCEr is within $\pm 0.01$ of GS performance
    and is quasi indistinguishable from GS.	
\end{resultW}
\end{tcolorbox}

\Cref{Fig_timing_accuracy_learning_6}
show results from first estimating 
$\HVec$ on a partially labeled graph and then labeling the remaining nodes with LinBP.
We see that \emph{more accurate estimation of $\HVec$ also translates into more accurate labeling},
which provides strong evidence that state-of-the-art approaches that use simple heuristics are not optimal.
GS runs LinBP with gold standard parameters and 
is the best LinBP can do.
The holdout method was varied with $b \in \{1, 2, 4, 8\}$ in \cref{Fig_timing_accuracy_learning_6} and $b=1$ else.
Increasing the number of splits 
moderately increases the accuracy for the holdout method, 
but comes at proportionate cost in time. 
DCEr is faster and more accurate throughout all parameters. 
In all plots, estimating with DCEr gives identical or similar labeling accuracy as knowing the GS. DCE is as good as DCEr for $f>1\%$ for 10k and $f>0.1\%$ for 100k nodes.
MCE and LCE both rely on labeled neighbors and have similar accuracy.

\Cref{Fig_MHE_ScalingFactor_133,Fig_End-to-End_accuracy213} 
show that neighbor frequency distributions alone do not work with sparse labels, and that our $\ell$-distance trick successfully overcomes its shortcomings.

\begin{tcolorbox}
\begin{resultW}(\textbf{Restarts required for DCEr})
With $r=10$ restarts, DCEr obtains the performance levels of GS. %
\end{resultW}
\end{tcolorbox}

\Cref{Fig_fast_optimal_restarts_Accv2_107} shows propagation accuracy of DCEr for different number of restarts $r$  
compared against the global minimum baseline, which is calculated by initializing DCE optimization with GS. 
Notice, the optimal baseline is the best any estimation based method can perform. 
Averaged over 35 runs, \cref{Fig_fast_optimal_restarts_Accv2_107} shows that DCEr approaches the global minima with just $10$ restarts and hence we use $r = 10$ in our experiments. 

\Cref{fig:homophily_synthetic} serves as sanity check and
demonstrates what happens if we use standard random walks (here the harmonic functions method~\cite{DBLP:conf/icml/ZhuGL03}) 
to label nodes in graphs with arbitrary compatibilities:
Baselines with a homophily assumption fall behind tremendously on graphs that do not follow assortative mixing.

\begin{tcolorbox}
\begin{resultW}(\textbf{Robustness of DCEr})
Performance of DCEr remains consistently above other baselines for skewed label distributions and large number of classes, whereas other SSL methods deteriorate for $k > 3$.
\end{resultW}
\end{tcolorbox}

To illustrate the approaches for class imbalance and more general $\HVec$,
we include an experiment with 
$\bm\upalpha = [\frac{1}{6},\frac{1}{3},\frac{1}{2}]$ and
$\HVec = \left[\begin{smallmatrix}
	0.2 & \,0.6 & \,0.2  \\
	0.6 & \,0.1 & \,0.3  \\
	0.2 & \,0.3 & \,0.5 \\
\end{smallmatrix}\right]$.
\Cref{Fig_End-to-End_accuracy213} (contrast to \cref{Fig_End-to-End_accuracy108}) 
shows that
DCEr works robustly better than alternatives, can deal with label imbalance, and can learn the more general $\HVec$.
\Cref{Fig_End-to-End_accuracy507} 
compares accuracy against random labeling for
fixed $n$, $m$, $h$, $f$, and increasing $k$.
DCEr restarts up to 10 times and works robustly better than alternatives. 
Recall that the number of compatibilities to learn is $\O(k^2)$.

\begin{figure*}[t]
\begin{subfigure}[t]{.24\linewidth}
	\centering
	\includegraphics[scale=0.4]{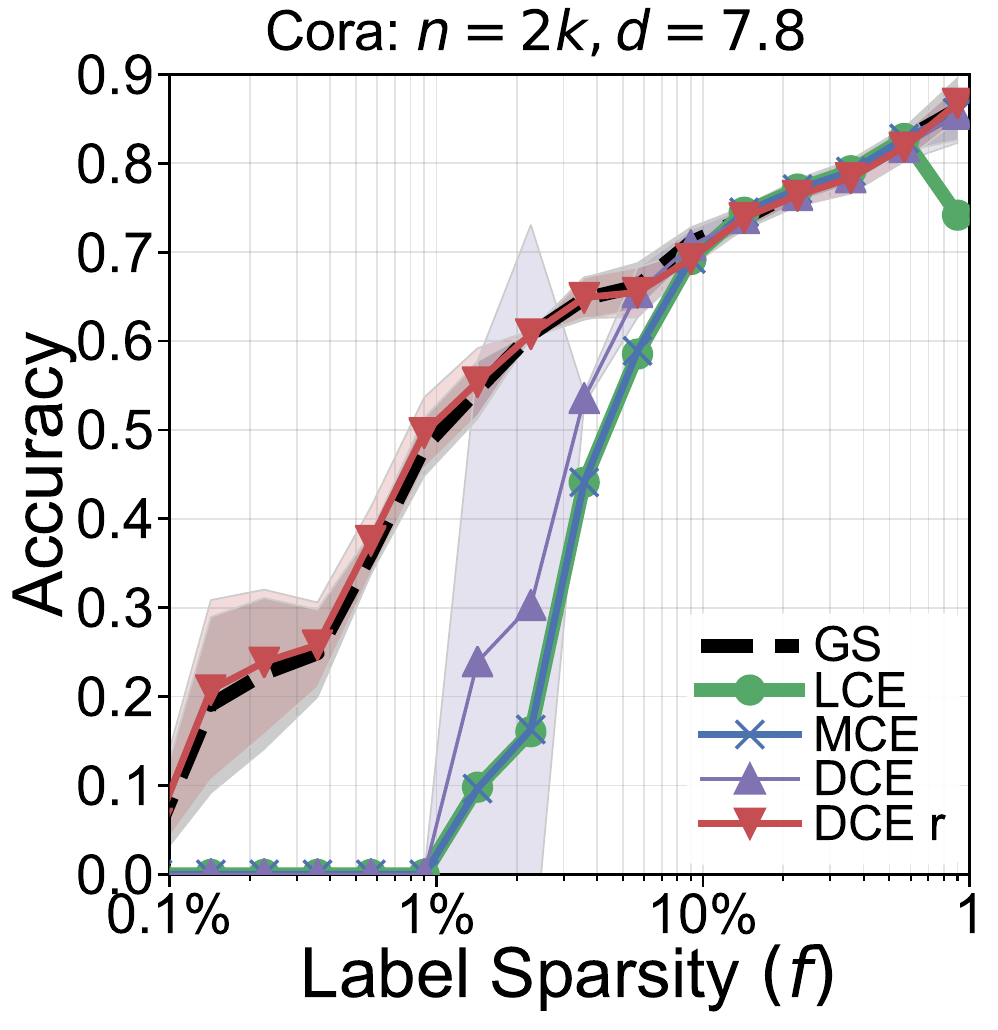}
	\caption{Cora}
\end{subfigure}
\begin{subfigure}[t]{.24\linewidth}
	\centering
	\includegraphics[scale=0.4]{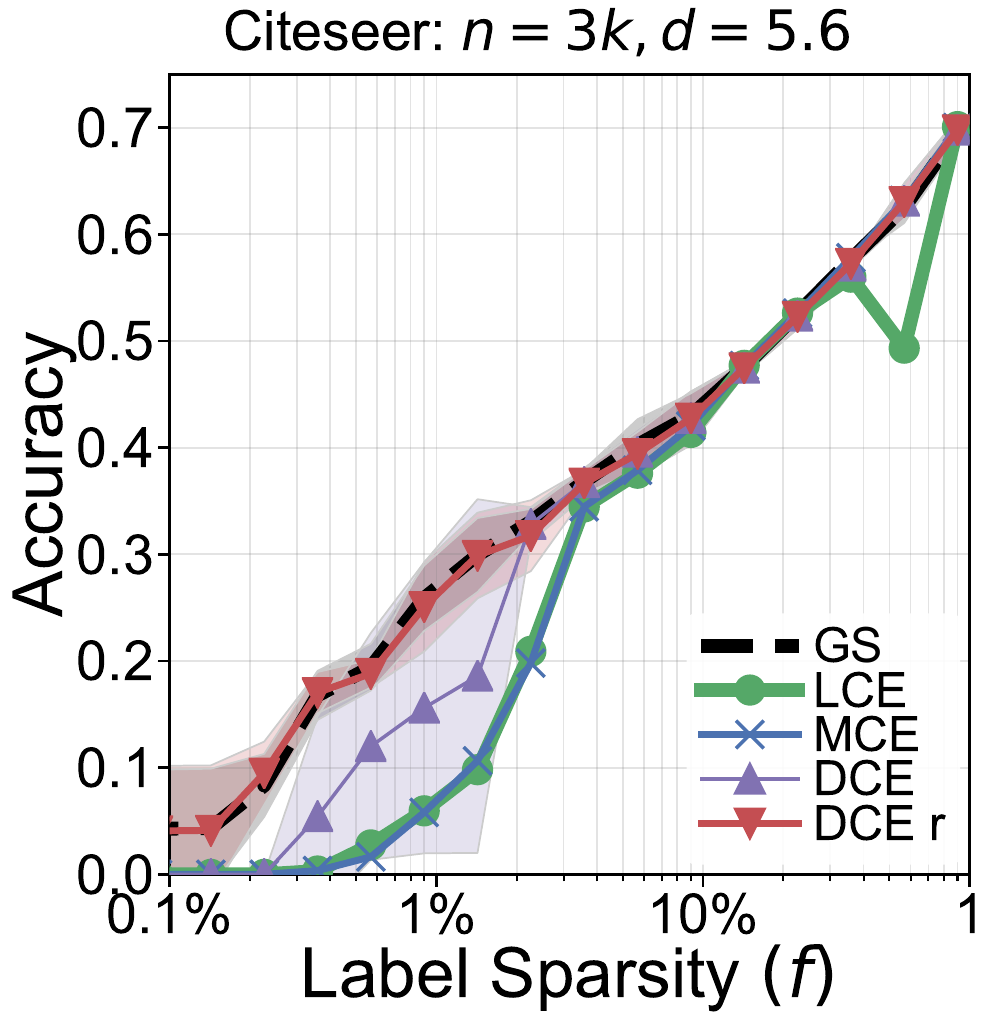}
	\caption{Citeseer}
\end{subfigure}
\begin{subfigure}[t]{.24\linewidth}
	\centering
	\includegraphics[scale=0.4]{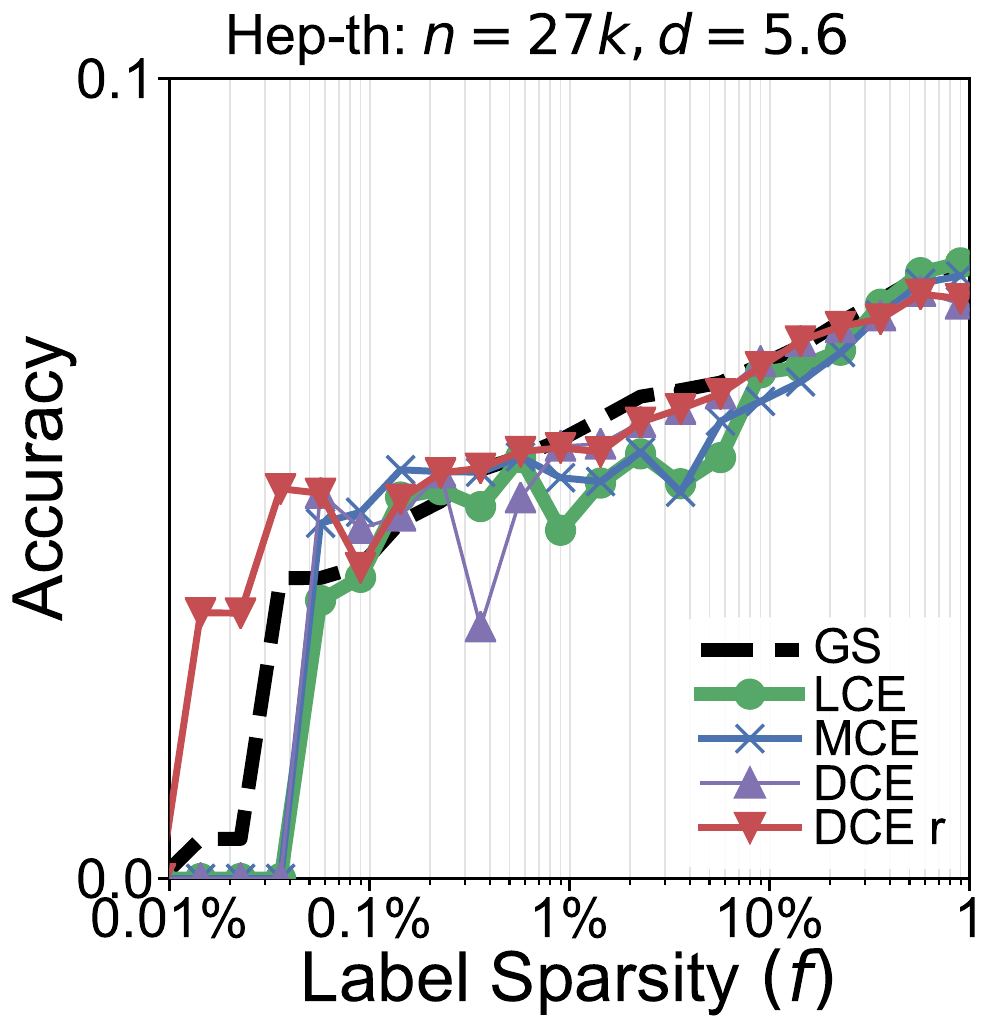}
	\caption{Hep-Th}
\end{subfigure}
\begin{subfigure}[t]{.24\linewidth}
	\includegraphics[scale=0.4]{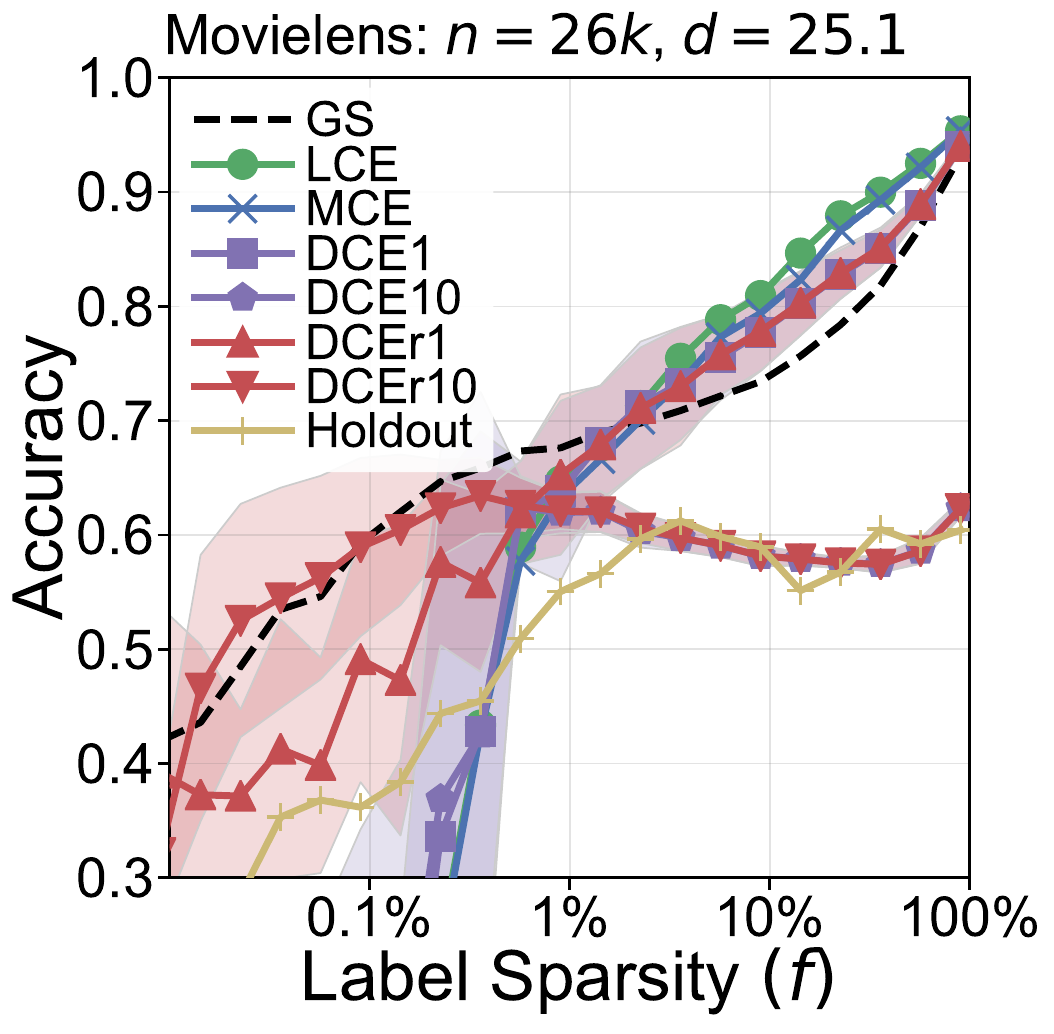}	
	\caption{MovieLens}\label{fig:movielens}
\end{subfigure}

\vspace{2mm}

\begin{subfigure}[t]{.24\linewidth}
	\includegraphics[scale=0.4]{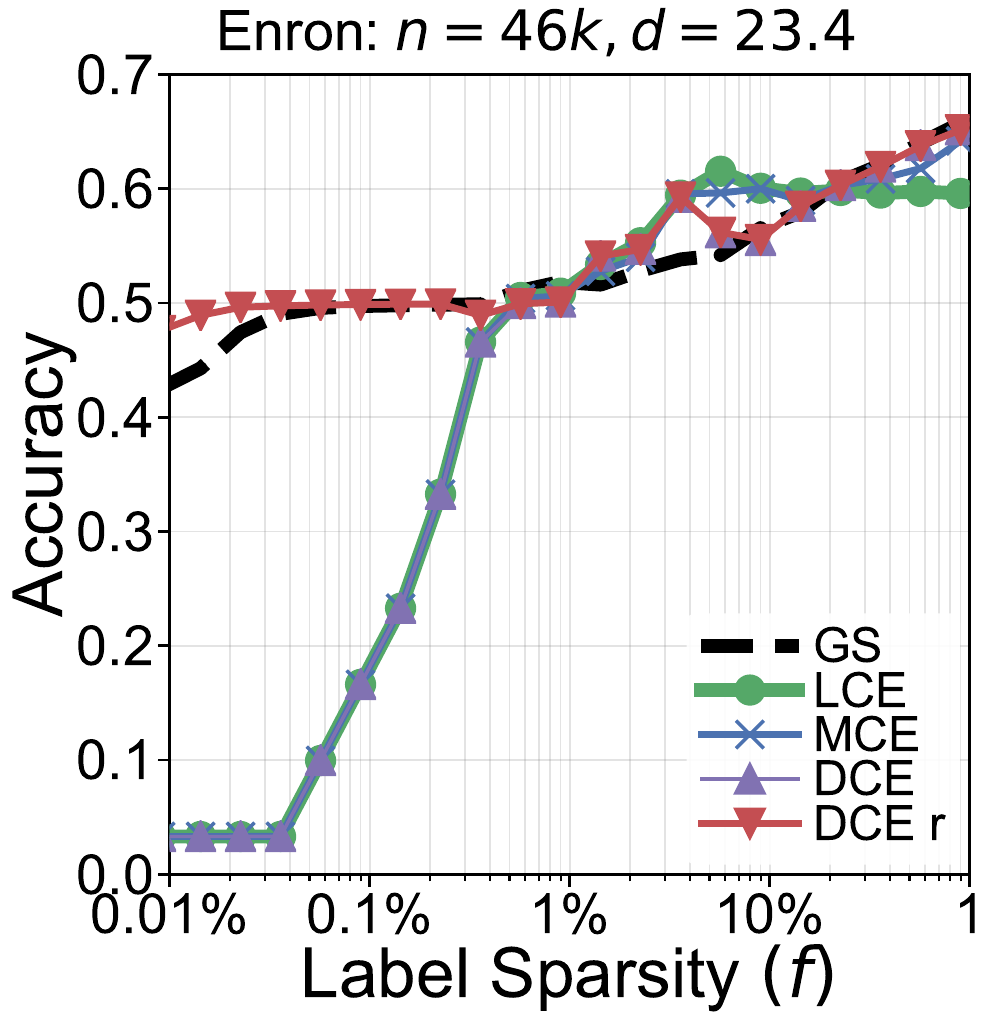}
	\caption{Enron}\label{fig:enron}
\end{subfigure}
\begin{subfigure}[t]{.24\linewidth}
	\includegraphics[scale=0.4]{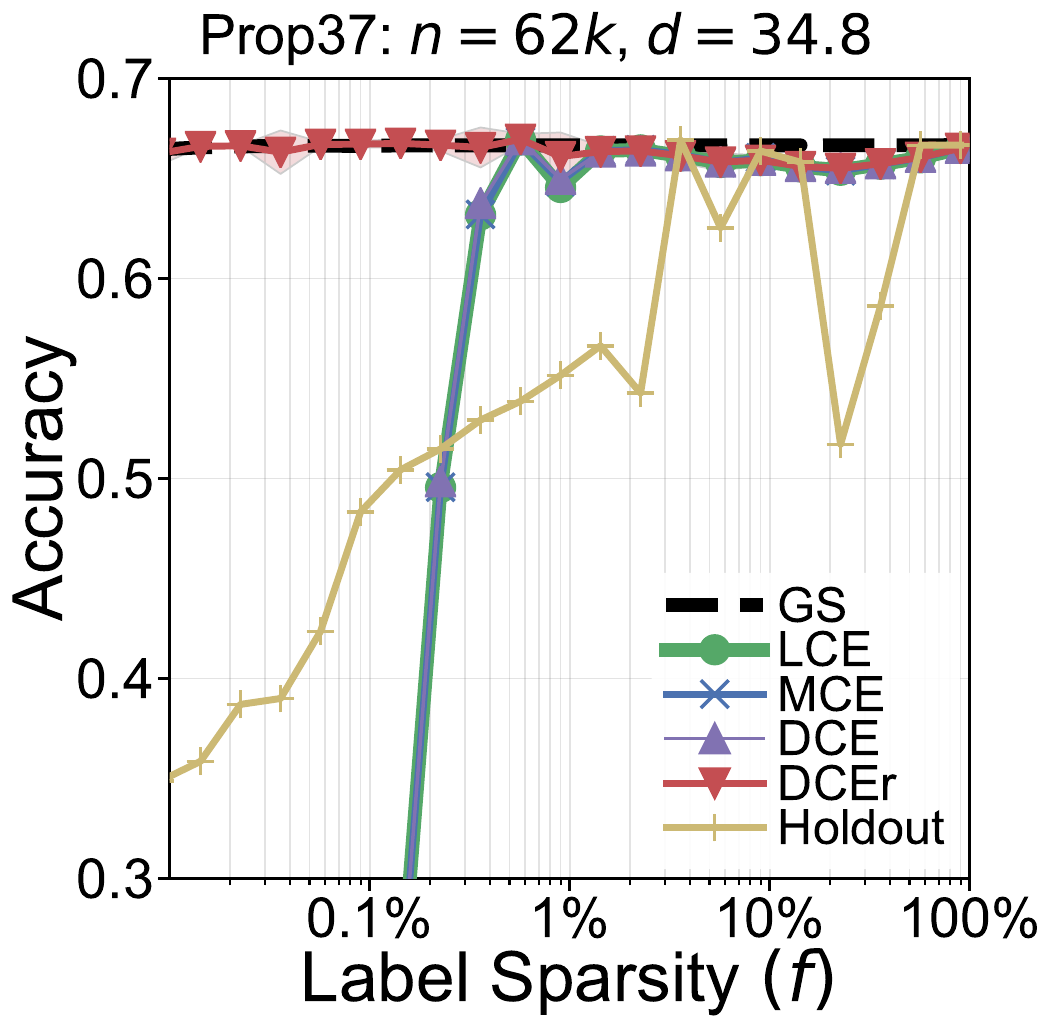}
	\caption{Prop-37}
    \label{fig:prop37}
\end{subfigure}
\begin{subfigure}[t]{.24\linewidth}
	\centering
	\includegraphics[scale=0.4]{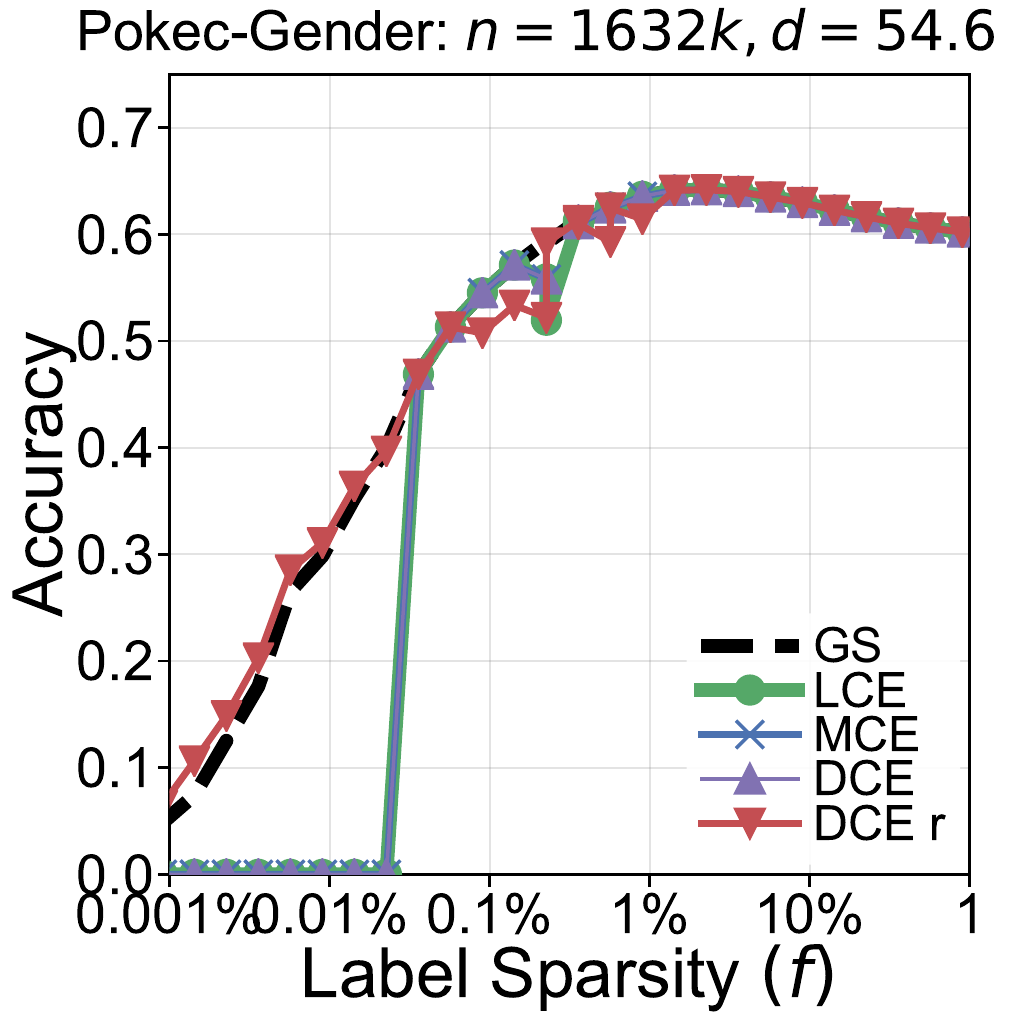}
	\caption{Pokec-Gender}
\end{subfigure}
\begin{subfigure}[t]{.24\linewidth}
	\includegraphics[scale=0.4]{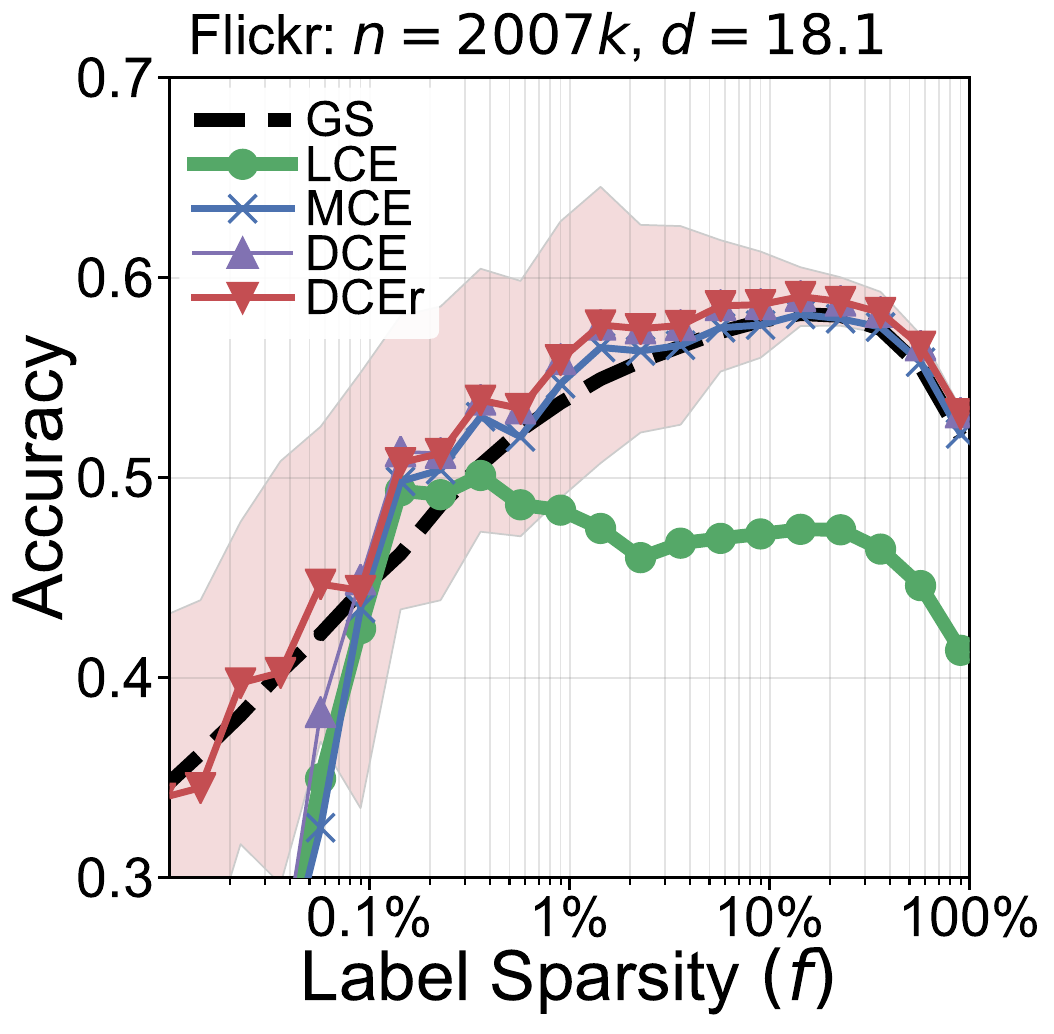}
	\caption{
        Flickr }\label{fig:flickr}
\end{subfigure}

\vspace{2mm}

\begin{subfigure}[t]{.12\linewidth}
	\centering
	\includegraphics[scale=0.22]{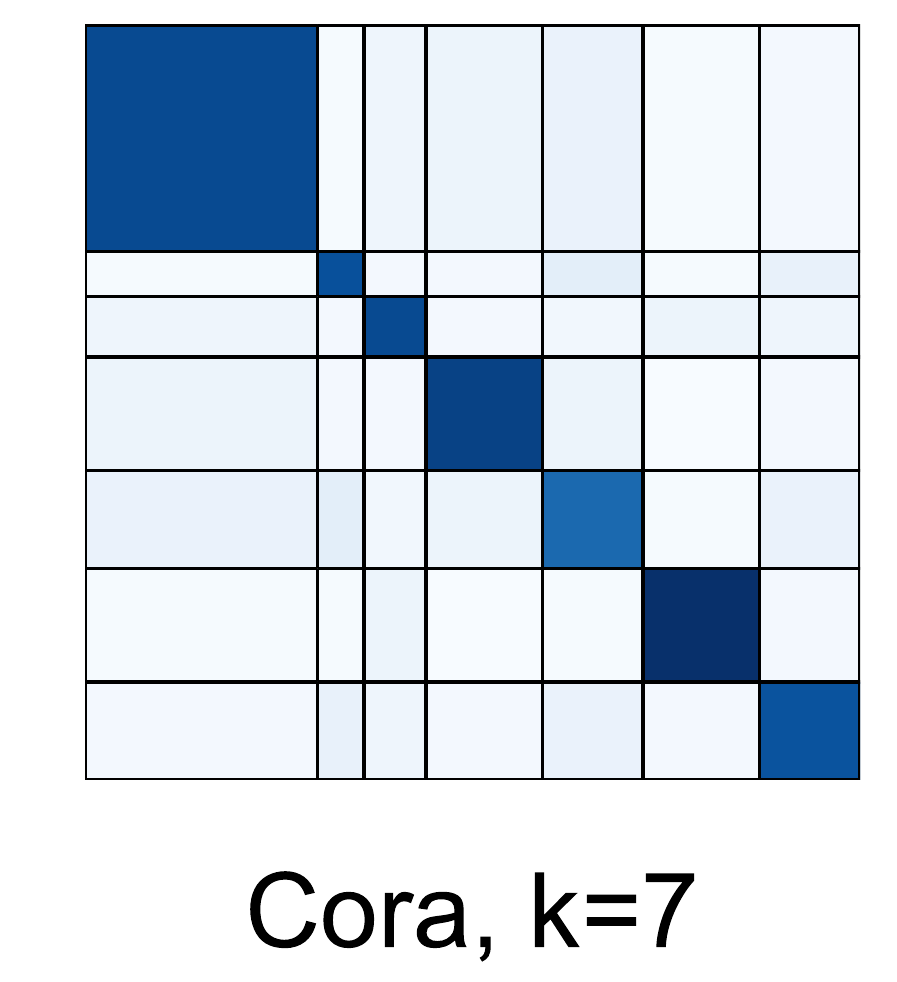}
	\caption{}
    \label{cora_r}
\end{subfigure}
\begin{subfigure}[t]{.12\linewidth}
	\centering
	\includegraphics[scale=0.22]{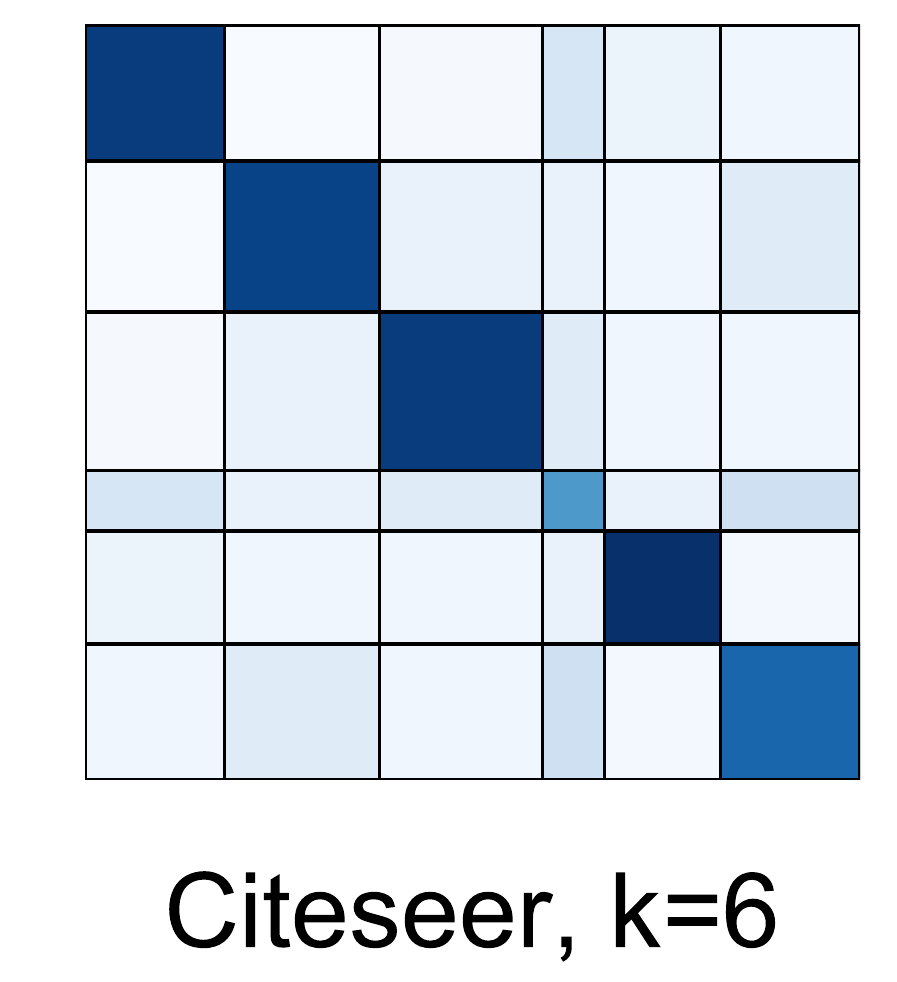}
	\caption{}
    \label{citeseer_r}
\end{subfigure} 
\begin{subfigure}[t]{.12\linewidth}
	\centering
	\includegraphics[scale=0.22]{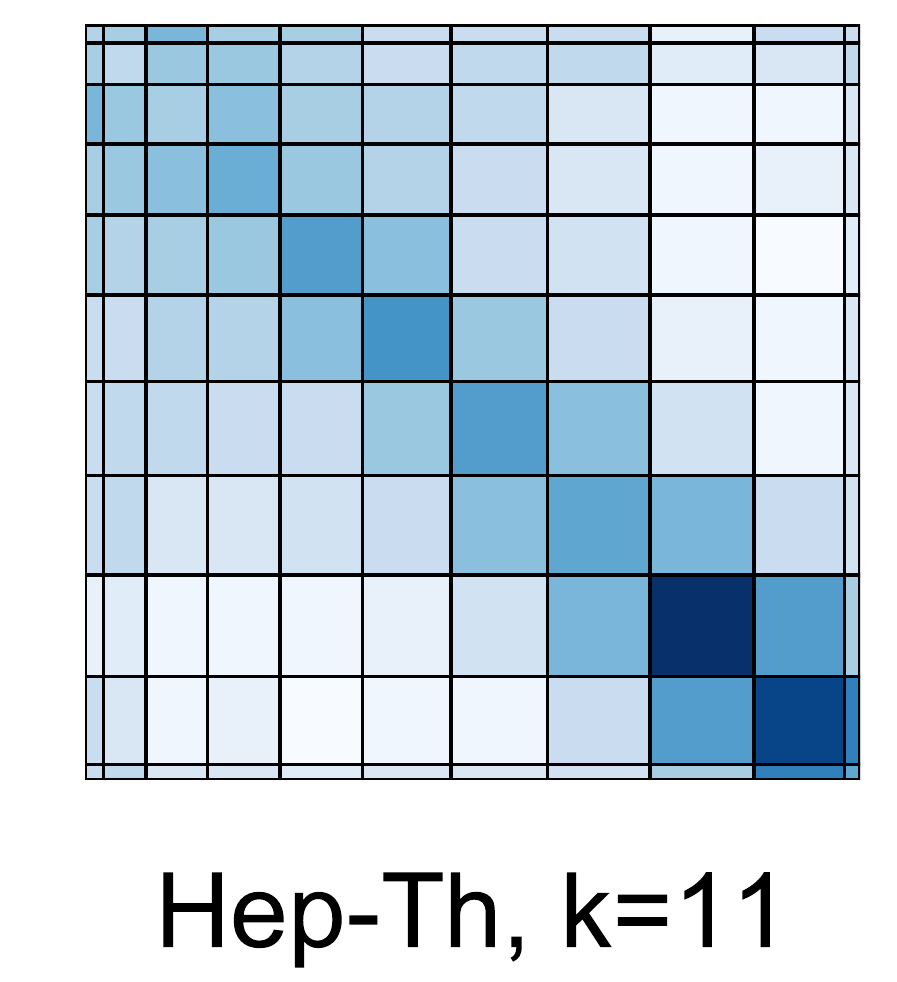}
	\caption{}
    \label{hepth_r}
\end{subfigure} 
\begin{subfigure}[t]{.12\linewidth}
	\centering
	\includegraphics[scale=0.22]{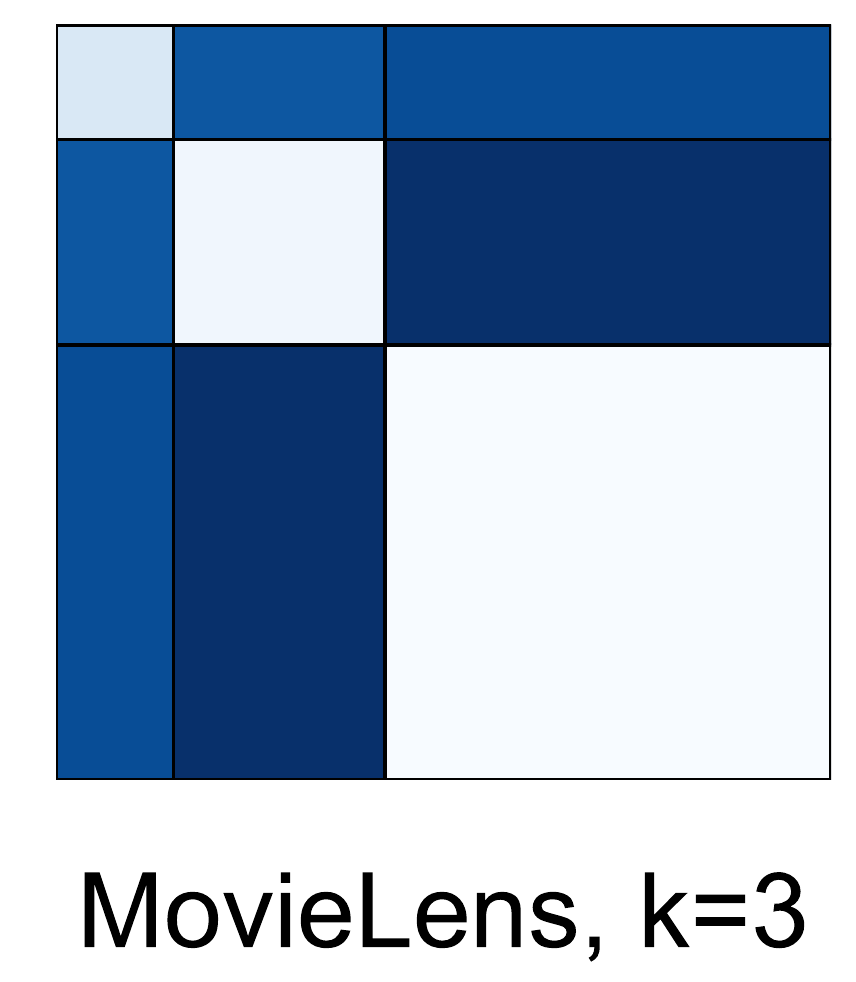}
	\caption{}
    \label{movielens_r}
\end{subfigure} 
\begin{subfigure}[t]{.12\linewidth}
	\centering
	\includegraphics[scale=0.22]{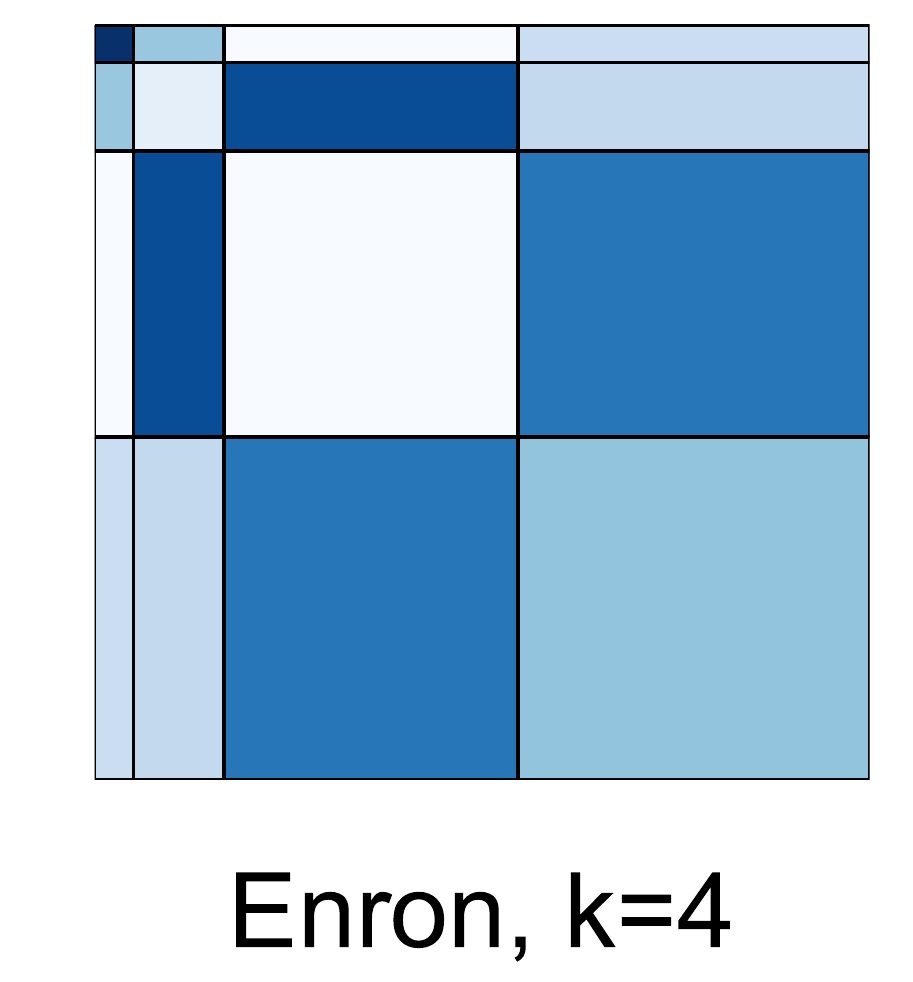}
	\caption{}
    \label{enron_r}
\end{subfigure} 
\begin{subfigure}[t]{.12\linewidth}
	\centering
	\includegraphics[scale=0.22]{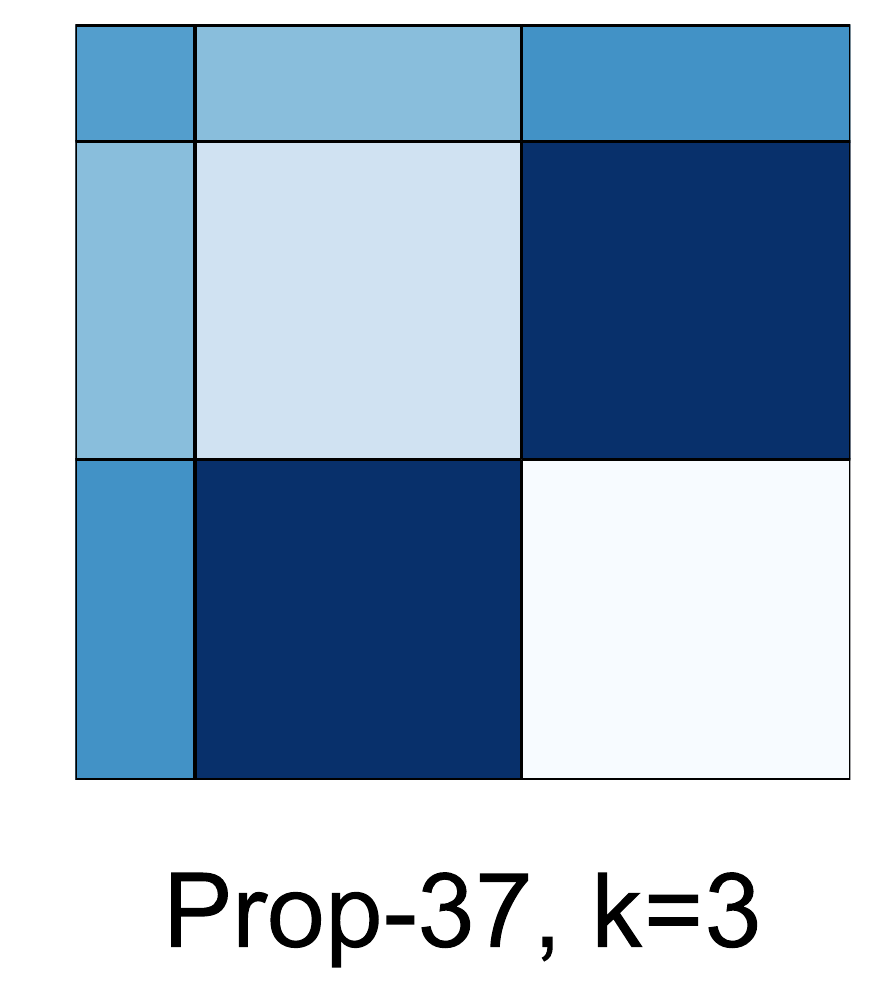}
	\caption{}
    \label{prop37_r}
\end{subfigure} 
\begin{subfigure}[t]{.12\linewidth}
	\centering
	\includegraphics[scale=0.22]{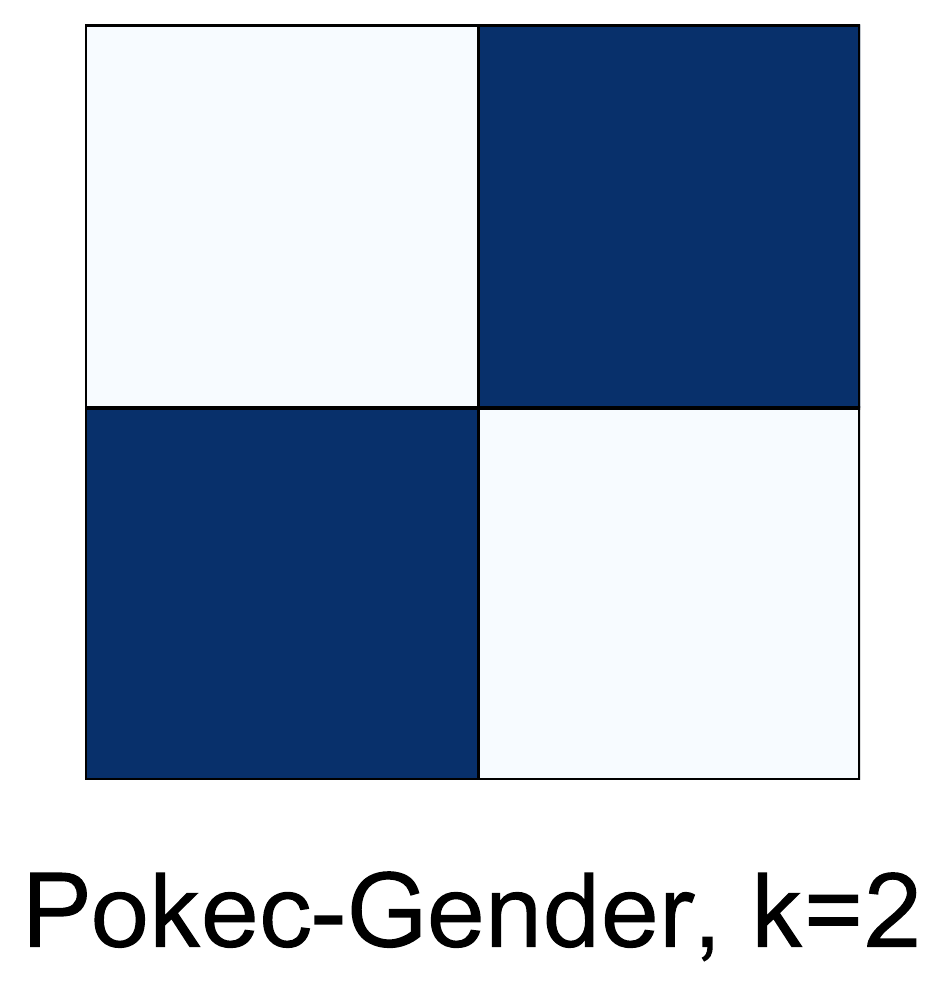}
	\caption{}
    \label{Pokec-Gender_r}
\end{subfigure} 
\begin{subfigure}[t]{.12\linewidth}
	\centering
	\includegraphics[scale=0.22]{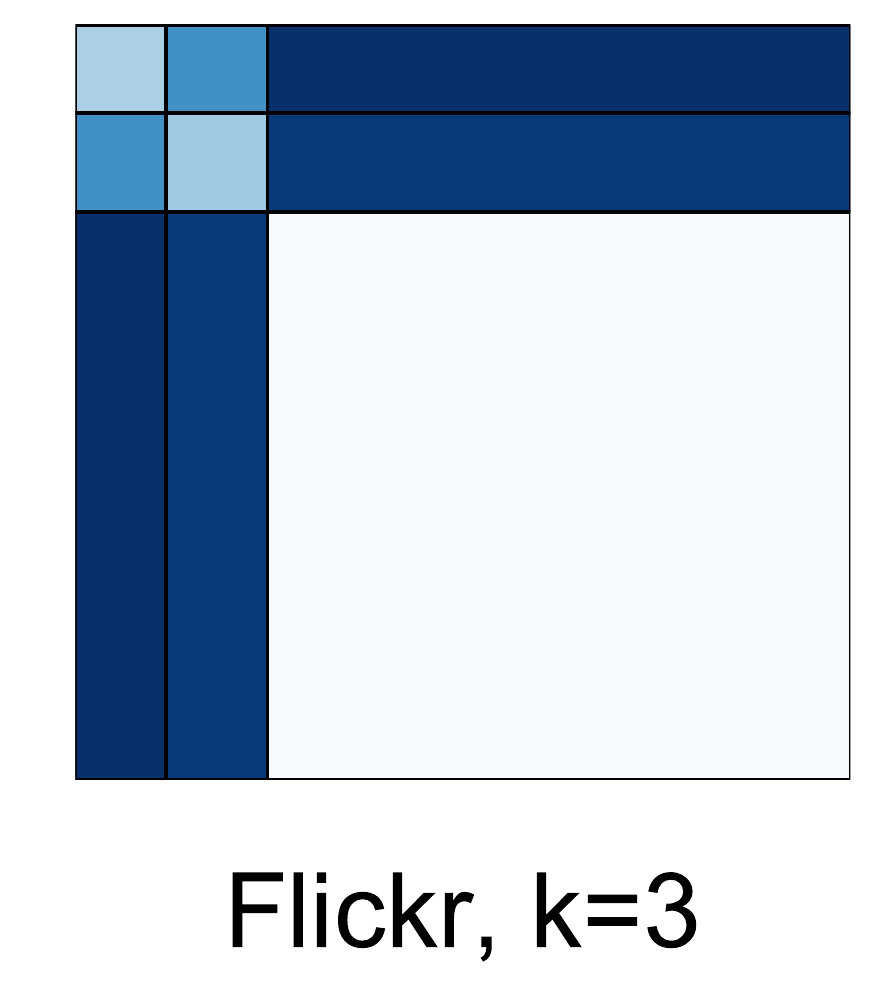}
	\caption{}
    \label{flickr_r}
\end{subfigure}

\caption{
Experiments over 8 real-world datasets
(\Cref{sec:realData}):
(a)-(h): Accuracy of end-to-end estimation and propagation.
(i)-(p): Illustration of class imbalance and heterophily in their gold standard compatibility matrices 
(darker colors represent higher number of edges): 
the first 3 show homophily, the latter 5 arbitrary heterophily.
}\label{fig:realdata}
\end{figure*}

\subsection{Scalability of Compatibility Estimation}\label{sec:scalability}
\Cref{Fig_Timing_3} 
shows the scalability of our combined methods. 
On our largest graph with 
6.6m nodes 
and 16.4m edges,
propagation takes 316 sec for 10 iterations, 
estimation
with LCE 95 sec,
DCE or DCEr 11 sec, 
and MCE 2 sec.

\begin{tcolorbox}
\begin{resultW}(\textbf{Scalability with increasing graph size})
    DCEr scales linearly in $m$ and experimentally is more than 3 orders of magnitude faster than Holdout method.
\end{resultW}
\end{tcolorbox}

\noindent
Our
estimation method DCEr 
is more than 25 times faster than inference (used by Holdout as a
subroutine) and thus comes ``for free'' as $m$ scales.
Also notice that DCE and DCEr need the same time for large graphs because of our two-step calculation:
the time needed to calculate the graph statistics $\PVecEstEC^{(\ell)}$, $\ell \in [5]$ becomes dominant; each of the 8 optimizations  of \cref{eq:DCE_energy}
then takes less than 0.1 sec \emph{for any size of the graph}.
The Holdout method for $b=1$ takes 1125 sec for a graph with 
256k edges. Thus the extrapolated difference in scalability between DCEr and Holdout is 3-4 orders of magnitude if $b=1$.
\Cref{Fig_timing_accuracy_learning_6} 
shows that increasing the number of splits $b$ for the holdout method can slightly increase accuracy at even 
higher runtime cost.

\Cref{Fig_Time_varyK_607} uses a setup identical to \Cref{Fig_End-to-End_accuracy507}
and shows that our methods also scale nicely with respect to number of classes $k$, 
as long as the graph is large and thus the graph summarization take most time.
Here, DCEr uses 10 restarts.

\begin{figure}[t]
\small
\renewcommand{\tabcolsep}{0.8mm}
\begin{tabular}{|l|r|r|r|r
			|| r|}
        \hline	
        \textbf{Dataset} 	& \multicolumn{1}{c|}{$n$} &  \multicolumn{1}{c|}{$m$}  &
        \multicolumn{1}{c|}{$d$} 	& $k$ 	& DCEr	 \\	 
        \hline
        Cora  \cite{DBLP:journals/aim/SenNBGGE08} & 2,708 & 10,858 			&
        	8.0	& 7  	& 3.33 \\ \hline  
        Citeseer  \cite{DBLP:journals/aim/SenNBGGE08} &  3,312 & 9,428 		&
			5.7	& 6 	& 1.13  \\ \hline 
        Hep-Th \cite{Gehrke:2003:OKC:980972.980992} &  27,770 & 352,807 	&
			25.4	& 11	& 10.61 \\ \hline 
        MovieLens \cite{sen2009tagommenders} & 26,850 & 336,742 			&
			25.0	& 3		& 0.07  \\ \hline
        Enron \cite{DBLP:conf/www/LiangANSP16} & 46,463 & 613,838 			&
			26.4	& 4 	& 0.20  \\	\hline 
        Prop-37 \cite{smith2013role} & 62,383 & 2,167,809 					&
			69.4	& 3 	& 0.09  \\	\hline
        Pokec-Gender \cite{pokec} &  1,632,803 & 30,622,564 				&
			37.5	& 2 	& 5.12  \\ \hline 
        Flickr   \cite{DBLP:conf/eccv/McAuleyL12} & 2,007,369 & 18,147,504 	&
			18.1	& 3	 	& 2.39  \\	
		\hline   
        \end{tabular}
\caption{Real-world dataset statistics, \cref{sec:realData}.
The last column shows runtime of DCEr (in sec).
}
\label{tbl:describe_realdata} 
\end{figure}

\subsection{Performance on Real-World Datasets}\label{sec:realData}

We next evaluate our approach over 
8 real-world graphs, described in \Cref{tbl:describe_realdata},
that have a variety of complexities : 
($i$) Graphs are formed by very different processes,
($ii$) class distributions are often highly imbalanced,
($iii$) compatibilities are often skewed by orders of magnitude, and
($iv$) graphs are so large that it is infeasible to even run Holdout.
Our 8 datasets are as follows:

\noindent
\begin{enumerate}[topsep=0mm,leftmargin=*]

\item \textbf{Cora}~ \cite{DBLP:journals/aim/SenNBGGE08} is also a citation graph containing publications from 7 categories in the area of ML 
(\emph{neural nets}, 
\emph{rule learning}, 
\emph{reinforcement learning}, 
\emph{probabilistic methods}, 
\emph{theory}, 
\emph{genetic algorithms}, 
\emph{case based}).

\item \textbf{Citeseer}~ \cite{DBLP:journals/aim/SenNBGGE08} contains 3264 publications from 6 categories in the area of Computer Science. The citation graph connects papers from six different classes (\emph{agents}, \emph{IR}, \emph{DB}, \emph{AI}, \emph{HCI}, \emph{ML}).

\item \textbf{Hep-Th} ~\cite{DBLP:conf/eccv/McAuleyL12} is the High Energy Physics Theory publication network is from arXiv e-print and covers their citations. The node are labeled based on one of 11 years of publication (from \emph{1993} to \emph{2003}).

\item \textbf{MovieLens}~\cite{sen2009tagommenders} is from a movie recommender system that connects 3 classes: \emph{users}, \emph{movies},
and assigned \emph{tags}.

\item \textbf{Enron}~\cite{DBLP:conf/www/LiangANSP16} 
has 4 types of nodes: \emph{person}, \emph{email address}, \emph{message} and \emph{topic}. 
Messages are connected to topics and email addresses; people are connected to email addresses.

\item \textbf{Prop-37}~\cite{smith2013role} comes from a California ballot
initiative on labeling genetically engineered foods. It is based
on Twitter data and has 3 classes: \emph{users}, \emph{tweets}, and \emph{words}.

\item \textbf{Pokec-Gender}~ \cite{pokec} is a social network graph connecting people (\textit{male} or \textit{female}) to their friends. 
More interaction edges exist between people of opposite gender.

\item \textbf{Flickr}~\cite{DBLP:conf/eccv/McAuleyL12} connects \emph{users} to \emph{pictures} they upload and other users pictures in the same \emph{group}. Pictures are also connected to groups they are posted in.

\end{enumerate}

\begin{tcolorbox}
\begin{resultW}(\textbf{Accuracy on real datasets})
DCEr consistently labels nodes with accuracy $\pm 0.01$  compared to true compatibility matrix for $f<10\%$ and $\pm 0.03$ for $f>10\%$ averaged across datasets, basically indistinguishable from GS. 
\end{resultW}
\end{tcolorbox}

Our experimental setup is similar to before: 
we estimate $\HVec$ on a random fraction $f$ of labeled seed sets and repeat many times.
We retrieve the Gold-Standard (GS) compatibilities
from the
relative label distribution on the fully labeled graph
(if we know all labels in a graph, 
then we can simply ``measure'' the relative frequencies of classes between neighboring nodes).
The remaining nodes are then labeled with 
LinBP, 10 iterations, $s=0.5$, as suggested by \cite{DBLP:journals/pvldb/GatterbauerGKF15}.

The relative accuracy for varying $f$ can look quite differently for different networks.
Notice that our real-world graphs show a wide variety of
($i$) label imbalance, 
($ii$) number of classes ($2 \leq k \leq 12$), 
($iii$) mixes of homophily and heterophily, 
and ($iv$) a wide variety of skews in compatibilities.
This variety can be seen in our illustrations of the gold-standard compatibility matrices 
(\cref{cora_r,citeseer_r,hepth_r,movielens_r,enron_r,prop37_r,Pokec-Gender_r,flickr_r}):
we clustered all nodes by their respective classes,
and shades of blue represent the relative frequency of edges between classes.
It appears that the accuracy of LinBP for varying $f$
can be widely affected by combinations of graph parameters.
Also, for Movielens~\cref{fig:movielens}, 
choosing $\lambda=10$ worked better for DCEr in the sparse regime ($f<1\%$),
while $\lambda=1$ worked better for $f>1\%$.
This observation appears consistent with our understanding of $\lambda$ 
where larger values can amplify weaker distant signals,
yet smaller $\lambda$ is enough to propagate stronger signals. 
Fine-tuning of $\lambda$ on real datasets remains interesting future work.
However, all our experiments consistently show that our methods for learning compatibilities 
robustly compete with the gold-standard and beat all other methods, especially for sparsely labeled graphs.
This suggests that our approach renders any prior heuristics obsolete.

\introparagraph{Scalability with increasing number of classes $k$} 
We know from \cref{sec:complexityanalysis} that  joint complexity of our two-step compatibility estimation is $\O(mk + k^4r)$. 
For very large graphs with small $k$, the first factor stemming from graph summarization is dominant and estimation runs in $O(mk)$, which is faster than propagation.
However, for moderate graphs with high $k$ (e.g. Hep-Th with $k=11$), the second factor can dominate since  calculating the Hessian matrix has $O(k^4)$ complexity. Thus for high $k$ and small graphs, our approach will not remain faster than propagation, but in most practical settings estimation will act as a computationally cheap \textit{pre-processing} step.

\section{Conclusions}\label{sec:conclusions}

Label propagation methods that rely on arbitrary compatibilities between node types %
require the class-to-class compatibilities as input.
Prior works assume these compatibilities as given by domain experts.
We instead propose methods
to {accurately} learn compatibilities from sparsely labeled graphs {in a fraction of the time} it takes to later propagate the labels,
resolving the long open question of ``Where do the compatibilities come from?''
The take-away for practitioners is that \emph{prior knowledge of
compatibilities is no longer necessary}  
and estimation can become a cheap initial step before labeling.

Our approach is able to \emph{amplify signals from sparse data} by 
leveraging certain \emph{algebraic properties} (notably the distributivity law) in the update equations of linear label propagation algorithms
(an idea we call \emph{algebraic amplification}).
We start with linearized belief propagation~\cite{DBLP:journals/pvldb/GatterbauerGKF15},
which is itself derived from a widely used inference method (belief propagation)
by applying certain algebraic simplifications (namely linearizations),
and we complement it with a method for parameter estimation.
Thus our estimation method
\emph{uses the same approximations used for inference, also for learning the parameters}.
In general, linear methods 
have computational advantages (especially during learning),
are easier to interpret,
have fewer parameters (which helps with label sparcity),
have fewer hyperparameters (which simplifies tuning),
and also have favorable algebraic properties that are crucial to our approach.
It remains to be seen how deep neural networks (NN) 
can be adapted to such sparse data.

\vspace{-4mm}
\smallsection{Acknowledgements}
This work was supported in part by NSF under award CAREER III-1762268.

\clearpage
\bibliographystyle{ACM-Reference-Format}
\bibliography{bib/linBP_SSL.bib}

\clearpage
\appendix
\section*{Nomenclature}\label{sec:nomenclature}
\vspace{-2mm}
\begin{table}[ht]
    \centering%
    \small
    \begin{tabularx}{\linewidth}{  @{\hspace{0pt}} >{$}l<{$}  @{\hspace{2mm}}  X @{}}  
    \hline
	n, n_L	& number of nodes, or labeled nodes \\
	m			& number of edges \\	
	k			& number of classes, $k= |L|$ \\
	k^*			& number of free parameters in $\HVec$: $\frac{k(k-1)}{2}$ \\	
    b           & number of splits evaluated by baseline Holdout method\\
	f			& fraction of labeled nodes \\
	\vec I		& $n \times n$ diagonal identity matrix \\
	\WVec		& $n \times n$ 
					symmetric adjacency matrix (data graph)
                    \\
	\vec D		& $n \times n$ diagonal degree matrix \\		
	\EVec		& $n \times k$ a priori label matrix \\
	\BVec		& $n \times k$ inferred label matrix \\
	\HVec		& $k \times k$ compatibility 
					matrix	\\
	\epsilon 	& Scaling factor for $\HVec$ \\	
	\WVecEC^{(\ell)}	
				& $n \times n$ non-backtracking (NB) walks for path length $\ell$  \\				
    \PVecEst    & $k \times k$ observed statistics matrix\\
	\PVecEstEC^{(\ell)}		
				& $k \times k$ observed statistics matrix for path length $\ell$\\					
	\rho(\EVec)		& spectral radius of a matrix $\EVec$ \\							
	\vec X_{i:}, \vec Y_{:j}
					& $i$-th row vector of $\vec X$, $j$-th column vector of $\vec Y$	\\
	||\vec X ||		& Frobenius norm of matrix $\vec X$: 
						$\sqrt{\sum_{i,j} |X_{ij}|^2}$	\\
    | \Vec Y |_{\textrm{row}} & row-normalized matrix $\Vec Y$, defined as
    $\mathrm{diag}(\vec Y \vec 1)^{-1} \vec Y$. \\
    \hline
    \end{tabularx}
\end{table}

\section{Proofs SECTION \ref{SEC:PROPERTIES}}

\subsection{Proof \cref{prop:nonCenteredLinBP}}

We show this result by first proving two simple lemmas.

\begin{lemma}[Modulation by a row-constant matrix]\label{lem:modulatingRoConstant}\label{lem:modulatingResiduals}
Linearly mapping a row-vector of sum $s$ with a matrix with constant row-sum $r$ leads to a vector with sum $r s$.
\end{lemma}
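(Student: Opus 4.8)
The plan is to read "sum of a row vector" as right-multiplication by the all-ones column vector $\vec 1$ and then use nothing more than associativity of matrix multiplication. Let $\vec v$ denote the row vector in question, so that its entries sum to $s$ means $\vec v\,\vec 1 = s$, and let $\vec M$ be the matrix with constant row-sum $r$, i.e.\ $\vec M\,\vec 1 = r\,\vec 1$. The image under the linear map is the row vector $\vec v\,\vec M$, whose entry-sum is $(\vec v\,\vec M)\,\vec 1$. Re-associating, $(\vec v\,\vec M)\,\vec 1 = \vec v\,(\vec M\,\vec 1) = \vec v\,(r\,\vec 1) = r\,(\vec v\,\vec 1) = r s$, which is exactly the claim.

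Concretely, the steps I would carry out are: (1) fix the convention that $\vec 1$ is the column vector of ones of the appropriate dimension and record that "$\vec x$ has entry-sum $t$" is equivalent to "$\vec x\,\vec 1 = t$" for a row vector $\vec x$; (2) restate the two hypotheses as $\vec v\,\vec 1 = s$ and $\vec M\,\vec 1 = r\,\vec 1$; (3) compute $(\vec v\,\vec M)\,\vec 1$ by associativity and substitute the two hypotheses in turn. No estimates or limits are involved, so no case analysis is needed.

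There is no real obstacle here — the lemma is a one-line consequence of associativity, and the only thing to be careful about is keeping the row/column orientation straight (multiplying on the correct side, since $\vec M$ has constant \emph{row}-sum and $\vec v$ is a \emph{row} vector being mapped to $\vec v\,\vec M$). The reason it is worth isolating is that it is the atomic fact used repeatedly in the proof of \cref{prop:nonCenteredLinBP}: the prior-belief rows $\eVec_i$ and the messages get modulated by $\HVec$ (or, after the shift by $c_1$, by a matrix that is still constant-row-sum), and one needs to track how the entry-sum of each belief/message vector propagates through the LinBP update. This lemma, together with the companion lemma on constant-sum vectors, lets that later argument proceed by pure bookkeeping.
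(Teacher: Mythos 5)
Your proof is correct and is essentially the paper's argument in matrix notation: the paper interchanges the explicit double sum $\sum_i\sum_\ell \e_\ell \H_{\ell i}$, which is exactly your associativity step $(\vec v\,\vec M)\vec 1 = \vec v(\vec M\,\vec 1)$ written componentwise. Nothing is missing.
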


\begin{proof}
Let $\eVec$ be a $k$-dimensional row vector $[\e_1, \ldots, \e_k]$ whose entries
sum to $s$ ($\sum_{i=1}^k \e_i = s$), and $\HVec$ be a $[k \times k]$ matrix
with constant row-sum $r$ ($\sum_{i=1}^k \H_{ji} = r$). Consider the
$k$-dimensional row vector $\bVec$ resulting from ``modulating'' (linearly
transforming) $\eVec$ by $\HVec$. We show that the entries of $\bVec$ also sum
to $r s$:
\begin{align*}
	\sum_i \b_i	&= \sum_i\sum_\ell \e_\ell \H_{\ell i} 	
				= \sum_\ell \e_\ell \sum_i \H_{\ell i}
					= r \sum_\ell \e_\ell = r s 
\end{align*}
Notice that the same also applies to column-vectors modulated ``from the left'' by column-stochastic matrices. This follows immediately from the transpose:
$\bVec^\transpose 	= \HVec^\transpose  \eVec^\transpose$.
\end{proof}

\begin{lemma}[Modulating residual vectors]\label{lem:modulatingResiduals}
Consider a residual row vector modulated by an arbitrary matrix.
Adding or subtracting a constant value to each entry of the matrix does not change the resulting modulated vector.
\end{lemma}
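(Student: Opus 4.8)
The plan is to reduce the statement to the single observation that the all-ones matrix annihilates any residual vector --- a vector whose entries sum to $0$, as is the case for the centered belief and potential vectors used throughout \cref{SEC:PROPERTIES}. Concretely, I would let $\eVec = [\e_1, \ldots, \e_k]$ be the residual row vector, so that $\sum_{i} \e_i = 0$ by definition, and rewrite the perturbed matrix, given in the paper's broadcasting notation as $\HVec' = \HVec + c$, in the equivalent form $\HVec' = \HVec + c\,\vec 1 \vec 1^\transpose$, where $\vec 1$ is the all-ones column vector.

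The key computation is then one line: by distributivity of the linear map, $\eVec \HVec' = \eVec \HVec + c\,(\eVec \vec 1)\, \vec 1^\transpose$, and since $\eVec \vec 1 = \sum_i \e_i = 0$, the perturbation term is the zero vector, hence $\eVec \HVec' = \eVec \HVec$. The symmetric claim for a residual column vector modulated from the left (the form actually appearing in the LinBP message updates) follows by transposing, exactly as in the closing remark of \cref{lem:modulatingRoConstant}.

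There is no real obstacle here; the only thing requiring care is matching conventions --- keeping track of whether the residual vector multiplies $\HVec$ from the left or the right, and reading the ``$+c$'' broadcasting correctly as $+\,c\,\vec 1\vec 1^\transpose$. It is worth flagging the contrast with \cref{lem:modulatingRoConstant}: that lemma only guarantees that a constant-row-sum matrix preserves the \emph{sum} of a vector, whereas here the special constant matrix $c\,\vec 1\vec 1^\transpose$ maps a sum-zero vector to the \emph{exact} zero vector, which is precisely what upgrades ``the sums agree'' to ``the modulated vectors agree.'' With this lemma together with \cref{lem:modulatingRoConstant} in hand, \cref{prop:nonCenteredLinBP} will then follow by unrolling the LinBP iteration and noting that the $\arg\max$ defining $\textup{label}(\cdot)$ is invariant under adding a common constant to every entry of a row.
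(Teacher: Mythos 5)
Your proof is correct and matches the paper's own argument: both write the perturbed matrix as $\HVec$ plus a constant (all-ones) matrix and use $\sum_i \e_i = 0$ to kill the perturbation term, your only cosmetic difference being the factorization $c\,\vec 1\vec 1^\transpose$ in place of the paper's $\delta \vec 1_{k\times k}$. Nothing further is needed.
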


\begin{proof}
Let $\eVec = [\e_1, \ldots, \e_k]$ be a $k$-dimensional residual row vector  ($\sum_{i=1}^k \e_i = 0$), and $\HVec$ to be a $[k \times k]$ matrix. Consider the $k$-dimensional row vector $\bVec$ resulting from transforming $\eVec$ by $\HVec$, 
and the row vector $\bVec'$ resulting from transforming $\eVec$ by $\HVec'$ where
$\H_{ij}' = \H_{ij} + \delta$. It is easy to show that that $\bVec' = \bVec$:
\begin{align*}
	\bVec' 		&= \eVec \HVec' 
				= \eVec (\HVec + \delta \vec 1_{k \times k}) 
	 			= \bVec + \delta \eVec \vec 1_{k \times k}
					=\bVec
\end{align*}
Notice that $\eVec$ needs to be a residual vector (i.e., centered around 0) for this equivalence to hold.
\end{proof}

\begin{proof}[\cref{prop:nonCenteredLinBP}]
We will show that each entry in the resulting label distribution changes by a
constant that depends only on the row. Hence, the relative order will not change.
Consider \cref{eq:updateEquation} and replace $\ECenterVec$ on the right with $\EVec$. This change adds a constant value to each entry.
Next replace $\BCenterVec$ in $\BCenterVec \HCenterVec$ on the right with $\BVec$:  $\BVec\HCenterVec$.
According to 
\cref{lem:modulatingResiduals}, the product will remain the same.
Next replace $\HCenterVec$ with $\HVec$: $\BVec\HVec$.
According to \cref{lem:modulatingRoConstant}, 
the constant $\frac{1}{k}$ gets added to each entry in the resulting matrix. 
Furthermore, the left multiplication with $\WVec$ will lead to a constant being
added to each row $i$; this constant depends on the degree of the node $i$. It
follows that each subsequent iteration adds constants whose magnitude depend on
the row number. This addition leaves the relative orders of classes unchanged
and proves our claim.
\end{proof}

\subsection{Proof SECTION \ref{prop:LinBP_energy}}

\begin{proof}
First notice that if the update equations~\cref{eq:updateEquation} converge, then they converge towards the fixed point defined by the following equation system:
\begin{align}
\BVec &= \EVec 
		+ \WVec \BVec \HVec 
		\label{eq:LinBP_eqSystem}
\end{align}

Observe now that if \cref{eq:LinBP_eqSystem} holds, then the energy function
\cref{eq:LinBP_lossfunction} becomes zero, as
$\BVec - \EVec - \WVec \BVec \HVec 
		 = 0$.
At the same time, the energy function is quadratic and can never become
negative. Thus its possible minimum is zero. At zero, \cref{eq:LinBP_eqSystem} holds.
Thus, minimizing 
\cref{eq:LinBP_lossfunction}
also leads to the solution after convergence of the update equations.
\end{proof}

\section{Proofs \ref{sec:learningHeterophily}}

\subsection{Proof \cref{th:consistency}}

\begin{proof}
We show our theorem for $f=1$, i.e.\ when calculating the statistics over the fully labeled graph. 
The argument then extends to 
$f<1$, which in expectation results in a fraction $f^2$ of edges.
We illustrate for $\ell=2$; induction on the path length then generalizes to any $\ell$. 
We focus on variant 1 (\cref{eq:variant1})
and make no assumption about the graph generation:
We are only given the matrix $\vec M$ and assume any graph that fulfills it is equally likely.

Consider a graph where
$\PVec 
= 
| \Vec M |_{\textrm{row}}
$
is symmetric and doubly stochastic.
From that property follows that
$\PVec^2
= 
| \Vec M^2 |_{\textrm{row}}$.
Now recall that
$\vec M^2 = (\EVec^\transpose \WVec \EVec)(\EVec^\transpose \WVec \EVec)$,
which can be written as
$\EVec^\transpose (\WVec \EVec \EVec^\transpose \WVec) \EVec$.
In contrast, our estimators
$\PVecEstEC^{(2)} = | \vec M_{\NB}^{(2)}  |_{\textrm{row}}$
and
$\PVecEst^{(2)} = | \vec M^{(2)}  |_{\textrm{row}}$
use
$\vec M_{\NB}^{(2)} 
= \EVec^\transpose \WVecEC^{(2)} \EVec
$
and
$\vec M^{(2)} 
= \EVec^\transpose \WVec^{2} \EVec$, respectively.
Our goal reduces thus to 
analyzing the bias in
(1) $\WVecEC^{(2)}$
and
(2) $\WVec^{2}$
as estimators
for $\WVec \EVec \EVec^\transpose \WVec$.

(1) First notice that $\EVec \EVec^\transpose$ is an $n \times n$ matrix that connect nodes with identical classes.
In other words, $\WVec \EVec \EVec^\transpose \WVec$ adds entries at $(u,v)$ for every pair $(x,y)$ of neighbors of respective nodes $(u,v)$ that have the same class.
For $\WVecEC^{(2)}$ to be an unbiased sample of $(\WVec \EVec \EVec^\transpose \WVec)$,
each of the latter entries would have to have equal probability of being sampled. 
Those entries also include back-tracking paths from a node to itself on the diagonal.
Now take $\WVecEC^{(2)}$ and consider an entry $(u,v)$ representing a non-backtracking path of length 2.
An $(u,w)$ with nodes $u$ and $w$ having classes $i$ and $j$, respectively, is followed by another edge $(w, v)$.
Since we assume no other information about the graph, the class $v$ is sampled uniformly from the vector $M_{:j}-\vec e^{i}$ where $\vec e^{i}$ represents the unit vector with entry 1 at position $i$ and corrects for the fact that the path cannot back-track.
We thus consistently under-sample the diagonals of $\vec M^2$, 
and the order of the bias $\delta$ is in the order of 1 over the number of edges $m$:
$\delta = O(1/m) = O(1/nd)$.
It follows that increasing either $n$ or $d$ decreases the bias.
In particular, from the law of large number it follows that $\lim_{n \rightarrow \infty} \PVecEstEC^{(\ell)} = \HVec^\ell$. 

(2) In contrast, $\PVecEst^{(2)}$ 
uses
$| \WVec^2 |_{\textrm{row}}$
as estimator for
$| \WVec \EVec \EVec^\transpose \WVec |_{\textrm{row}}$.
Whereas $\WVecEC^{(2)}$ ignored some diagonal entries, $\WVec^2$ \emph{always} always includes those corresponding to back-tracking paths, 
and thus overestimates the diagonals of $\PVec^2$. 
Consider a node of class $i$ with degree $d$; there are $d^2$ paths of length 2 passing through that node, $d$ of which are backtracking. Those $d^2$ paths are a biased sample of corresponding $\texttt{sum}(\vec M_{i:})^2$ paths
in $\WVec \EVec \EVec^\transpose \WVec$.
Thus, the positive bias for diagonals is in the order of 
$\delta = O(1/d)$.
It follows that the bias is irrespective of $n$, and thus:
$\lim_{n \rightarrow \infty} \PVecEstEC^{(\ell)} \neq \HVec^\ell$.
Only for increasing $d$, the estimator becomes consistent: 
$\lim_{d \rightarrow \infty} \PVecEstEC^{(\ell)} = \HVec^\ell$.
\end{proof}

\subsection{Proof \cref{prop:nonbacktracking}}

\begin{figure}[t]
\centering
\begin{subfigure}[b]{.34\linewidth}
	\centering
	\includegraphics[scale=0.32]{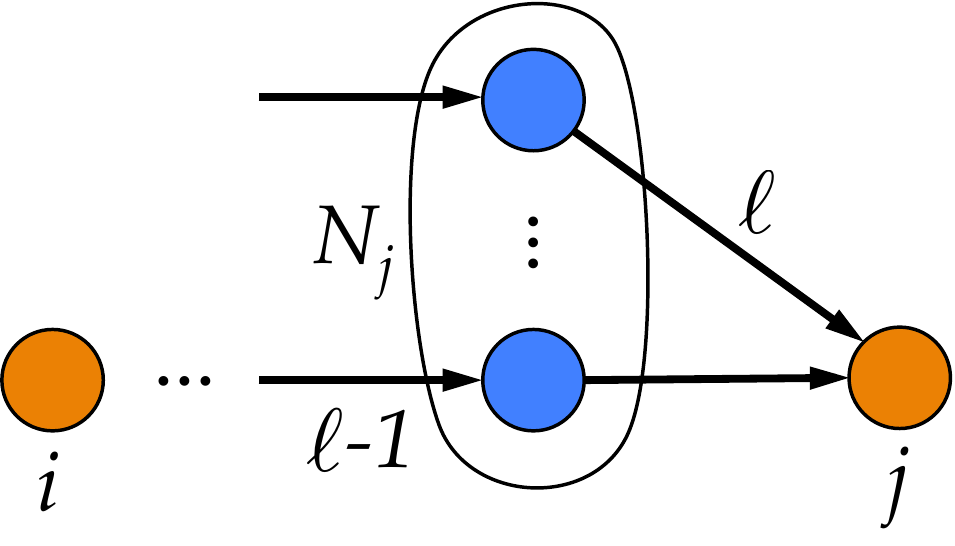}
	\caption{}	
	\label{Fig_NonBacktrackingMatrix_a}
\end{subfigure}
\hspace{1mm}
\begin{subfigure}[b]{.61\linewidth}
	\centering
	\includegraphics[scale=0.32]{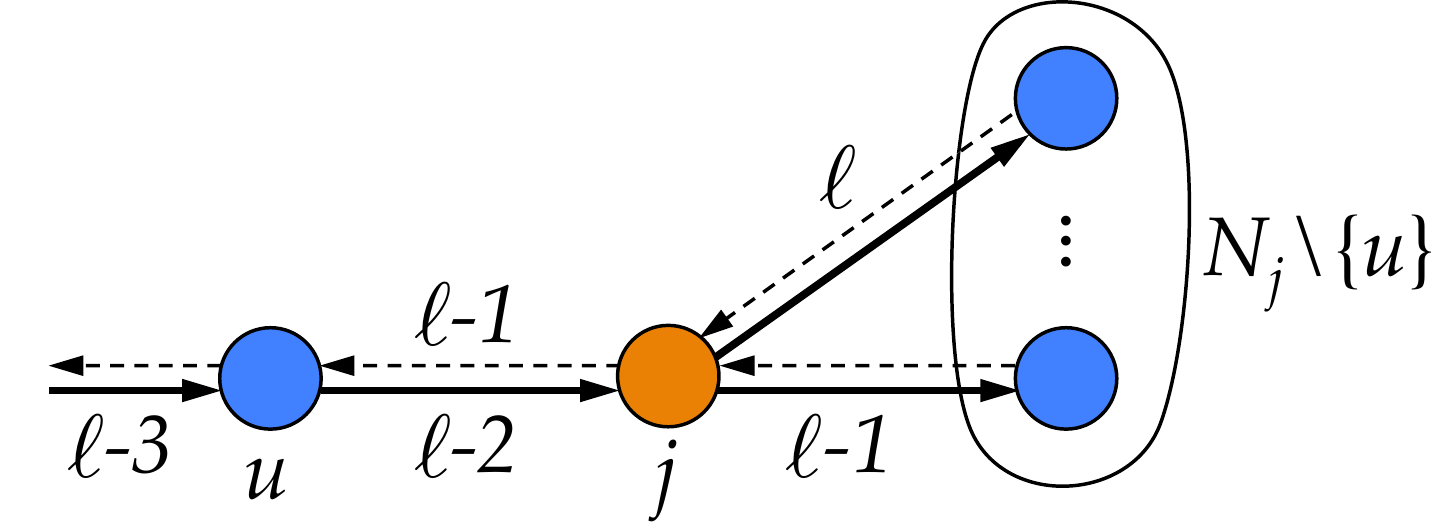}
	\caption{}	
	\label{Fig_NonBacktrackingMatrix_b}
\end{subfigure}
\caption{Illustrations for proof of \cref{prop:nonbacktracking}.}
\label{Fig_NonBacktrackingMatrix}
\end{figure}

\begin{proof}
	We prove by induction on $\ell$, the length of the paths.
	First notice that $\WVecEC^{(1)} = \WVec$, because there is no back-tracking path of length 1.
	Also, $\WVecEC^{(2)} = \WVec^2 - \vec D$, because $\WVec^2$ includes the only feasible back-tracking paths of length 2 on its diagonal $\W_{ii} = d_i$ where $d_i$ is the degree of node $i$.
	
	Next assume the recurrence relations holds for paths of length up to $\ell-1$. 
	Then the number of non-backtracking paths of length $\ell$ that start at node $i$ and arrive at node $j$ consists of two parts:
	
	(1) The number of paths that 
	arrive at node $j$ at step $\ell$ 
	and that have not backtracked 
	previously
	are
	$\sum_{u \in N_j} \WEC_{iu}^{\ell-1}$. 
	Here $N_j$ stands for the neighbors of $j$ 
	(see \cref{Fig_NonBacktrackingMatrix_a}). 
	
	(2) Among those, we need to subtract the number of walks that backtrack at the $\ell$-th step. 
	Those paths need to pass through $j$ at step $\ell-2$.
	Consider such a path that passes through $u \in N_j$ and arrives at $j$ at step $\ell-2$ 
	(see \cref{Fig_NonBacktrackingMatrix_b}). 
	At step $\ell-1$ 
	there are $d_j-1$ paths that continue from node $j$ to neighbors (because the walk that would backtrack to $u$ is forbidden) and then come back at step $\ell$.
	Thus we need to subtract $\WEC_{ij}^{(\ell-2)} (d_j-1)$ such paths. 
	This gives us
$\WEC_{ij}^{(\ell)} = \sum_{u \in N_j} \WEC_{iu}^{(\ell-1)} - \WEC_{ij}^{(\ell-2)} (d_j-1)$,
which leads to our recurrence relation \cref{eq:recurrence} in matrix notation.	
\end{proof}

\subsection{Proof \cref{prop:complexitystatistics}}

\begin{proof}
	Correctness follows from \cref{prop:nonbacktracking}
	and $\vec N_{\NB}^{(\ell)} = \WVec_{\NB}^{(\ell)} \EVec$.

	The complexity is derived as follows: 
	(1) Assume $\vec W$ is a sparse $n \times n$ matrix containing $2m$ entries, 
	and $\vec N$ is a dense $n \times k$ matrix.
	Then calculating $\vec W \vec N$ is $\O(mk)$ and each iteration for $\vec N_{\NB}^{(\ell)}$ takes $\O(mk)$.
	(2) Since $\EVec$ has $fn$ entries (recall that $f$ is the fraction of labeled nodes, and that $\EVec$ has maximal one entry per row),
	calculating $\vec M_{\NB}^{(\ell)}$ takes $\O(fnk)$.
	(3) $\PVecEstEC^{(\ell)}$ takes $\O(k^2)$.
	(4) Since $m > n > k$, the dominating term is $\O(mk)$.
	(5) Finally, all these calculations are performed $\ell_{\max}$ times, which leads to $\O(mk \ell_{\max})$
\end{proof}

\subsection{Proof \cref{prop:gradient}}

\begin{proof}
A challenge in deriving the gradient results the doubly stochastic constraints that we need to obey.
We do this by using the chain rule~\cite{MatrixCookbook12}:
\begin{align*}
	\frac{\partial}{\partial \hVec}E(\HVec) 
		= \tr\big( \big( \frac{\partial E(\HVec)}{\partial \HVec} \big)^\transpose
	  	\frac{\partial \HVec}{\partial \hVec} \big)
\end{align*}
and then dealing with each term in turn. 
To slightly simplify the notation, we substitute below
$\vec Z \define \PVecEstEC^{(\ell)}$.
First, 
$\frac{\partial E(\HVec)}{\partial \HVec}$:
\begin{align*}
	\frac{\partial}{\partial \HVec}	
	|| \HVec^\ell - \vec Z ||^2
	&= \frac{\partial}{\partial \HVec}
	\tr\big( 	(\HVec^\ell - \vec Z) 
				(\HVec^\ell - \vec Z)^\transpose \big)	\\
	&= 2 \ell \HVec^{2 \ell-1} 
	- 2 \sum_{r=0}^{\ell-1} \HVec^r \vec Z \HVec^{\ell-r-1}
\end{align*}
which leads to 
the first term
with $\vec G \define \frac{\partial E(\HVec)}{\partial \HVec}$.

To calculate the second term, we need to observe that $\HVec$ has significant structure. 
We thus define a ``structure matrix''
$\vec S^{ij} \define
\frac{\partial \vec H}{\partial H_{ij}}$
which encodes the 
relative dependencies of entries on the independent parameters.
If $\vec H$ is symmetric then straight-forward calculation~\cite{MatrixCookbook12} leads to
$\vec S^{ij} = \vec J^{ij} + \vec J^{ji}$ 
if $i \neq j$,
and 
$\vec J^{ij}$
if $i = j$.
The calculation for our symmetric, doubly stochastic matrix is more intricate as it
needs to use our parameterization in \cref{eq:H_parameterization}.
Several transformations lead to 
the formulation in \cref{prop:gradient}
with all other entries $\vec S^{ij}$ being 0 for $i=k$ or $j=k$.
The gradient of our free parameters then corresponds to the entries with index $i \leq j, j\neq k$ of the product
$\vec S \vec G$.
\end{proof}

\section{Examples moved to appendix}

\subsection{Illustration for propagating frequency distributions (\cref{sec:propFrequency})}

We illustrate next that -- while the final label distribution may be identical --  one version may converge while the other one may not.
Thus {convergence} of the relative values in the update equations vs.\ convergence of the label assignment are two
{separate} issues. 
In \cref{sec:LCE} we show how interpreting ``compatibility propagation'' as ``propagating frequency distributions'' 
helps us understand the nature of the update equations and gives a simple and intuitive interpretation to our approach to compatibility estimation.

\begin{figure}[t]
\centering
\begin{subfigure}[b]{.62\linewidth}
	\centering
	\includegraphics[scale=0.38]{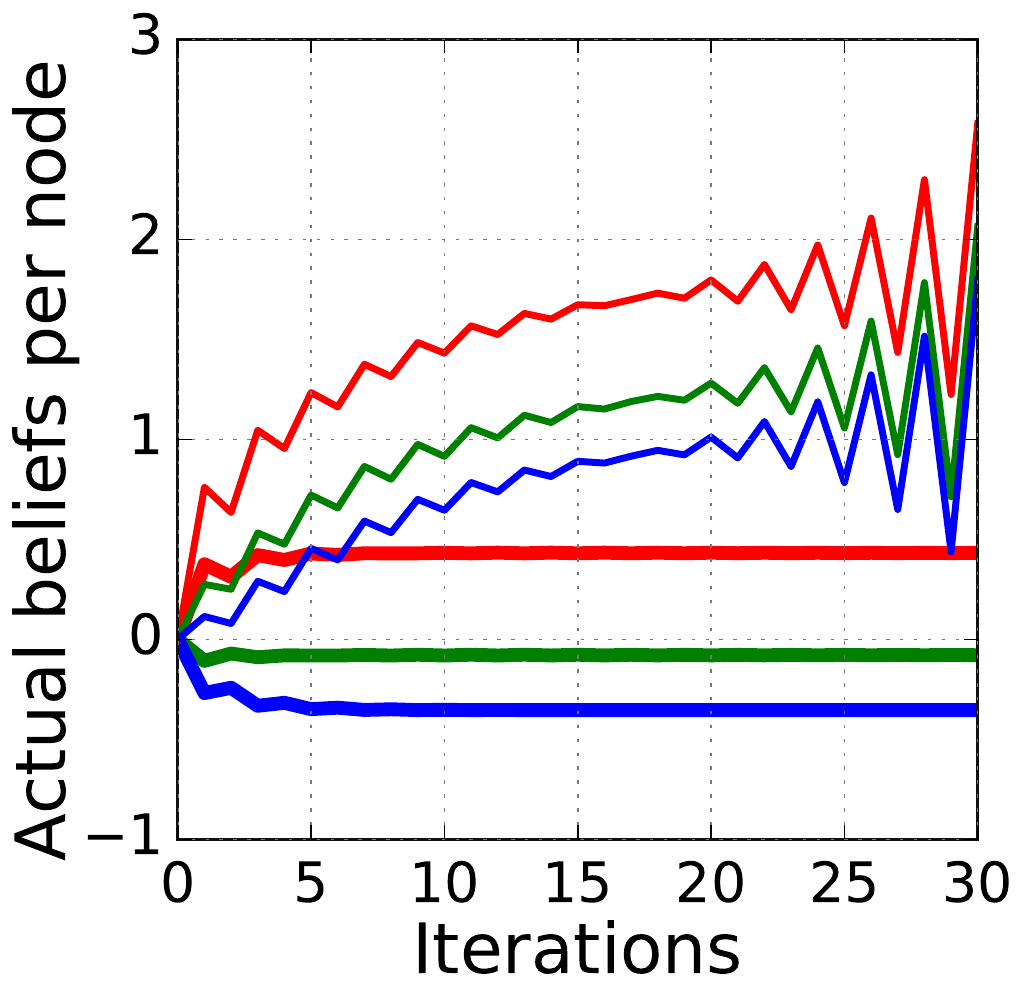}
\end{subfigure}
\caption{\Cref{ex:frequencyNonConvergence} gives an example where the values from LinBP converge when using $\HCenterVec$ (3 bold  lines),
but diverge when using $\HVec$ (3 thin lines) for one node in the graph. 
Nevertheless, in both cases, the final labels are \emph{identical} (the red labels gets highest weight in each iteration).
}
\label{Fig_WithoutCentering1}
\end{figure}

\begin{example}[Non-convergence]\label{ex:frequencyNonConvergence}
We use the following compatibility matrix
$\HVec = \left[\begin{smallmatrix}
	0.1 & \,0.8 & \,0.1  \\
	0.8 & \,0.1 & \,0.1  \\
	0.1 & \,0.1 & \,0.8  \\
\end{smallmatrix}\right]$
to label a partially labeled graph.
The spectral radius of $\rho(\HVec)$ is 1, 
whereas the spectral radius of its centered version is 
$\rho(\HCenterVec) = 0.7$.
We use a scaling factor $\epsilon$ so that the convergence parameter is $s=0.95$ for the centered version. 
This $\epsilon$ translates into $s\approx 1.18$ for the uncentered version.
\Cref{Fig_WithoutCentering1} shows the belief vector $\bVec$ 
(each color represents one label)
for one particular node, for using LinBP either with 
$\HCenterVec$ (bold colored lines), or with $\HVec$ (thin colored lines). 
While the uncentered version does not converge, 
\emph{at each iteration} the top beliefs are identical.
\end{example}

\subsection{Illustration for MCE (\cref{sec:MCE})}

\Cref{Fig_TwoSteps_MHE} illustrates the idea behind Myopic Compatibility Estimation:
we first (1) summarize the partially labeled graph into a small summary, and then (2) use this summary to perform the optimization.

\begin{figure}[t]
\centering
\includegraphics[scale=0.45]{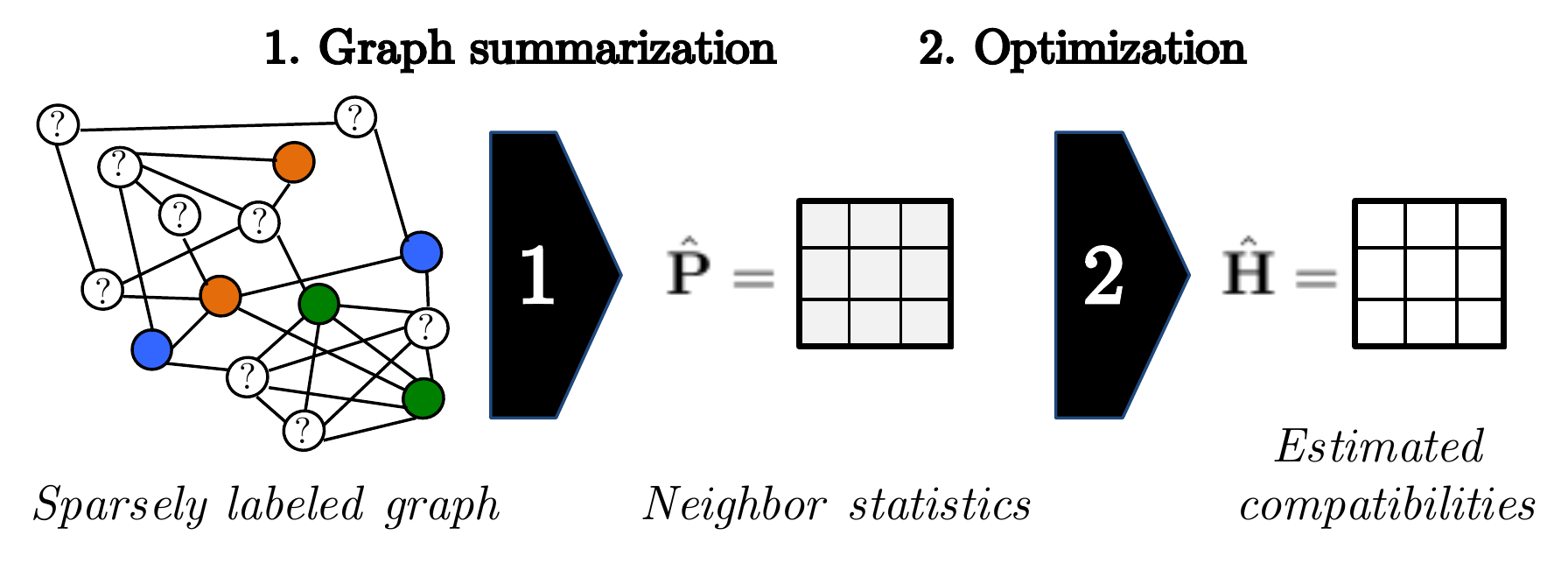}
\caption{\Cref{sec:MCE}: Myopic Compatibility Estimation (MCE)
first summarizes the graph into a normalized neighbor statistics matrix $\PVecEst$, and
then find the closest compatibility matrix $\HVecEst$.
}
\label{Fig_TwoSteps_MHE}
\end{figure}

\section{Additional related work}

To emphasize the differences with more recent work that leverages deep neural networks, such as \cite{DBLP:conf/iclr/KipfW17,Hamilton:2017:IRL:3294771.3294869,DBLP:conf/aaai/MooreN17},
we analyze additional problem dimensions:
(1) whether a work uses only the graph or uses additional information;
(2) whether it uses a linear model or a NN. 
The latter make it more difficult to understand the reason for assigning any particular label to a node due to the many 
(often thousands) of trained parameters.
(3) whether a paper leverages only neighbors of nodes or longer paths (some NN methods do learn on several random walks and thus include a local neighborhood of given nodes);
and (4) whether it can be trained on a sparsely labeled network or requires either a larger amount of labeled nodes 
within the same network, or another network with labeled nodes all together (e.g. for paper that use transfer learning 
and thus learn \emph{between} different networks).

We are not aware of any method that can labels nodes, based only the information in a graph with very sparsely labeled nodes 
(e.g. 0.1\% as in our experiments) and \emph{no additional information}.

\section{Additional experiments}

We bring here additional experiments that did not have space in the main paper.
In particular, we 
(1) illustrate the advantage our approach against prior work that suggested to use heuristics instead  (\cref{sec:heuristics});
(2) give experimental results comparing the estimated matrices against the true neighbor frequency distributions from a fully labeled graph; and
(3) investigate and try to explain how our method can have better label accuracy in certain scenarios than the actual neighbor frequency distribution.

\subsection{Comparison against Heuristics}\label{sec:heuristics}

Prior work \cite{DBLP:conf/pkdd/KoutraKKCPF11,Eswaran:2017:ZBP:3055540.3055554, DBLP:journals/pvldb/GatterbauerGKF15}
suggests simple heuristics for guessing the entries of a compatibility matrix $\HVec$.
These heuristics are minor variants of the following general approach:
\begin{enumerate}
\item Assume the compatibility matrix $\HVec$ has two types of entries, a high and a low value. Call those $H$ and $L$, respectively.
\item Assume that we can correctly guess the position of those entries in $\HVec$ based on some domain knowledge.
Equivalently, assume that we can glance at the golden standard and correctly remember the positions for $H$ and $L$ in $\HVec$.
\item Assume the difference between them is some value $\epsilon$. 
Choose $\epsilon$ so that the update propagation is guaranteed to convergence.
\end{enumerate}

We tested the performance of this approach for two real data sets
and show in
\cref{Fig_End-to-End_accuracy_realData_406_movielens_accuracy_plot} that this 
heuristic has merit: 
\emph{if} we can guess the positions correctly, \emph{and if} indeed all values in $\HVec$ are approximately represented by only two values, then
the heuristics can
perform about as well as our method on. 
However,
we see in \cref{Fig_End-to-End_accuracy_realData_307_prop37_accuracy_plot} that
this is not generally the case: The labeling from the heuristic matrix is barely visible on the plot and
does no better than random assignment. Observe that the Prop-37 compatibility matrix
described in \cref{prop37_r} has a less discrete distribution of compatibilities
than the MovieLens compatibility matrix in \cref{movielens_r}. The heuristic
approach is unable to capture this information due to its binary
``High''/``Low'' estimation process, and thus performs poorly in labeling.

\begin{figure}[t]
    \centering

\begin{subfigure}[b]{0.95\linewidth}
\setlength{\tabcolsep}{1.0pt}
\scalebox{0.96}{
\hspace{4mm}
\begin{tabular}{c c c c c}
       \multicolumn{2}{c}{MovieLens} &
	   &
       \multicolumn{2}{c}{Prop-37}
\\
	$ \left[\begin{smallmatrix}
           0.08 & 0.45 & 0.47  \\
           0.45 & 0.02 & 0.53  \\
           0.47 & 0.53 & 0.0  \\
        \end{smallmatrix}\right]$
&
	$\left[\begin{smallmatrix}
           L&   H&   H \\
           H&   L&   H \\
           H&   H&  L \\
       \end{smallmatrix}\right]$
&
\phantom{xx}
&
       $\left[\begin{smallmatrix}
               0.35 & 0.26  & 0.38\\
               0.26 & 0.12  & 0.61\\
               0.38 & 0.61  & 0.0  \\
    \end{smallmatrix}\right]$
&
       $\left[\begin{smallmatrix}
               H & L  & H\\
               L & L  & H\\
               H & H  & L  \\
    \end{smallmatrix}\right]$	
\end{tabular}
}
\caption{Heuristic approximation of $\HVec$ by two values $H$ and $L$.}
\end{subfigure}

\begin{subfigure}[b]{.48\linewidth}
	\centering
	\includegraphics[scale=0.38]{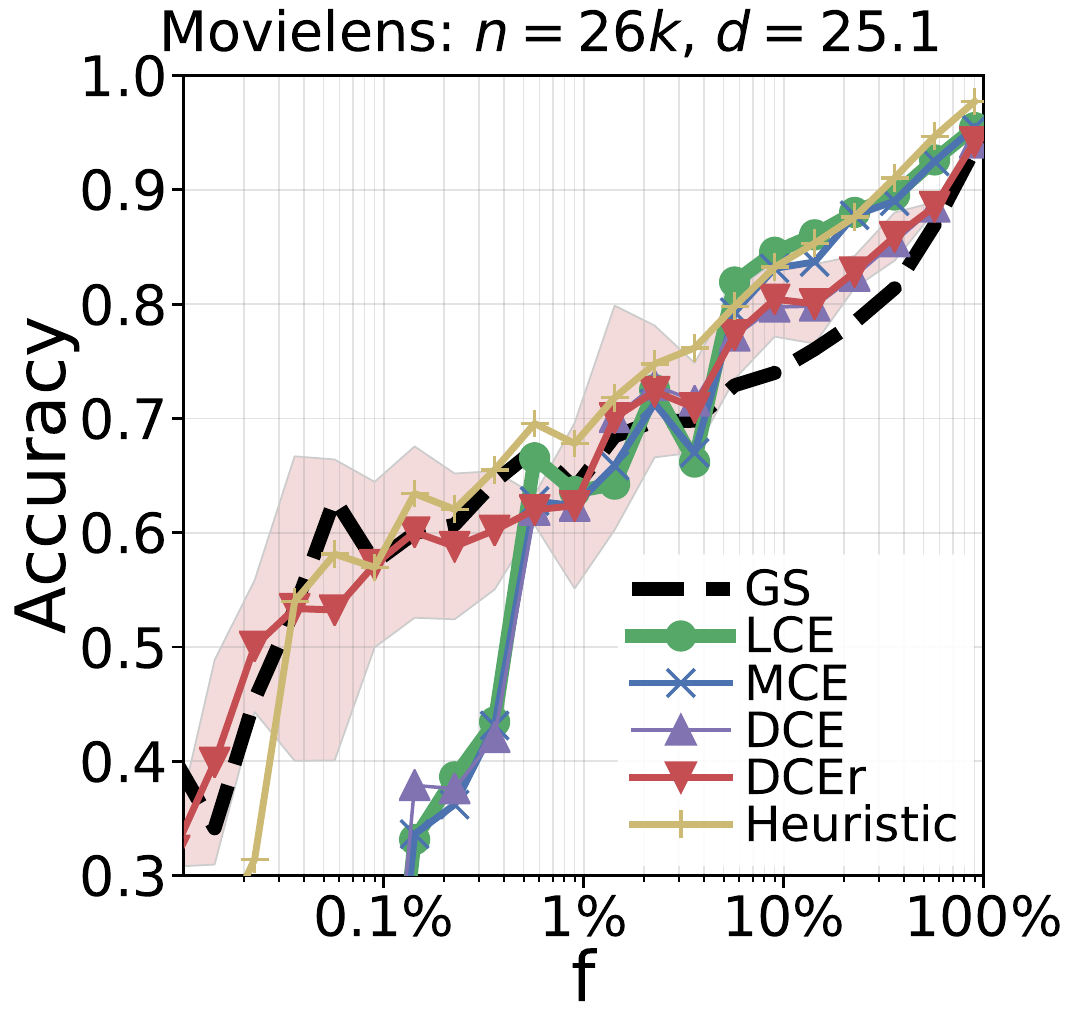}
	\caption{MovieLens}
	\label{Fig_End-to-End_accuracy_realData_406_movielens_accuracy_plot}
\end{subfigure}
\hspace{1mm}
\begin{subfigure}[b]{.48\linewidth}
    \includegraphics[scale=0.38]{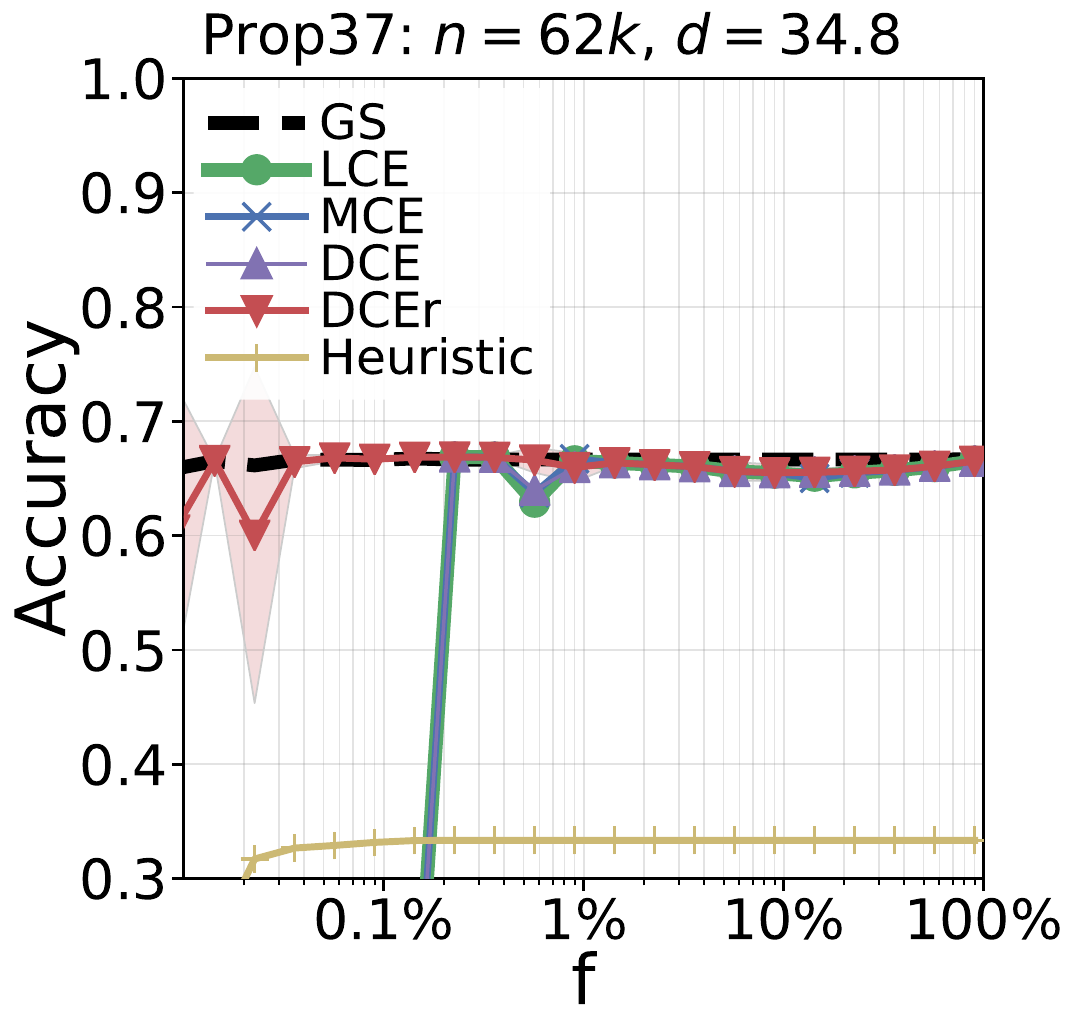}
    \caption{Prop-37}
	\label{Fig_End-to-End_accuracy_realData_307_prop37_accuracy_plot}
\end{subfigure}

\caption{Performance of heuristics 
that approximates the compatibility matrix $\HVec$ with just two values, $H$ and $L$,
on two real data sets. 
}
\end{figure}

\subsection{Learned matrices vs. neighbor frequency distributions}
\label{sec:L2_distance_analyze}

We show the numerical values of the compatibility matrices for our 8 real data sets in 
\cref{tbl:describe_realdata}.
This is in addition to the lower row of \cref{fig:realdata} which illustrated the relative values in a visual form.

\cref{fig:dataset_l2} plots the L2 distance between the learned compatibility matrix and neighbor frequency distribution $\HVec$, referred to as Gold-standard in previous sections. 
The x-axis denotes zero distance from $\HVec$ and we can see that, baring \cref{hep-th_L2} and \cref{pokec_L2}, DCEr is the closest estimate of $\HVec$. 
In cases where DCEr does not have the minimum L2 distance, we believe there to be local minimas in our optimization function \cref{eq:DCE_energyNB}, which still is able to accurately propagate labels to the unlabeled nodes.
We further studied this performance difference and present our findings in the next section. 

\begin{figure}[t]
\begin{subfigure}[b]{.95\linewidth}
\setlength{\tabcolsep}{0.0pt}
\scalebox{0.96}{
\begin{tabular}{c c c c}
       MovieLens &
       Flickr  &
       Enron &
       Prop-37 
\\
	$ \left[\begin{smallmatrix}
           0.08 & 0.45 & 0.47  \\
           0.45 & 0.02 & 0.53  \\
           0.47 & 0.53 & 0.0  \\
        \end{smallmatrix}\right]$
&
	$\left[\begin{smallmatrix}
           0.17&   0.32&   0.51 \\
           0.32&   0.19&   0.49 \\
           0.51&   0.49&  0.0 \\
       \end{smallmatrix}\right]$
&
       $\left[\begin{smallmatrix}
       0.62&    0.24 &   0.0  &   0.14 \\
       0.24 &   0.06 &   0.55&   0.16 \\
       0.0  &    0.55 &   0.0  &   0.45 \\
       0.14 &   0.16 &   0.45 &  0.25 \\
        \end{smallmatrix}\right]$
&
       $\left[\begin{smallmatrix}
               0.35 & 0.26  & 0.38\\
               0.26 & 0.12  & 0.61\\
               0.38 & 0.61  & 0.0  \\
    \end{smallmatrix}\right]$
\end{tabular}
}
\end{subfigure}

\vspace{2mm}
\begin{subfigure}[b]{.95\linewidth}
	\setlength{\tabcolsep}{0.0pt}
   \scalebox{0.96}{
       \begin{tabular}{c c}

       Cora &
       Citeseer \\
       $ \left[\begin{smallmatrix}
		0.81 & 0.01 & 0.04 & 0.05 & 0.06 & 0.01 & 0.02\\
		0.01 & 0.79 & 0.02 & 0.02 & 0.09 & 0.01 & 0.07\\
		0.04 & 0.02 & 0.81 & 0.02 & 0.03 & 0.05 & 0.04\\
		0.05 & 0.02 & 0.02 & 0.84 & 0.05 & 0. & 0.02\\
		0.06 & 0.09 & 0.03 & 0.05 & 0.7 & 0.01 & 0.06\\
		0.01 & 0.01 & 0.05 & 0. & 0.01 & 0.9 & 0.02\\
		0.02 & 0.07 & 0.04 & 0.02 & 0.06 & 0.02 & 0.78\\
        \end{smallmatrix}\right]$
      &
       $\left[\begin{smallmatrix}
		0.77 & 0. & 0.01 & 0.13 & 0.05 & 0.03 \\
		0. & 0.75 & 0.06 & 0.06 & 0.03 & 0.1 &  \\
		0.01 & 0.06 & 0.77 & 0.1 & 0.03 & 0.03 \\
		0.13 & 0.06 & 0.1 & 0.48 & 0.06 & 0.17 \\
		0.05 & 0.03 & 0.03 & 0.06 & 0.81 & 0.02 \\
		0.03 & 0.1 & 0.03 & 0.17 & 0.02 & 0.64 \\
		     \end{smallmatrix}\right]$
       \end{tabular}
   }
\end{subfigure}

\vspace{2mm}
\begin{subfigure}[b]{.95\linewidth}
	\setlength{\tabcolsep}{0.0pt}
   \scalebox{0.96}{
       \begin{tabular}{c c}

       Pokec-Gender &
       Hep-Th \\
       $ \left[\begin{smallmatrix}
		0.44 & 0.56 & \\
		0.56 & 0.44 & \\
        \end{smallmatrix}\right]$
      &
       $\left[\begin{smallmatrix}
		0.1 & 0.11 & 0.14 & 0.11 & 0.11 & 0.08 & 0.08 & 0.08 & 0.04 & 0.08 & 0.08 & \\
		0.11 & 0.09 & 0.12 & 0.12 & 0.1 & 0.08 & 0.09 & 0.09 & 0.05 & 0.06 & 0.09 & \\
		0.14 & 0.12 & 0.11 & 0.13 & 0.11 & 0.1 & 0.09 & 0.06 & 0.03 & 0.03 & 0.06 & \\
		0.11 & 0.12 & 0.13 & 0.15 & 0.12 & 0.1 & 0.08 & 0.06 & 0.03 & 0.04 & 0.06 & \\
		0.11 & 0.1 & 0.11 & 0.12 & 0.17 & 0.13 & 0.08 & 0.07 & 0.03 & 0.02 & 0.05 & \\
		0.08 & 0.08 & 0.1 & 0.1 & 0.13 & 0.18 & 0.12 & 0.08 & 0.04 & 0.03 & 0.06 & \\
		0.08 & 0.09 & 0.09 & 0.08 & 0.08 & 0.12 & 0.17 & 0.13 & 0.07 & 0.03 & 0.06 & \\
		0.08 & 0.09 & 0.06 & 0.06 & 0.07 & 0.08 & 0.13 & 0.16 & 0.14 & 0.08 & 0.07 & \\
		0.04 & 0.05 & 0.03 & 0.03 & 0.03 & 0.04 & 0.07 & 0.14 & 0.28 & 0.17 & 0.11 & \\
		0.08 & 0.06 & 0.03 & 0.04 & 0.02 & 0.03 & 0.03 & 0.08 & 0.17 & 0.26 & 0.2 & \\
		0.08 & 0.09 & 0.06 & 0.06 & 0.05 & 0.06 & 0.06 & 0.07 & 0.11 & 0.2 & 0.16 & \\
		     \end{smallmatrix}\right]$
       \end{tabular}
   }
\end{subfigure}

\caption{Compatibilities for 8 real world data sets.}
\label{tbl:describe_realdata}
\end{figure}

\begin{figure*}[t]
    \centering
\begin{subfigure}[b]{.24\linewidth}
	\includegraphics[scale=0.38]{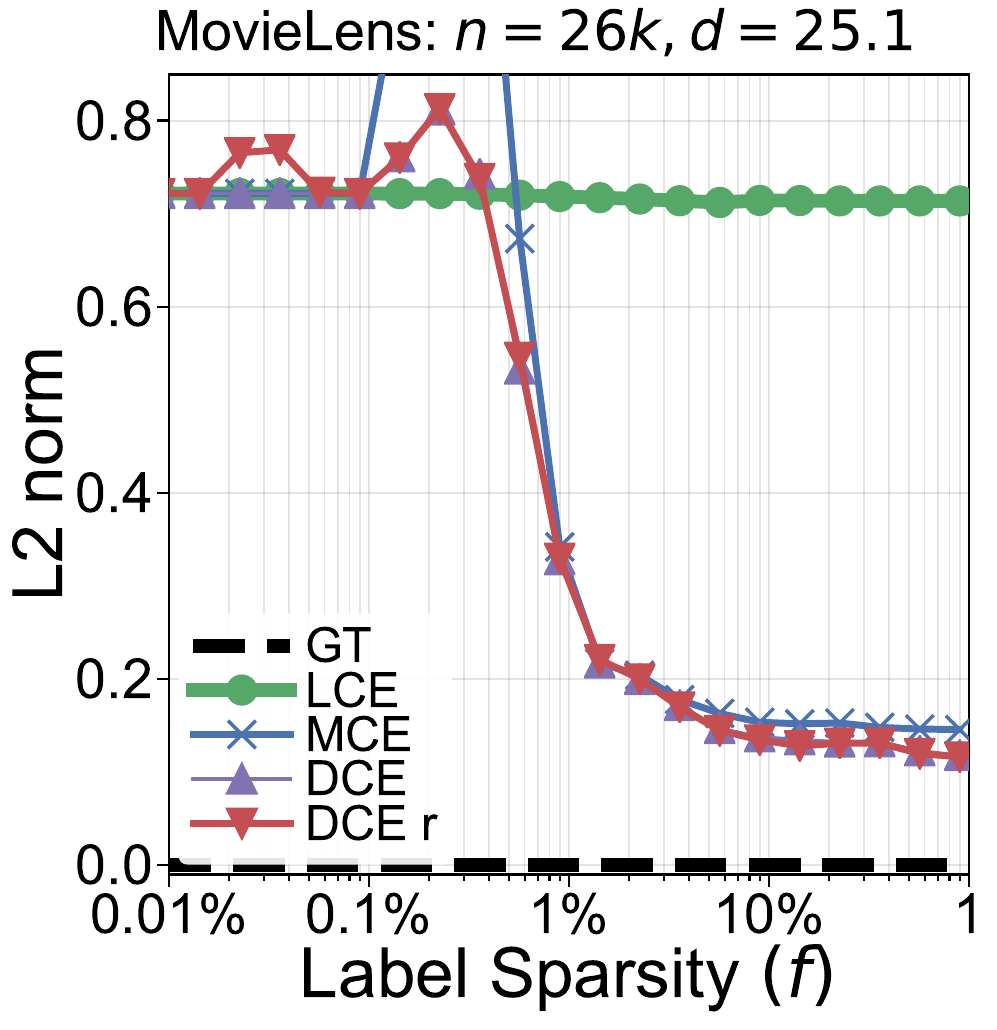}
	\caption{MovieLens}
	\label{movielens_L2}
\end{subfigure}
\begin{subfigure}[b]{.24\linewidth}
	\includegraphics[scale=0.38]{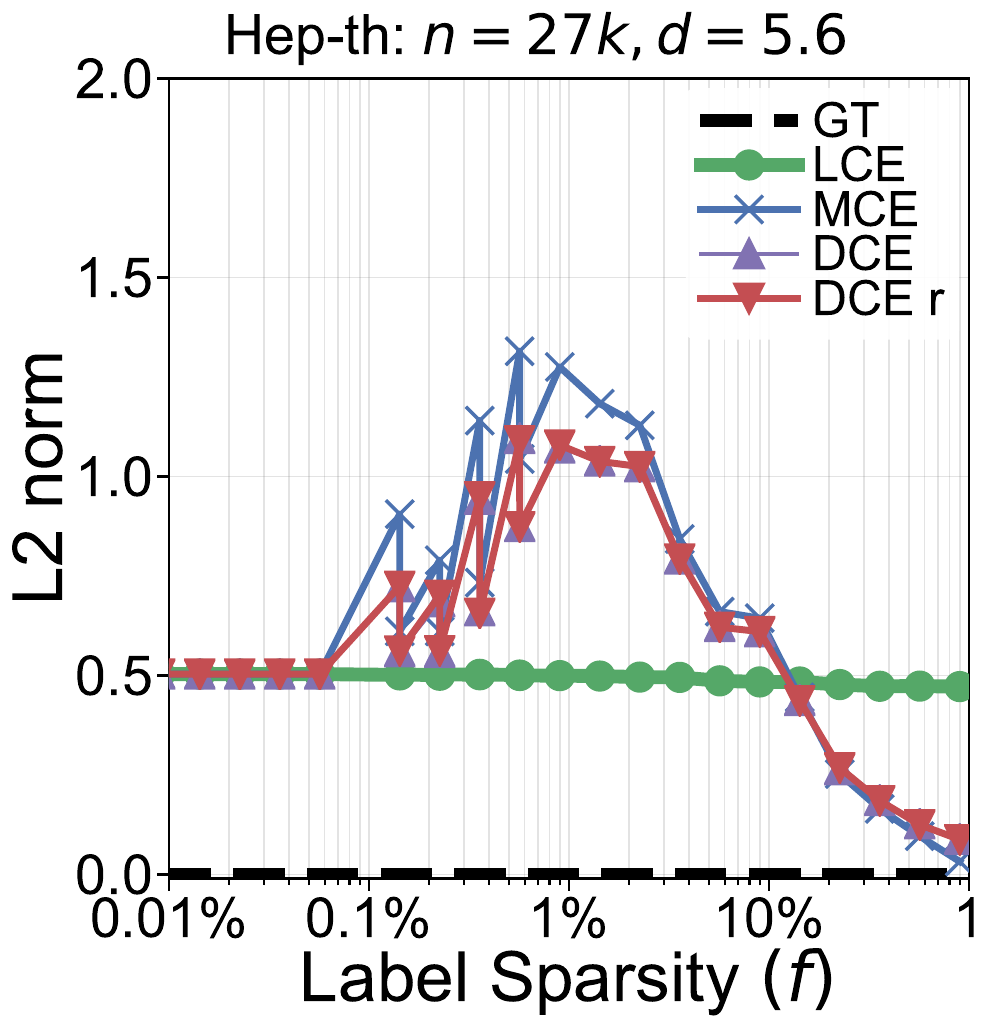}
	\caption{Hep-Th}
	\label{hep-th_L2}
\end{subfigure}
\begin{subfigure}[b]{.24\linewidth}
	\includegraphics[scale=0.38]{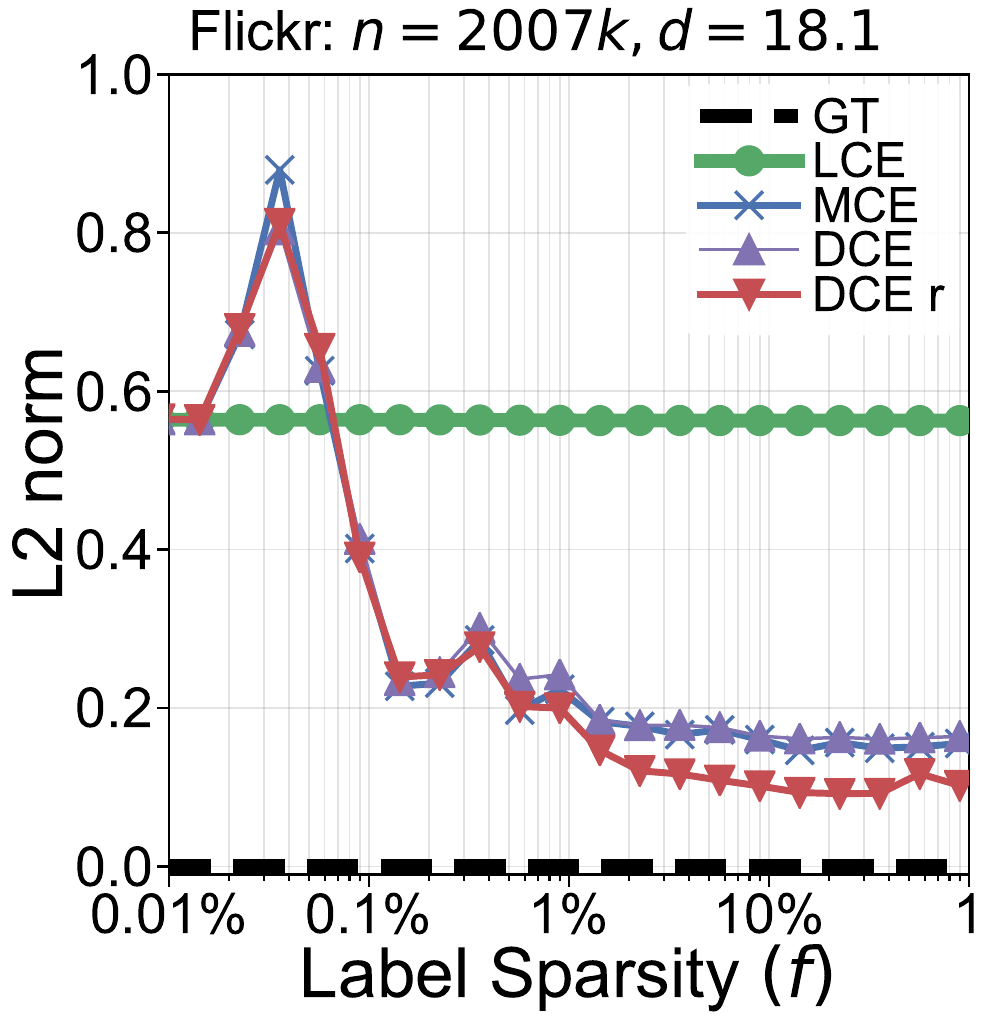}
	\caption{Flickr}
	\label{flickr_L2}
\end{subfigure}
\begin{subfigure}[b]{.24\linewidth}
	\includegraphics[scale=0.38]{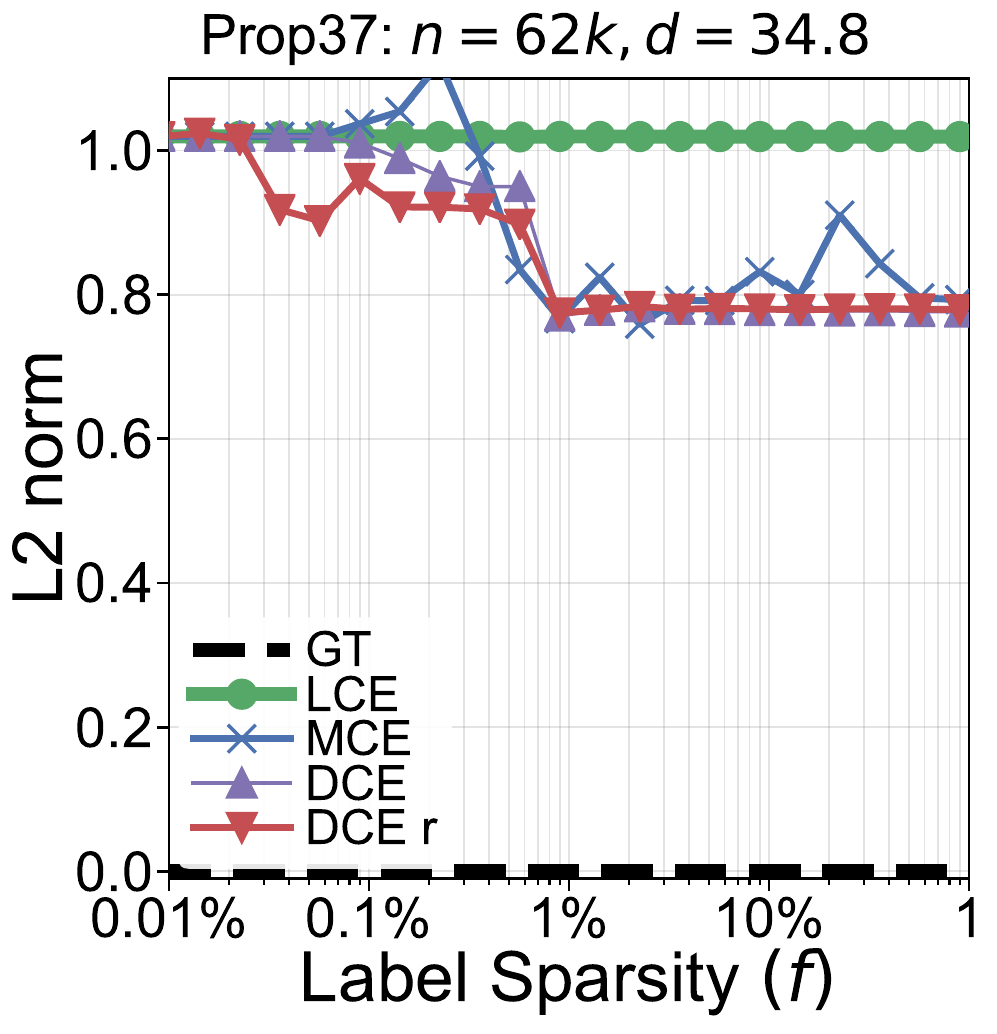}
	\caption{Prop37}
	\label{prop37_L2}
\end{subfigure}

\begin{subfigure}[b]{.24\linewidth}
	\includegraphics[scale=0.38]{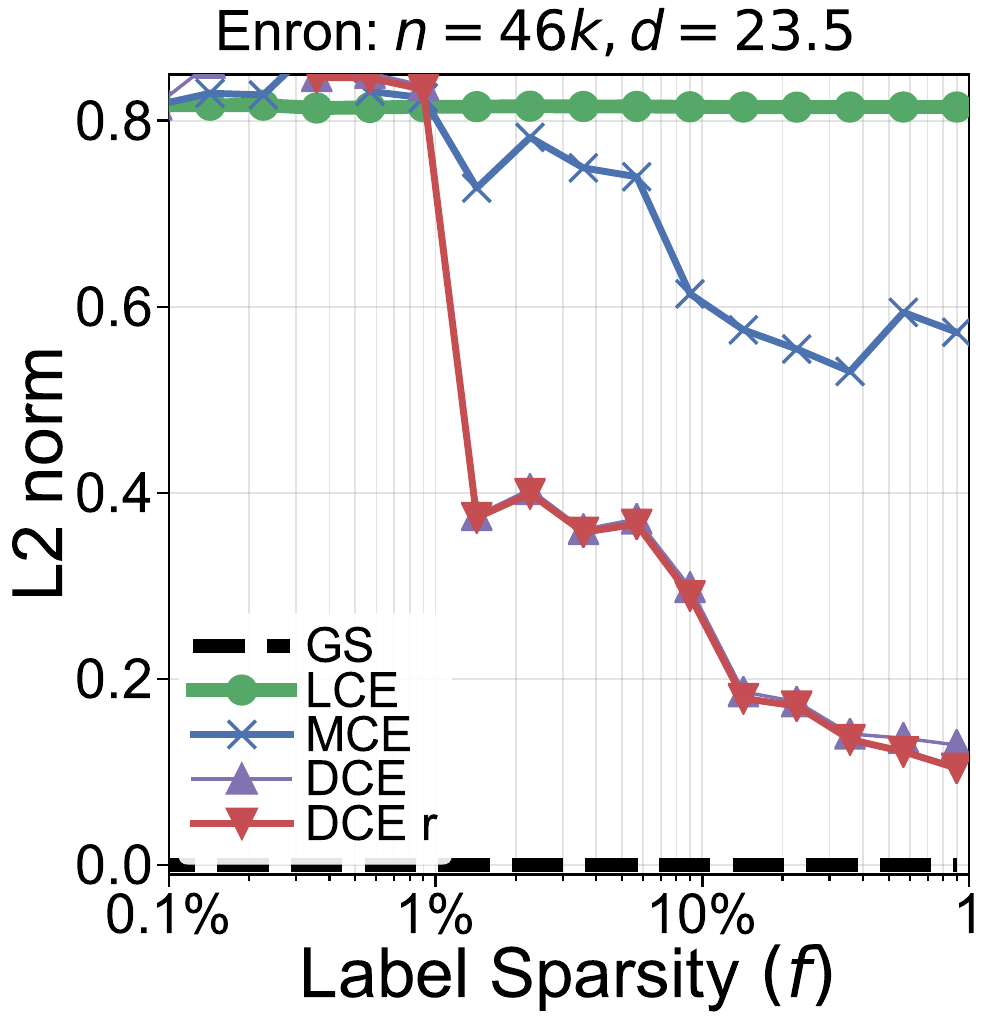}
	\caption{Enron}
	\label{enron_L2}
\end{subfigure}
\begin{subfigure}[b]{.24\linewidth}
	\includegraphics[scale=0.38]{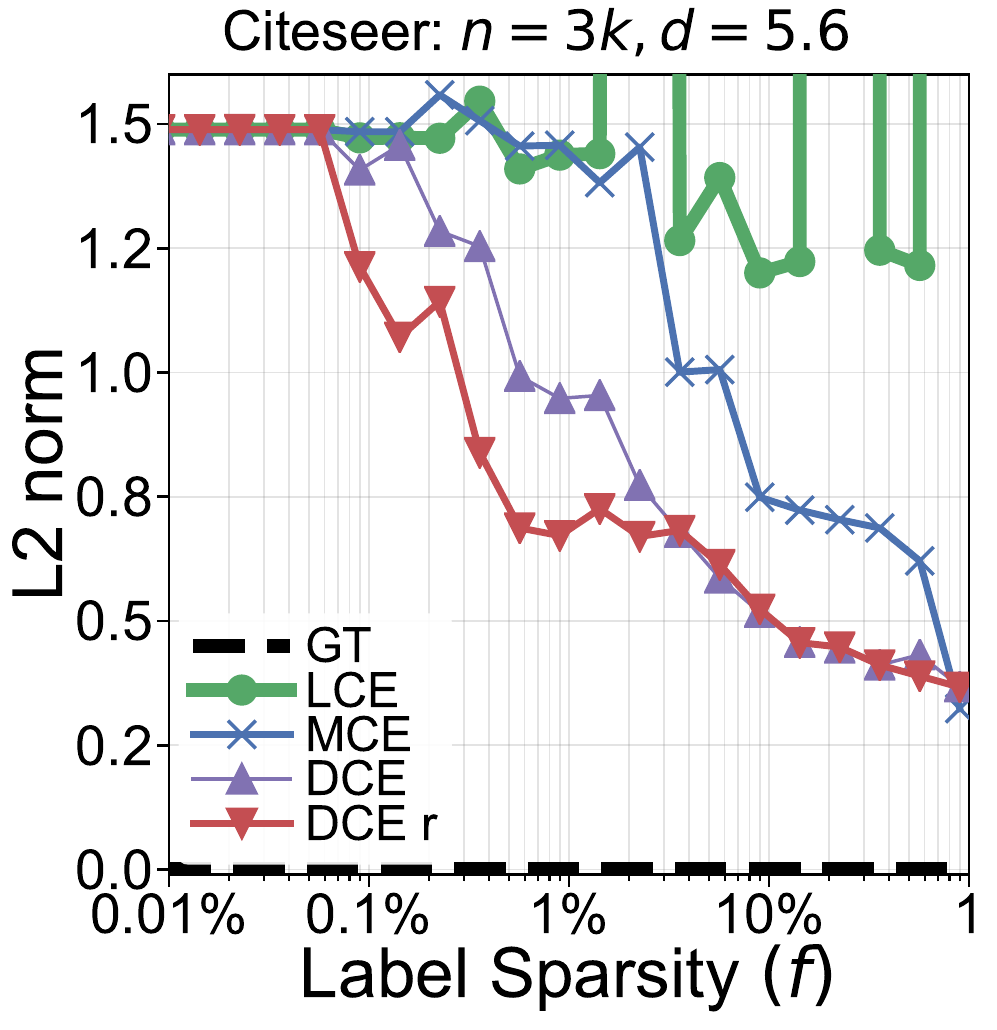}
	\caption{Citeseer}
	\label{citeseer_L2}
\end{subfigure}
\begin{subfigure}[b]{.24\linewidth}
	\includegraphics[scale=0.38]{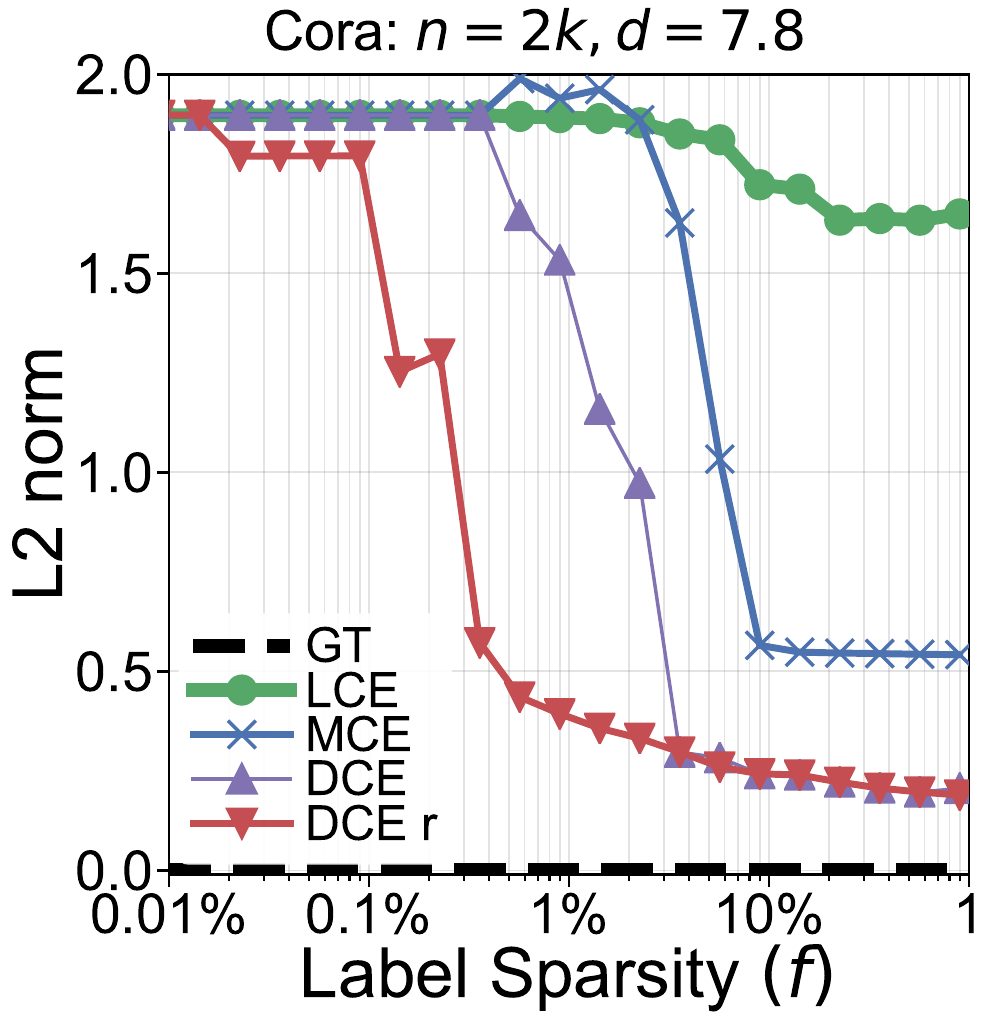}
	\caption{Cora}
	\label{cora_L2}
\end{subfigure}
\begin{subfigure}[b]{.24\linewidth}
	\includegraphics[scale=0.38]{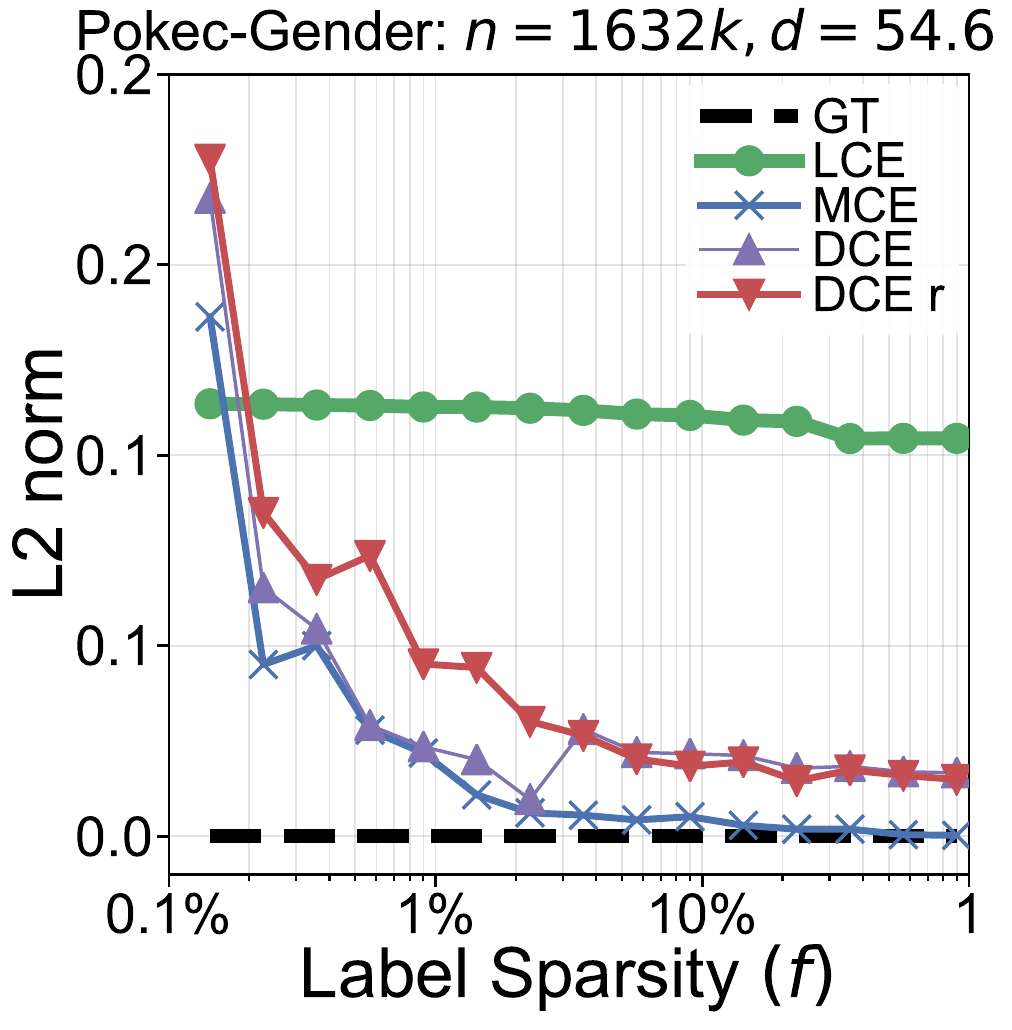}
	\caption{Pokec-Gender}
	\label{pokec_L2}
\end{subfigure}

\caption{L2 distance of estimated matrix from neighbor frequency distribution  $\HVec$ .}\label{fig:dataset_l2}
\end{figure*}

\subsection{Investigating accuracies better than actual label frequencies}

We observe that the neighbor frequency distribution matrix (referred to as gold standard) is not optimal for propagating labels for all levels of sparsity in the graph.
We have re-verified for the correctness and accuracy of results reported, and the 
While $\HVec$ contains the immediate neighbor frequency distribution, we hypothesize that there exist other propagation matrices that are some linear or sophisticated combination of one-hop neighbors information plus higher order frequency distribution.

For example, in Movielens, at around $f=0.35$, the estimated compatibility matrix using DCEr is 
$\left[\begin{smallmatrix}
0.10  & 0.41 &  0.47\\
0.41 &  0.05 &  0.52\\
0.47 &  0.52 &  0.    \\
    \end{smallmatrix}\right]$
, which is little different from neighbor frequency distribution 
$\left[\begin{smallmatrix}
0.07  &  0.45  &  0.47\\
0.45   & 0.01  &  0.52\\
0.47  &  0.52  &  0.    \\
    \end{smallmatrix}\right]$.  
While the magnitude of elements is proportional is both matrices, it remains a very interesting part of our future to fully understand when and why DCEr can estimate better matrices than neighbor frequencies.

\begin{figure}
\begin{subfigure}[b]{0.95\linewidth}
\setlength{\tabcolsep}{1.0pt}
\scalebox{0.96}{
\hspace{4mm}
\begin{tabular}{c c c c c}
       \multicolumn{2}{c}{MovieLens} &
	   &
       \multicolumn{2}{c}{Prop-37}
\\
	$ \left[\begin{smallmatrix}
           0.08 & 0.45 & 0.47  \\
           0.45 & 0.02 & 0.53  \\
           0.47 & 0.53 & 0.0  \\
        \end{smallmatrix}\right]$
&
	$\left[\begin{smallmatrix}
           L&   H&   H \\
           H&   L&   H \\
           H&   H&  L \\
       \end{smallmatrix}\right]$
&
\phantom{xx}
&
       $\left[\begin{smallmatrix}
               0.35 & 0.26  & 0.38\\
               0.26 & 0.12  & 0.61\\
               0.38 & 0.61  & 0.0  \\
    \end{smallmatrix}\right]$
&
       $\left[\begin{smallmatrix}
               H & L  & H\\
               L & L  & H\\
               H & H  & L  \\
    \end{smallmatrix}\right]$	
\end{tabular}
}
\caption{Heuristic approximation of $\HVec$ by two values $H$ and $L$.}
\end{subfigure}

\end{figure}

\end{document}